\def\eqref#1{equation~\ref{#1}}
\def\floor#1{\lfloor #1 \rfloor}
\def\1{\bm{1}}
\DeclareMathAlphabet{\mathsfit}{\encodingdefault}{\sfdefault}{m}{sl}
\SetMathAlphabet{\mathsfit}{bold}{\encodingdefault}{\sfdefault}{bx}{n}
\newcommand{\E}{\mathbb{E}}
\newcommand{\R}{\mathbb{R}}
\newcommand{\hgamma}{\mathsf{h}_{\gamma}}
\newcommand{\hgam}[1]{\mathsf{h}_{#1}}
\newcommand{\thetap}{\theta_{\mathrm{+}}}
\newcommand{\scal}[2]{\left\langle{#1},{#2}\right\rangle}
\newcommand{\LL}{\mathcal{L}}
\renewcommand{\eqref}[1]{\textrm{Eq}.(\ref{#1})}
\newcommand{\ggamma}{\mathsf{g}_{\gamma}}
\newcommand{\ggam}[1]{\mathsf{g}_{#1}}
\newcommand{\Ex}[1]{{\mathbb E}\left[#1\right]}
\newcommand{\invl}{\mathrm{I}}
\newcommand{\itvl}[1]{\mathrm{I}_{#1}}
\def\R{\mathbb{R}}
\def\N{\mathbb{N}}
\newcommand{\updicml}[1]{{#1}}
\newcommand{\rmv}[1]{}
\newcommand{\upd}[1]{{#1}}
\newcommand{\eqals}[1]{\begin{align}#1\end{align}}
\newcommand{\myparagraph}{\textbf}
\theoremstyle{plain}
\newtheorem{theorem}{Theorem}[section]
\newtheorem{proposition}[theorem]{Proposition}
\newtheorem{lemma}[theorem]{Lemma}
\theoremstyle{definition}
\theoremstyle{remark}
\newtheorem{property}[theorem]{Property}
\newenvironment{customproposition}[1]{\innercustomproposition}{\endinnercustomproposition}
\begin{document}

\twocolumn[

\icmltitle{SGD with Large Step Sizes Learns Sparse Features}

\icmlsetsymbol{equal}{*}

\begin{icmlauthorlist}
\icmlauthor{Maksym Andriushchenko}{epfl}
\icmlauthor{Aditya Varre}{epfl}
\icmlauthor{Loucas Pillaud-Vivien}{epfl}
\icmlauthor{Nicolas Flammarion}{epfl}
\end{icmlauthorlist}

\icmlaffiliation{epfl}{EPFL}

\icmlcorrespondingauthor{Maksym Andriushchenko}{maksym.andriushchenko@epfl.ch}

\icmlkeywords{Machine Learning, ICML}

\vskip 0.3in
]

\printAffiliationsAndNotice{}  %

\begin{abstract}
We showcase important features of the dynamics of the Stochastic Gradient Descent (SGD) in the training of neural networks. We present empirical observations that commonly used large step sizes (i) may lead the iterates to jump from one side of a valley to the other causing \textit{loss stabilization}, and (ii) this stabilization induces a hidden stochastic dynamics that \textit{biases it implicitly} toward \textit{sparse} predictors. Furthermore, we show empirically that the longer large step sizes keep SGD high in the loss landscape valleys, the better the implicit regularization can operate and find sparse representations. Notably, no explicit regularization is used: the regularization effect comes solely from the SGD dynamics influenced by the large step sizes schedule. Therefore, these observations unveil how, through the step size schedules, both gradient and noise drive together the SGD dynamics through the loss landscape of neural networks. We justify these findings theoretically through the study of simple neural network models as well as qualitative arguments inspired from stochastic processes. This analysis allows us to shed new light on some common practices and observed phenomena when training deep networks. The code of our experiments is available at \url{https://github.com/tml-epfl/sgd-sparse-features}.
\end{abstract}
\vspace{-5mm}
\section{Introduction}  %

Deep neural networks have accomplished remarkable achievements on a wide variety of tasks.
Yet, the understanding of their remarkable effectiveness remains incomplete. 
From an optimization perspective, stochastic training procedures challenge many insights drawn from convex models. 
{Notably,} large step size schedules used in practice %
lead to unexpected patterns of stabilizations and sudden drops in the training loss \citep{he2016deep}.
From a generalization perspective, overparametrized deep nets generalize well while fitting perfectly the data and without any explicit regularizers~\citep{zhang2016understanding}. 
This suggests that optimization and generalization are tightly intertwined: neural networks find solutions that generalize well \textit{thanks} to the optimization procedure used to train them.
This property, known as \textit{implicit bias} or \textit{algorithmic regularization}, has been studied  both for regression~\citep{li2018algorithmic,woodworth2020kernel} and classification~\citep{soudry2018implicit,Lyu2020Gradient,chizat2020implicit}. 
However, for these theoretical results, \upd{it is also shown that} typical timescales needed to enter the beneficial feature learning regimes are prohibitively long \upd{\citep{woodworth2020kernel, moroshko2020implicit}.}

In this paper, \upd{we aim at staying closer to the experimental practice and consider the SGD schedules from the ResNet paper~\citep{he2016deep}} where the \textit{large step size} is first kept constant and then decayed, potentially multiple times. 
We illustrate this behavior in Fig.~\ref{fig:fig1_deep_nets_with_wd} where we reproduce a minimal setting without data augmentation or momentum, and with only one step size decrease. 
We draw attention to two key observations regarding the large step size phase: (a) quickly after the start of training, the loss remains approximately constant \upd{on average} and (b) despite no progress on the training loss, running this phase for longer leads to better generalization.
We refer to such large step size phase as \textit{loss stabilization}. \upd{The better generalization hints at some \textit{hidden dynamics} in the parameter space not captured by the loss curves in Fig.~\ref{fig:fig1_deep_nets_with_wd}.}
Our main contribution is to unveil the hidden dynamics behind this phase: loss stabilization helps to amplify the noise of SGD that drives the network towards a solution with \textit{sparser features}, meaning that for a feature vector $\psi(x)$, only a few unique features are active for a given input $x$.

\begin{figure*}[t]
    \centering
    \includegraphics[width=0.4\textwidth]{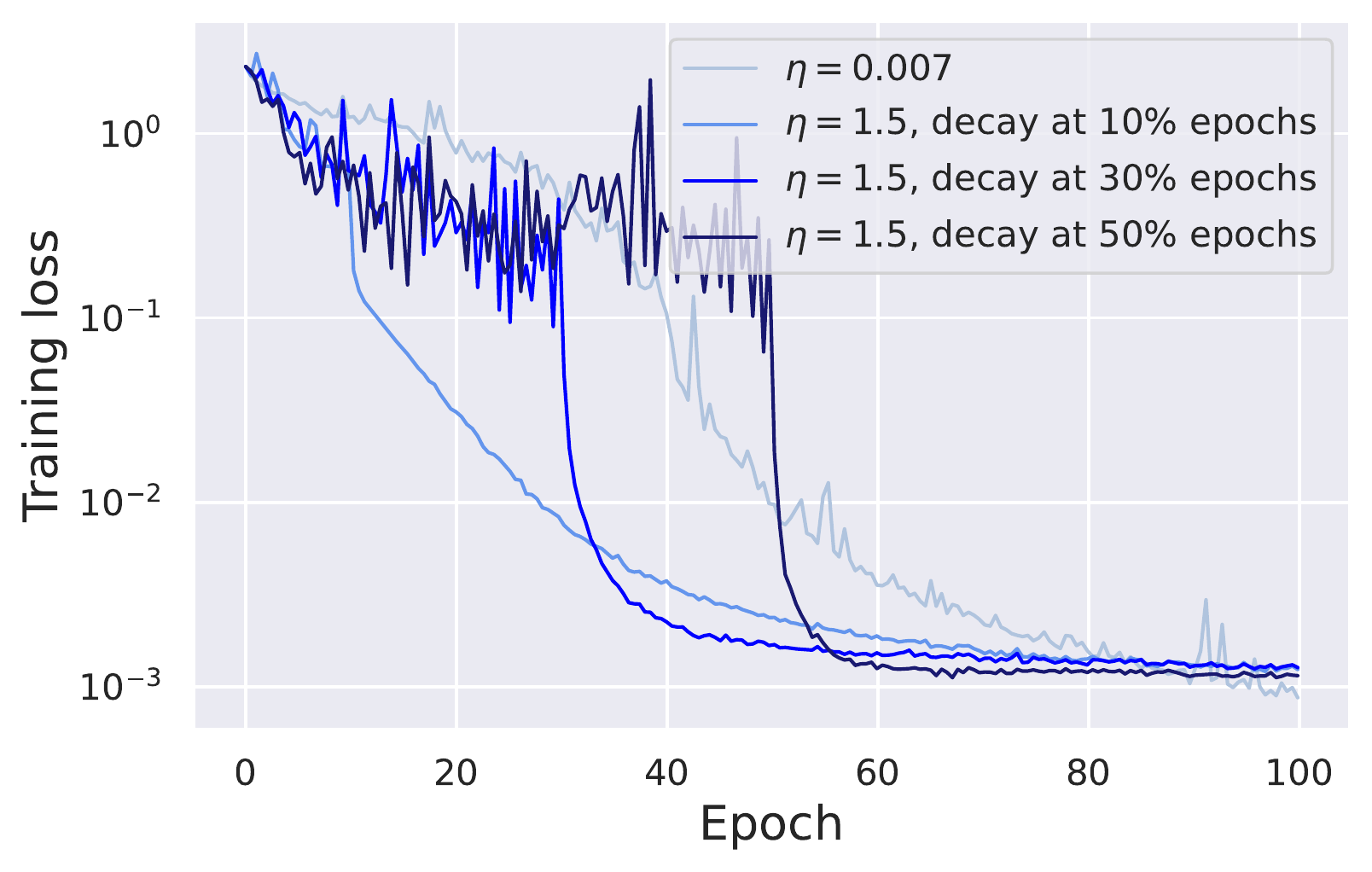}
    \quad \quad
    \includegraphics[width=0.4\textwidth]{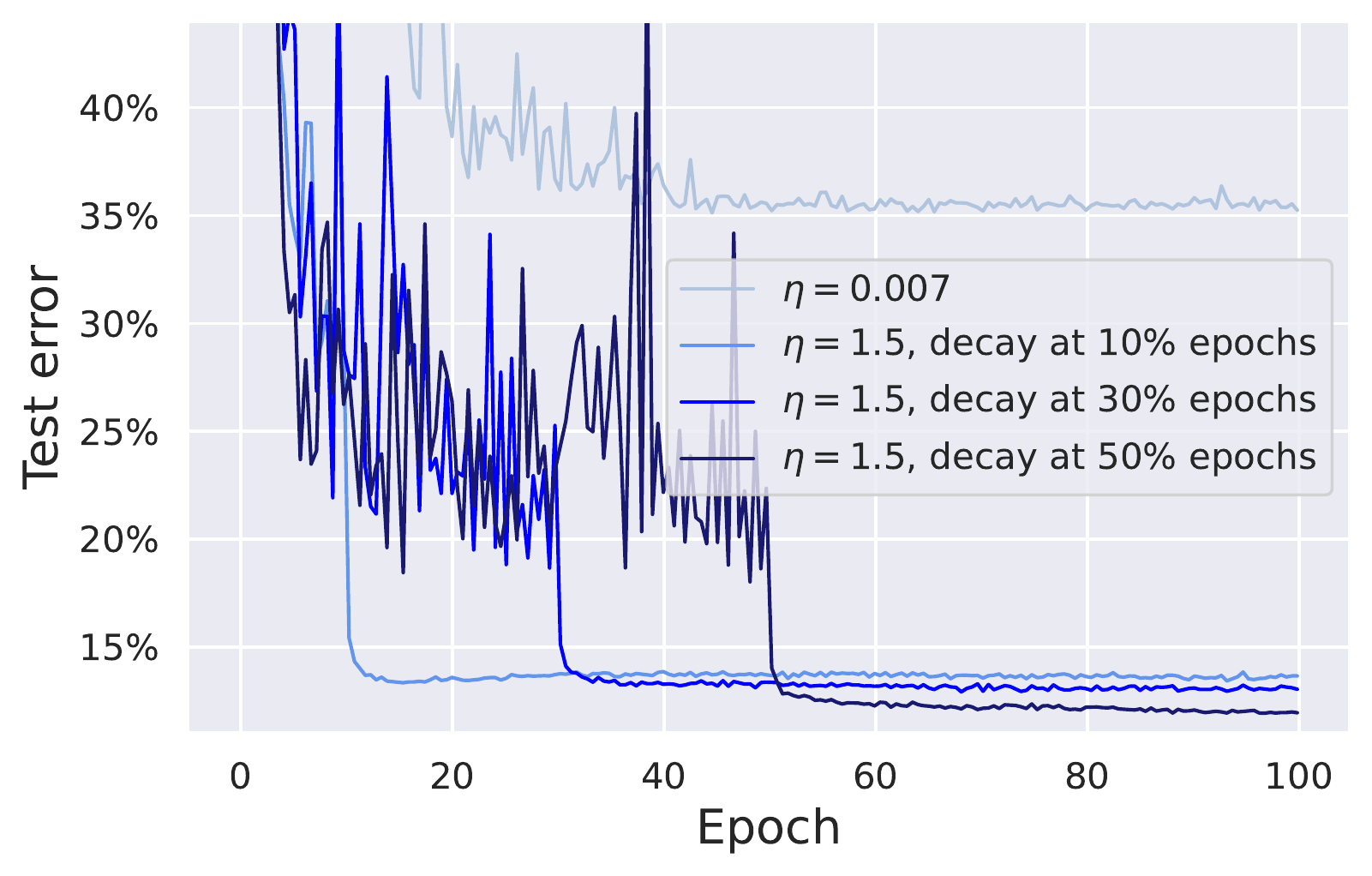}
    \vspace{-4mm}
    \caption{\textbf{A typical training dynamics for a ResNet-18 trained on CIFAR-10.} We use weight decay but no momentum or data augmentation for this experiment. We see a substantial difference in generalization (as large as 12\% vs. 35\% test error) depending on the step size $\eta$ and its schedule. When the training loss stabilizes, there is a hidden progress occurring which we aim to characterize.} %
    \vspace{-3mm}
    \label{fig:fig1_deep_nets_with_wd}
\end{figure*}

\subsection{Our Contributions}

\myparagraph{The effective dynamics behind loss stabilization.} We characterize two main components of the SGD dynamics with large step sizes: (i) a fast movement determined by the bouncing directions causing loss stabilization, (ii) a slow dynamics driven by the combination of the gradient and 
 the multiplicative noise---which is non-vanishing due to the loss stabilization.

\upd{\myparagraph{SDE model and sparse feature learning.} We model the \textit{effective} slow dynamics during loss stabilization by a stochastic differential equation (SDE) whose multiplicative noise is related to the neural tangent kernel features, and validate this modeling experimentally. Building on the existing theory on diagonal linear networks, which shows that this noise structure leads to sparse predictors, we conjecture a similar ``sparsifying'' effect on the features of more complex architectures. We experimentally confirm this on neural networks of increasing complexity.} 
 
\upd{\myparagraph{Insights from our understanding.}  We draw a clear general picture: the hidden optimization dynamics induced by large step sizes and loss stabilization enable the transition to a sparse feature learning regime. We argue that after a short initial phase of training, SGD \textit{first} identifies sparse features of the training data and eventually fits the data when the step size is decreased. Finally, we discuss informally how many deep learning regularization methods (weight decay, BatchNorm, SAM) may also fit into the same picture.}

\subsection{Related Work}

\citet{he2016deep} popularized the piece-wise constant step size schedule which often exhibits a clear loss stabilization pattern which was later characterized theoretically in \citet{li2020reconciling} from the optimization point of view. %
However, the regularization effect of this phase induced by the underlying \textit{hidden stochastic dynamics} is still unclear. 
\citet{li2019towards} analyzed the role of loss stabilization for a synthetic distribution containing different patterns, but it is not clear how this analysis can be extended to general problems. 
\updicml{\citet{jastrzebski21a} suggest that large step sizes prevent the increase of local curvature during the early phase of training. However, they do not provide an explanation for this phenomenon.} 

The importance of large step sizes for generalization has been investigated with diverse motivations. Many works conjectured that large step sizes induce minimization of some complexity measures related to the flatness of minima \citep{keskar2016large, smith2017bayesian, smith2021origin, yang2022stochastic}. Notably, \citet{xing2018walk} point out that SGD moves through the loss landscape bouncing between the walls of a valley where the role of large step sizes is to guide the SGD iterates towards a flatter minimum. 
\updicml{ However, the correct flatness measure is often disputed~\citep{dinh2017sharp} and its role in understanding generalization is questionable since full-batch GD with large step sizes (unlike SGD) can lead to flat solutions which don't generalize well \citep{kaur2022maximum}}

The attempts to explain the effect of large step size on strongly convex models \citep{nakkiran2020learning,wu2021direction,beugnot2022benefits} are inherently incomplete since it is a phenomenon related to the existence of many zero solutions with very different generalization properties. 
Works based on stability analysis characterize the properties of the minimum that SGD or GD can potentially converge depending on the step size \citep{wu2018sgd,mulayoff2021implicit,ma2021sobolev,nacson2022implicit}. However, these approaches \textit{do not capture the entire training dynamics} \upd{such as the large step size phase that we consider where SGD converges only after the step size is decayed.}

\updicml{To grasp the generalization of SGD, research has focused on SGD augmented with label noise due to its  beneficial regularization properties and resemblance to the standard noise of SGD.}
\rmv{To understand the generalization of SGD, SGD \updicml{doped} with label noise has been studied because of its beneficial regularization effect and its resemblance to SGD's standard noise.}Its implicit bias has been first characterized by \citet{blanc2020implicit} and extended by \citet{li2021happens}. However, their analysis only holds in the final phase of the training, close to a zero-loss manifold. Our work instead is closer in spirit to \citet{pillaud2022label} where the label noise dynamics is analyzed in the \textit{central} phase of the training, \upd{i.e., when the loss is still substantially above zero.} 

\updicml{The dynamics of GD with large step sizes have received a lot of attention in recent times, particularly the edge-of-stability phenomenon \cite{cohen2021gradient} and the catapult mechanism \citep{lewkowycz2020large, wang2022large}. However, the lack of stochastic noise in their analysis renders them incapable of capturing stochastic training.} %
Note that it is possible to bridge the gap between GD and SGD by using explicit regularization as in \citet{geiping2021stochastic}. We instead focus on the \textit{implicit} regularization of SGD which remains the most practical approach for training deep nets.

Finally, sparse features and low-rank structures in deep networks have been commonly used for model compression, knowledge distillation, and lottery ticket hypothesis \citep{denton2014exploiting, hinton2015distilling, frankle2018lottery}. A common theme of all these works is the presence of \textit{hidden structure} in the networks learned by SGD which allows one to come up with a much smaller network that approximates well the original one. In particular, \citet{hoefler2021sparsity} note that ReLU activations in deep networks trained with SGD are typically much sparser than 50\%. Our findings suggest that the step size schedule can be the key component behind emergence of such sparsity. %

\section{The Effective Dynamics of SGD with Large Step Size: Sparse Feature Learning}  %
In this section, we show that large step sizes may lead the loss to stabilize by making SGD bounce above a valley. We then unveil the effective dynamics induced by this loss stabilization. To clarify our exposition we showcase our results for the mean square error but other losses like the cross-entropy carry \upd{the same key properties in terms of the noise covariance \citep[Lemma 2.14]{wojtowytsch2021stochastic_discrete}}. We consider a generic parameterized family of prediction functions $\mathcal{H}:=\{x \to h_\theta (x),\  \theta \in \R^p\}$, a setting which encompasses neural networks. %
In this case, the training loss on input/output samples $(x_i, y_i)_{1 \leq i \leq n} \in \R^d \times \R$ is equal to
\begin{align}
\label{eq:general_loss}
    \LL(\theta):= \frac{1}{2n} \sum_{i = 1}^n \left(h_\theta(x_i) - y_i\right)^2.
\end{align}
We consider the overparameterized setting, i.e. $p \gg n$, hence, there shall exists many parameters $\theta^*$ that lead to zero loss, i.e., perfectly interpolate the dataset. Therefore, the question of which interpolator the algorithm converges to is of paramount importance in terms of generalization.
We focus on the SGD recursion with step size $\eta > 0$, initialized at $\theta_0 \in \R^p$: for all $t \in \N$,
\begin{align}
\label{eq:general_SGD}
    \theta_{t+1} = \theta_{t} - \eta (h_{\theta_t}(x_{i_t}) - y_{i_t}) \nabla_\theta h_{\theta_t}(x_{i_t}),
\end{align}
where $i_t \sim \mathcal{U}\left(\llbracket 1,n \rrbracket\right)$ is the uniform distribution over the sample indices. In the following, note that SGD with mini batches of size \upd{$B > 1$} would lead to similar analysis \upd{but with $\eta/B$ instead of $\eta$}.

\subsection{\upd{Background:} SGD is GD with Specific Label Noise}
\label{subsec:effective_dynamics}

To emphasize the combined roles of gradient and noise, we highlight the connection between the SGD dynamics and that of full-batch GD plus a specific label noise. \upd{Such manner of reformulating the dynamics has already been used in previous works attempting to understand the specificity of the SGD noise~\citep{haochen2020shape, ziyin2021strength}. We formalize it in the following proposition.}
\begin{proposition}
\label{prop:SGD_GD_LN}
Let $(\theta_t)_{t \geq 0}$ follow the SGD dynamics \eqref{eq:general_SGD} with the random sampling function $(i_t)_{t \geq 0}$. For $t \geq 0$, define the random vector $\xi_t \in \R^n$ such that
\begin{align}
\label{eq:xi}
[\xi_{t}]_i:= ( h_{\theta_t} (x_i) - y_i) (1 - n \mathbf{1}_{i = i_t}),
\end{align}
for $i \in \llbracket 1,n \rrbracket$ and where $\mathbf{1}_{A}$ is the indicator of the event $A$.
\upd{Then $(\theta_t)_{t \geq 0}$ follows the full-batch gradient dynamics on $\LL$ with label noise $(\xi_{t})_{t \geq 0}$, that is
\begin{align}
\label{eq:explicit_label_noise}
\theta_{t+1} = \theta_t -\frac{\eta}{n} \sum_{i=1}^n (h_{\theta_t}(x_i) - y^t_{i}) \nabla_\theta h_{\theta_t}(x_i),
\end{align}
where we define the random labels $y^t := y + \xi_{t}$. Furthermore, $\xi_t$ is a mean zero random vector with variance such that $\frac{1}{n(n-1)}\E \left\| \xi_t \right\|^2= 2 \LL(\theta_t)$.}
\end{proposition}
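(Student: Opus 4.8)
The plan is to handle the three assertions in turn. Each one reduces to a short direct computation once I condition on the current iterate $\theta_t$, so I do not expect a genuine obstacle beyond carefully tracking what is random and what is frozen by the conditioning.

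First I would establish the dynamical equivalence \eqref{eq:explicit_label_noise}. Writing $r_i := h_{\theta_t}(x_i) - y_i$ for the residual and substituting $y^t_i = y_i + [\xi_t]_i$, the noisy residual collapses: $h_{\theta_t}(x_i) - y^t_i = r_i\bigl(1 - (1 - n\mathbf{1}_{i=i_t})\bigr) = n\,\mathbf{1}_{i=i_t}\, r_i$. Plugging this into the full-batch sum, the factor $n$ cancels the $1/n$ prefactor and the indicator kills every term except $i = i_t$, so the right-hand side of \eqref{eq:explicit_label_noise} reduces exactly to the single-sample update \eqref{eq:general_SGD}. This is a purely algebraic identity, valid for every realization of $i_t$, so it establishes the equivalence pathwise rather than only in distribution.

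Next, for the mean-zero claim, I would condition on $\theta_t$ (equivalently, on the $\sigma$-algebra generated by $i_0, \dots, i_{t-1}$), under which the residuals $r_i$ are deterministic while the fresh index $i_t$ is uniform on $\llbracket 1, n\rrbracket$. Since $\E[\mathbf{1}_{i=i_t}] = 1/n$, the conditional mean is $\E[\xi_t]_i = r_i\,(1 - n\cdot\tfrac1n) = 0$ for each $i$, and the tower property removes the conditioning.

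Finally, for the variance identity I would exploit the idempotency $\mathbf{1}_{i=i_t}^2 = \mathbf{1}_{i=i_t}$ to expand $(1 - n\mathbf{1}_{i=i_t})^2 = 1 + n(n-2)\,\mathbf{1}_{i=i_t}$, which gives $\|\xi_t\|^2 = \sum_{i=1}^n r_i^2 + n(n-2)\, r_{i_t}^2$. Taking the conditional expectation and using $\E[r_{i_t}^2] = \tfrac1n\sum_i r_i^2$ yields $\E\|\xi_t\|^2 = (n-1)\sum_{i=1}^n r_i^2$. The definition \eqref{eq:general_loss} gives $\sum_i r_i^2 = 2n\,\LL(\theta_t)$, so dividing by $n(n-1)$ produces $\frac{1}{n(n-1)}\E\|\xi_t\|^2 = 2\,\LL(\theta_t)$, as claimed. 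The only point needing care throughout is the conditioning: $\theta_t$ and hence the residuals $r_i$ are random through the sampling history but independent of $i_t$, which is precisely what makes the averages over $i_t$ clean, and this is the single place I would be careful to state explicitly.
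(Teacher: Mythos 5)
Your proposal is correct and follows essentially the same route as the paper's proof: the pathwise algebraic collapse $h_{\theta_t}(x_i) - y^t_i = n\,\mathbf{1}_{i=i_t}(h_{\theta_t}(x_i)-y_i)$ for \eqref{eq:explicit_label_noise}, and the expansion of $(1-n\mathbf{1}_{i=i_t})^2$ via idempotency with $\E[\mathbf{1}_{i=i_t}]=1/n$ for the mean and variance. Your explicit remark about conditioning on $\theta_t$ (so the residuals are frozen while $i_t$ is fresh) is a point the paper leaves implicit, but it does not change the argument.
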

\upd{This reformulation shows two crucial aspects of the SGD noise: (i) the noisy part at state $\theta$ always belongs to the linear space spanned by $\{\nabla_\theta h_{\theta}(x_1), \hdots, \nabla_\theta h_{\theta}(x_n)\}$, and (ii) it scales as the training loss. 
Going further on (ii), we highlight in the following section that the loss can stabilize because of large step sizes: this may lead to a \textit{constant} effective scale of label noise. These two features are of paramount importance when modelling the effective dynamics that take place during loss stabilization.}

\subsection{The Effective Dynamics Behind Loss Stabilization} 

\myparagraph{On loss stabilization.} For generic quadratic costs, e.g., $F(\beta) := \| X\beta - y \|^2$, %
gradient descent with step size $\eta$ is convergent for $\eta < 2/\lambda_{\max}$, divergent for $\eta > 2/\lambda_{\max}$ and converges to a bouncing $2$-periodic dynamics for $\eta = 2/\lambda_{\max}$, where $\lambda_{\max}$ is the largest eigenvalue of the Hessian.
However, the practitioner is not likely to hit perfectly this unstable step size and, almost surely, the dynamics shall either converge or diverge.
Yet, non-quadratic costs bring to this picture a particular complexity: it has been shown that, even for non-convex toy models, there exist an open interval of step sizes for which the gradient descent neither converge nor diverge \citep{ma2022multiscale,chen2022gradient}. 
As we are interested in SGD, we complement this result by presenting an example in which loss stabilization occurs almost surely \textit{in the case of stochastic updates}. 
Indeed, consider a regression problem with quadratic parameterization on one-dimensional data inputs $x_i$'s, coming from a distribution $\hat{\rho}$, and outputs generated by the linear model $y_i = x_i\theta_*^2$. The loss writes $F(\theta)  := \frac{1}{4}\mathbb{E}_{\hat{\rho}} \left(y - x\theta^2\right)^2$, and the SGD iterates with step size $\eta>0$ follow, for any $t \in \N$,
\begin{align}\label{eq:SGD}
  \theta_{t+1} &= \theta_{t} + \eta\, \theta_{t}\, x_{i_t} \left( y_{i_t} - x_{i_t}\theta_{t}^2\right) \text{ where } \quad x_{i_t} \sim \hat{\rho}.  
\end{align}
For the sake of clarity, suppose that  $\theta_* = 1$ and $\text{supp}(\hat{\rho}) = [a, b]$, we have the following proposition (a more general result is presented in Proposition~\ref{prop:stabilization-app} of the Appendix).

\begin{proposition}
\label{prop:stabilization}
  For any $\eta \in (a^{-2}, 1.25\cdot b^{-2})$ and initialization $\theta_0 \in (0, 1)$, for all $t > 0$,
  \begin{align}
       &\delta_1 < F(\theta_t) < \delta_2 ~ \text{almost surely, and} \label{eq:level_set} \\
      \hspace*{-.5cm}\exists T>0, \forall k > T,~ &\theta_{t + 2k} < 1 < \theta_{t + 2k+1}~\text{almost surely.} \label{eq:bounce}
  \end{align}
  where $\delta_1, \delta_2, T > 0$ are constant given in the Appendix.
\end{proposition}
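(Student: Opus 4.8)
The plan is to recognize that, despite appearances, \eqref{eq:SGD} is a one-dimensional random map whose algebra can be fully exploited. Substituting $\theta_*=1$ (so $y_i=x_i$) gives $\theta_{t+1}=g_{s_t}(\theta_t)$ with $g_s(\theta):=\theta\,(1+\eta s(1-\theta^2))$ and $s_t:=x_{i_t}^2\in[a^2,b^2]$, while the loss becomes $F(\theta)=\tfrac14\E_{\hat\rho}[x^2]\,(1-\theta^2)^2$; thus \eqref{eq:level_set} amounts to bounding $(1-\theta_t^2)^2$ away from $0$ and from above. The entire argument rests on the factorization $g_s(\theta)-1=(1-\theta)\,(\eta s\,\theta(\theta+1)-1)$, which exhibits the fixed points $0,\pm1$ and, for $\theta>0$, the sign of $g_s(\theta)-1$. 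I would write $\nu:=\eta a^2$ and $\mu:=\eta b^2$, so that the hypothesis $\eta\in(a^{-2},1.25\,b^{-2})$ reads exactly $1<\nu\le\mu<1.25$.

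\textbf{Amplitude bound and invariant band.} First I would control the amplitude. For $\theta\in(0,1)$ one has $g_s(\theta)>\theta$ and $g_s(\theta)\le g_{b^2}(\theta)\le\theta_+:=\max_{[0,1]}g_{b^2}$, and $\mu<1.25$ guarantees both $\theta_+<\sqrt2$ and that the factor $1+\eta s(1-\theta^2)$ stays positive on $[0,\theta_+]$; hence the orbit never leaves $(0,\theta_+]$, so $\theta_t^2\in(0,2)$ and $F(\theta_t)<\tfrac14\E[x^2]=:\delta_2$, giving the upper half of \eqref{eq:level_set}. Next I would construct a forward-invariant band $J=[\theta_-,\theta_+]$ with $\theta_-^*<\theta_-<1<\theta_+$, where $\theta_-^*=\tfrac12(-1+\sqrt{1+4/\nu})<1$ is the sub-unit root of $\nu\theta(\theta+1)=1$. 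Using the factorization, for $\theta\in[\theta_-,1)$ the choice $\theta_->\theta_-^*$ yields $\eta s\theta(\theta+1)-1>0$ for every $s\ge a^2$, so $g_s(\theta)>1$; and for $\theta\in(1,\theta_+]$ the factor $\eta s\theta(\theta+1)-1>2\nu-1>0$ forces $g_s(\theta)<1$. So every step flips the iterate across $1$, which is the content of \eqref{eq:bounce}. Verifying that the images stay inside $J$ (the downswing $g_{b^2}(\theta_+)\ge\theta_-$ and the upswing $\le\theta_+$) is a finite computation that holds precisely because $\mu<1.25$.

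\textbf{Reaching the band.} To pass from an arbitrary $\theta_0\in(0,1)$ into $J$, I would observe that while $\theta_t<\theta_-$ the factor satisfies $1+\eta s(1-\theta_t^2)\ge 1+\nu(1-\theta_-^2)>1$, so $\theta_t$ grows at least geometrically at a rate bounded away from $1$ \emph{uniformly over the noise}; hence $\theta_t$ exceeds $\theta_-$ after a finite, deterministic number of steps, and the invariance of $J$ then traps the orbit forever after. This provides the time $T$ in \eqref{eq:bounce}.

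\textbf{Separation from the interpolator (main difficulty).} The remaining and most delicate point is the lower bound $\delta_1>0$ in \eqref{eq:level_set}: the orbit must never approach the interpolator $\theta=1$. Here the \emph{lower} endpoint of the step-size window is essential, as it is exactly what makes $1$ unstable: since $|g_s'(1)|=|1-2\eta s|\ge 2\nu-1>1$ for every $s\in[a^2,b^2]$, on a small symmetric neighborhood of $1$ each step both flips sides and strictly expands the distance to $1$ by a factor $\lambda>1$. Combined with the flip estimate $g_s(\theta)-1\ge(1-\theta)\,(\nu\theta_-(\theta_-+1)-1)$, which lower-bounds how far above $1$ the orbit must land when it enters such a neighborhood from outside, this is meant to show $\inf_t|\theta_t-1|>0$ once the orbit lies in $J$, with the finite transient adjusting only the constant. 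I expect this uniform-in-$t$, almost-sure separation from $1$ to be the main obstacle, since it must exclude the orbit—or a stochastic excursion during the transient—sneaking arbitrarily close to $1$; the quantitative constant-chasing and the fully general statement are what \cref{prop:stabilization-app} in the appendix handles.
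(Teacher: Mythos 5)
Your reduction is the same as the paper's: after setting $\theta_*=1$ you arrive at exactly the rescaled recursion \eqref{eq:rec-1D-main} with multiplier $\gamma=\eta s\in(1,1.25)$, and your three pillars --- the factorization $g_s(\theta)-1=(1-\theta)\bigl(\eta s\,\theta(\theta+1)-1\bigr)$, the amplitude bound via $\max_{(0,1]}g_{b^2}$, and the uniform geometric escape from a neighborhood of the origin --- are precisely the paper's identity $\hgamma(\theta)=1+(1-\theta)(\gamma\theta(1+\theta)-1)$, Lemma~\ref{lem:bounded}, and property \textbf{(P1)}. Where you genuinely diverge is the persistence step, i.e.\ keeping the orbit bounded away from the interpolator $\theta=1$. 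The paper handles this through the \emph{two-step} map: it sandwiches $\hgam{a}\circ\hgam{b}$ between $\ggam{\gamma_{\max}}$ and $\ggam{\gamma_{\min}}$, establishes the monotonicity of $\ggamma$ on $\itvl{1}\cup\itvl{2}$, shows $\ggamma(\theta)<\theta$ on $\itvl{2}=[1-\epsilon,1)$ (Lemma~\ref{prop:hoh}), and proves that $\itvl{1}=(0.65,1-\epsilon)$ is forward-invariant for the two-step dynamics (Lemma~\ref{lem:closureI1}); the even subsequence thus monotonically exits $\itvl{2}$ and is trapped in $\itvl{1}$. You instead propose a purely one-step multiplicative argument: the distance $|\theta_t-1|$ is multiplied at each step by $|\eta s\,\theta_t(\theta_t+1)-1|$, which exceeds $1$ near $\theta=1$ and is bounded below by $\nu\theta_-(\theta_-+1)-1>0$ on the whole band, so a step can only contract the distance to $1$ when the iterate is already at distance $\geq\epsilon_1$ from it, and the re-entry distance is therefore bounded below by $\epsilon_1(\nu\theta_-(\theta_-+1)-1)$. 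This route is completable and arguably more transparent than the paper's two-step bookkeeping; what it buys is that the oscillation amplitude is controlled by a single uniform multiplicative constant rather than by interval-by-interval invariance checks, at the price of being honest (as you are) that the infimum it yields is limited by the (random) distance at the first entry into the neighborhood of $1$ --- a caveat that in fact also applies to the paper's own ``for all $t$'' formulation of \eqref{eq:level_set}, whose proof really establishes the eventual statement.

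Two small repairs are needed in your sketch. First, $\theta_+<\sqrt2$ is not sufficient to keep the factor $1+\eta s(1-\theta^2)$ positive on $(1,\theta_+]$ (it would allow $1-1.25\cdot1<0$); you need the sharper numerical bound the paper computes, $\theta_+\leq\tfrac{2(1+\gamma)^{3/2}}{3\sqrt{3\gamma}}\big|_{\gamma=1.25}<1.162$, which gives $1-1.25(1.162^2-1)>0$. Second, you should discard the measure-zero event $\{\theta_t=1\text{ for some }t\}$ explicitly (as the paper does at the start of the proof of Lemma~\ref{lem:oscillate}), since your flip-and-expand argument degenerates exactly there.
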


The proposition is divided in two parts: \upd{if the step size is large enough}, \eqref{eq:level_set} the loss stabilizes in between level sets $\delta_1$ and $\delta_2$ and \eqref{eq:bounce} shows that after some initial phase, the iterates bounce from one side of the \textit{loss valley} to the other one. Note that despite the stochasticity of the process, the results hold \textit{almost surely}.

\myparagraph{The effective dynamics.} 
As observed in the prototypical SGD training dynamics of Fig.~\ref{fig:fig1_deep_nets_with_wd} and proved in the non-convex toy model of Proposition~\ref{prop:stabilization}, large step sizes lead the loss to stabilize around some level set.
To further understand the effect of this loss stabilization in parameter space, we shall assume perfect stabilization.
Then, from Proposition~\ref{prop:SGD_GD_LN}, \upd{we conjecture the following behaviour}

\vspace{-0.5em}
\begin{center}
\textit{During loss stabilization, SGD is well modelled by GD with constant label noise.}
\end{center}
\vspace{-0.5em}
Label noise dynamics have been studied recently~\citep{blanc2020implicit,damian2021label,li2021happens} thanks to their connection with Stochastic Differential Equations (SDEs). 
To properly write a SDE model, the drift should match the gradient descent and the noise should have the correct covariance structure~\citep{li2019stochastic,wojtowytsch2021stochastic}.  Proposition~\ref{prop:SGD_GD_LN} implies that the noise at state $\theta$ is spanned by the gradient vectors $\{\nabla_\theta h_\theta(x_1), \dots, \nabla_\theta h_\theta(x_n) \}$ and has a constant intensity corresponding to the loss stabilization at a level $\delta>0$. Hence, we propose the following SDE model 
\begin{align}
\label{eq:effective_dynamics_general}
    \mathrm{d} \theta_t = - \nabla_\theta \LL(\theta_t) \mathrm{d} t + \sqrt{\eta \delta}\, \phi_{\theta_t} (X)^\top \mathrm{d} B_t,
\end{align}
where $(B_t)_{t \geq 0}$ is a standard Brownian motion in $\mathbb{R}^n$ and $\phi_\theta (X):=[\nabla_\theta h_\theta(x_i)^\top]_{i = 1}^n \in \R^{n \times p}$ referred to as the \textit{Jacobian} (which is also the \textit{Neural Tangent Kernel (NTK) feature matrix} \citep{jacot2018neural}). This SDE can be seen as \textit{the effective slow dynamics} that drives the iterates while they bounce \textit{rapidly} in some directions at the level set $\delta$. It highlights the combination of the deterministic part of the full-batch gradient and the noise induced by SGD.
\updicml{Beyond the theoretical justification and consistency of this SDE model, we validate it empirically in Sec.~\ref{app:sec:sde_validation} showing that it indeed captures the dynamics of large step size SGD. }
In the next section, we leverage the SDE~(\ref{eq:effective_dynamics_general}) to understand the implicit bias of such learning dynamics.

\subsection{Sparse Feature Learning}
We begin with a simple model of diagonal linear networks that showcase a sparsity inducing dynamics and further disclose our general message about the overall implicit bias promoted by the effective dynamics.

\subsubsection{A warm-up: diagonal linear networks} 
An appealing example of simple non-linear networks that help in forging an intuition for more complicated architectures is diagonal linear networks \citep{vaskevicius2019implicit,woodworth2020kernel,haochen2020shape,pesme2021implicit}. They are two-layer linear networks with only diagonal connections: the prediction function writes $h_{u,v}(x) = \langle u, v \odot x \rangle = \langle u \odot v, x \rangle$ where $\odot$ denotes \textit{elementwise} multiplication. Even though the loss is convex in the associated linear predictor $\beta := u \odot v \in \R^d$, it is not in $(u,v)$, hence the training of such simple models already exhibit a rich non-convex dynamics. In this case, $\nabla_u h_{u,v} (x) = v \odot x$, and the SDE model \eqref{eq:effective_dynamics_general} writes
\begin{align} \label{eq:effective_dynamics}
\mathrm{d} u_t = - \nabla_{u} \LL(u_t,v_t) \,\mathrm{d} t ~ + \sqrt{\eta \delta}\, v_t \odot \left[ X^\top \mathrm{d} B_t \right],
\end{align}
where $(B_t)_{t \geq 0}$ is a standard Brownian motion in $\mathbb{R}^n$. Equations are symmetric for $(v_t)_{t \geq 0}$.

\myparagraph{What is the behaviour of this effective dynamics?} \cite{pillaud2022label} answered this question by analyzing a similar stochastic dynamics and unveiled the sparse nature of the resulting solutions. Indeed, under sparse recovery assumptions, denoting $\beta^*$ the sparsest linear predictor that interpolates the data, it is shown that the associated linear predictor $\beta_t = u_t \odot v_t$: (i) converges exponentially fast to zero outside of the support of $\beta^*$ (ii) is \textit{with high probability} in a $\mathcal{O}(\sqrt{\eta \delta})$ neighborhood of $\beta^*$ in its support after a time $\mathcal{O}(\delta^{-1})$.

\myparagraph{Overall conclusion on the model.} %
During a first phase, SGD with large step sizes $\eta$ decreases the training loss until stabilization at some level set $\delta>0$. 
During this loss stabilization, an effective noise-driven dynamics takes place. It shrinks the coordinates outside of the support of the sparsest signal and oscillates in parameter space at level $\mathcal{O}(\sqrt{\eta \delta})$ on its support. Hence, decreasing later the step size leads to perfect recovery of the sparsest predictor. This behaviour is illustrated in our experiments in Figure~\ref{fig:dln_motivating_picture}.

\subsubsection{The Sparse Feature Learning Conjecture for More General Models} 

Results for diagonal linear nets recalled in the previous paragraph show that the noisy dynamics~(\ref{eq:effective_dynamics}) induce a \textit{sparsity bias}. As emphasized in \citet{haochen2020shape}, this effect is largely due to the multiplicative structure of the noise $ v \odot [X^\top \mathrm{d} B_t]$ that, in this case, has a shrinking effect \textit{on the coordinates} (because of the coordinate-wise multiplication with $v$). In the general case, we see, thanks to \eqref{eq:effective_dynamics_general}, that the same multiplicative structure of the noise still happens but this time \textit{with respect to the Jacobian~$\phi_{\theta} (X)$}. Hence, this suggests that similarly to the diagonal linear network case, the implicit bias of the noise can lead to a shrinkage effect applied to~$\phi_{\theta}(X)$. \upd{ This effect depends on the noise intensity $\delta$ and the step size of SGD}. 
Indeed, an interesting property of Brownian motion is that, for $v \in \R^p$, $\langle v, B_t \rangle = \|v\|_2 W_t$, where the equality holds in law and $(W_t)_{t \geq 0}$ is a one-dimensional Brownian motion. \upd{Hence, the process \eqref{eq:effective_dynamics_general} is equivalent to a process whose $i$-th coordinate is driven by a noise proportional to $\| \phi_i \| \mathrm{d} W^i_t $, where $\phi_i$ is the $i$-th column of $\phi_\theta(X)$ and $(W^i_t)_{t \geq 0}$ is a Brownian motion. This SDE structure, similar to the geometric Brownian motion, is expected to induce the shrinkage of each multiplicative factor~\citep[Section 5.1]{oksendal2013stochastic}, i.e., in our case $(\|\nabla_\theta h(x_i)\|)_{i = 1}^n$. 
Thus, we conjecture:}
\vspace*{-5pt}
\begin{center}
  \textit{The noise part of \eqref{eq:effective_dynamics_general} seeks to minimize the $\ell_2$-norm of the columns of $\phi_{\theta}(X)$.}  
\end{center}
\vspace*{-5pt}
Note that the \textit{fitting part} of the dynamics prevents the Jacobian to collapse totally to zero, but as soon as they are not needed to fit the signal, \textit{columns} can be reduced to zero. Remarkably, from a stability perspective, \citet{blanc2020implicit} showed a similar bias: locally around a minimum, the SGD dynamics implicitly tries to minimize the \textit{Frobenius norm}~$\|\phi_{\theta}(X)\|_{\text{F}} = \sum_{i = 1}^n \|\nabla_\theta h_\theta(x_i)\|^2$.\updicml{ Resolving the above conjecture and characterizing the implicit bias \textit{along} the trajectory of SGD remains an exciting avenue for future work. Now,} we provide a specification of this implicit bias for different architectures:
\begin{itemize}[leftmargin=*, topsep=0pt]
    \item \textbf{Diagonal linear networks:} For $h_{u,v}(x) =  \langle u \odot v, x \rangle$, we have $\nabla_{u,v} h_{u,v} (x) = [v \odot x, u \odot x]$. Thus, for a generic data matrix $X$, minimizing the norm of each column of $\phi_{u,v}(X)$ %
    amounts to put the maximal number of zero coordinates and hence to minimize $\|u \odot v\|_0$. 
    \item \textbf{ReLU networks:} We take the prototypical one hidden layer to exhibit the sparsification effect. Let $h_{a,W}(x) = \langle a, \sigma(Wx)\rangle$, 
     then $\nabla_a h_{a,W}(x) = \sigma(Wx)$ and $\nabla_{w_j} h_{a,W}(x) = a_j x \mathbf{1}_{\langle w_j,x\rangle > 0}$. 
    Note that the $\ell_{2}$-norm of the column corresponding to the neuron is reduced when it is activated at a \textit{minimal number of training points},  hence the implicit bias enables the learning of \textit{sparse data-active features}.  
    Finally, when some directions are needed to fit the data, similarly activated neurons align to fit,\rmv{allowing} \updicml{reducing} the rank of $\phi_{\theta}(X)$. 
\end{itemize}

\updicml{ \myparagraph{Feature sparsity.} Our main insight is that the Jacobian could be significantly simplified during the loss stabilization phase. Indeed, while the gradient part tries to fit the data and align neurons (see e.g. Fig.~\ref{fig:2d_neuron_movement}), the noise part of \eqref{eq:effective_dynamics_general} intends to minimize the $\ell_2$-norm of the columns of $\phi(X)$. Hence, in combination, this motivates us to count the average number of \textit{distinct} (i.e., counting a group of aligned neurons as one), \textit{non-zero} activations over the training set. We refer to this as the \textit{feature sparsity coefficient} {{(see the next section for a detailed description)}}. Note that the aforementioned sparsity comes both in the number of distinct neurons and their activation.}

\rmv{Overall, fully understanding theoretically the structural implications of the implicit bias described above remains an exciting avenue for future work.}We show next that the conjectured sparsity is indeed observed empirically for a variety of models. Remark that both the \updicml{feature sparsity coefficient and the rank} of $\phi_{\theta}(X)$ 
can be used as a good proxy to track the hidden progress during the loss stabilization phase.

\section{Empirical Evidence of Sparse Feature Learning Driven by SGD}
\begin{figure*}[t]
    \centering
    \includegraphics[width=1.0\textwidth]{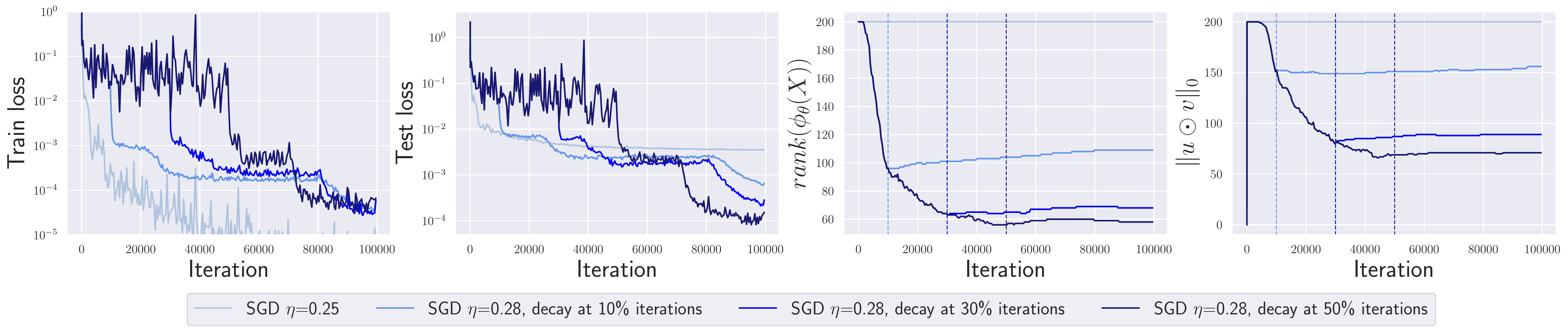}
    \vspace{-8mm}
    \caption{\textbf{Diagonal linear networks}. We observe loss stabilization, better generalization for longer schedules, minimization of the rank of $\phi_\theta(X)$ and sparsity of the predictor $u \odot v$.}
    \vspace{-4mm}
    \label{fig:dln_motivating_picture}
\end{figure*}

Here we present empirical results %
for neural networks of increasing complexity: from diagonal linear networks to deep DenseNets on CIFAR-10, CIFAR-100, and Tiny ImageNet. We make the following common observations for all these networks trained using SGD schedules with large step sizes: %
\begin{enumerate}[label=(\bfseries O\arabic*), itemsep= -1ex, topsep = 0pt]
    \item \textbf{Loss stabilization}: training loss stabilizes around a high level set until step size is decayed,
    \item \textbf{Generalization benefit}: longer loss stabilization leads to better generalization, %
    \item \textbf{Sparse feature learning}: longer loss stabilization leads to sparser features.%
\end{enumerate}
Importantly, \textit{we use no explicit regularization} (in particular, no weight decay) in our experiments so that the training dynamics is driven purely by SGD and the step size schedule. Additionally, in some cases, we cannot find a single large step size that would lead to loss stabilization. In such cases, whenever explicitly mentioned, we use a \textit{warmup} step size schedule---i.e., increasing step sizes according to some schedule---to make sure that the loss stabilizes around some level set. 
Warmup is commonly used in practice \citep{he2016deep, devlin2018bert} and often motivated purely from the optimization perspective as a way to accelerate training \citep{agarwal2021acceleration}, \upd{but we suggest that it is also a way to amplify the regularization effect of the SGD noise which is proportional to the step size.} 

\myparagraph{Measuring sparse feature learning.} \label{par:compute-feature-sparsity} \updicml{We track the simplification of the Jacobian by measuring both the feature sparsity and the rank of $\phi_\theta(X)$. We compute the rank over iterations for each model (except deep networks for which it is prohibitively expensive) by using a fixed threshold on the singular values of $\phi_\theta(X)$ normalized by the largest singular value. In this way, we ensure that the difference in the rank that we detect is not simply due to different scales of $\phi_\theta(X)$. Moreover, we always compute $\phi_\theta(X)$ on the number of fresh samples equal to the number of parameters $|\theta|$ to make sure that rank deficiency is not coming from $n \ll |\theta|$ which is the case in the overparametrized settings we consider. To compute the \textbf{feature sparsity coefficient}, we count the average fraction of \textit{distinct} (i.e., counting a group of highly correlated activations as one), \textit{non-zero} activations at some layer over the training set. 
Note that the value of $100\%$ means a completely \textit{dense} feature vector and $0\%$ means a feature vector with all zeros. 
We count a pair of activations $i$ and $j$ as highly correlated if their Pearson's correlation coefficient 
is at least $0.95$. 
Unlike $\text{rank}(\phi_\theta(X))$, the feature sparsity coefficient scales to deep networks and has an easy-to-grasp meaning.}

\subsection{Sparse Feature Learning in Diagonal Linear Networks}
\label{subsec:experiment_sparse_dynamics_DLN}

\myparagraph{Setup.} The inputs $x_1, \ldots, x_n$ with $n=80$ are sampled from $\mathcal{N}(0,\bm{I}_{d})$ where $\bm{I}_{d}$ is an identity matrix with $d=200$, and the outputs are generated as $y_i = \scal{\beta_*}{x_i}$ where $\beta_* \in \R^{d}$ is $r = 20$ sparse. 
We consider four different SGD runs (started from $u_i=0.1$, $v_i=0$ for each $i$):
one with a small step size and three other with initial large step size decayed after $10\%$, $30\%$, $50\%$ iterations, respectively. 

\myparagraph{Observations.} 
We show the results in Fig.~\ref{fig:dln_motivating_picture} and note that \textbf{(O1)}--\textbf{(O3)} hold even in this simple model trained with vanilla SGD without any explicit regularization or layer normalization schemes. We observe that the training loss stabilizes around $10^{-1.5}$, the test loss improves for longer schedules, both $\text{rank}(\phi_\theta(X))$ and $\|u \odot v\|_0$ decrease during the loss stabilization phase leading to a sparse final predictor. 
While the training loss has seemingly converged to $10^{-1.5}$, a hidden dynamics suggested by \eqref{eq:effective_dynamics} occurs which slowly drifts the iterates to a sparse solution. This implicit sparsification explains the dependence of the final test loss on the time when the large step size is decayed, similarly to what has been observed for deep networks in Fig.~\ref{fig:fig1_deep_nets_with_wd}.
Interestingly, we also note that SGD with large step-size schedules encounters saddle points \textit{after} we decay the step size (see the training loss curves in Fig.~\ref{fig:dln_motivating_picture}) which resembles the saddle-to-saddle regime described in \citet{jacot2021saddle} which does not occur in the large-initialization lazy training regime. 

\myparagraph{SGD and GD have different implicit biases.} 
Since we observe from Fig.~\ref{fig:dln_motivating_picture} that for loss stabilization, stochasticity alone does not suffice and large step sizes are necessary, one may wonder if conversely, only large step sizes can be sufficient to have a sparsifying effect. Even if special instances can be found for which large step sizes are sufficient \upd{(such as for non-centered input features as in \citet{nacson2022implicit})}, we answer this negatively showing that gradient descent in general does not go to the sparsest solution as demonstrated in Fig.~\ref{fig:dln_motivating_picture_gd} in the Appendix. 
\setlength{\columnsep}{2.5mm}
\begin{wrapfigure}{r}{0.24\textwidth}
    \vspace{-5mm}
    \centering
    \includegraphics[width=0.26\textwidth]{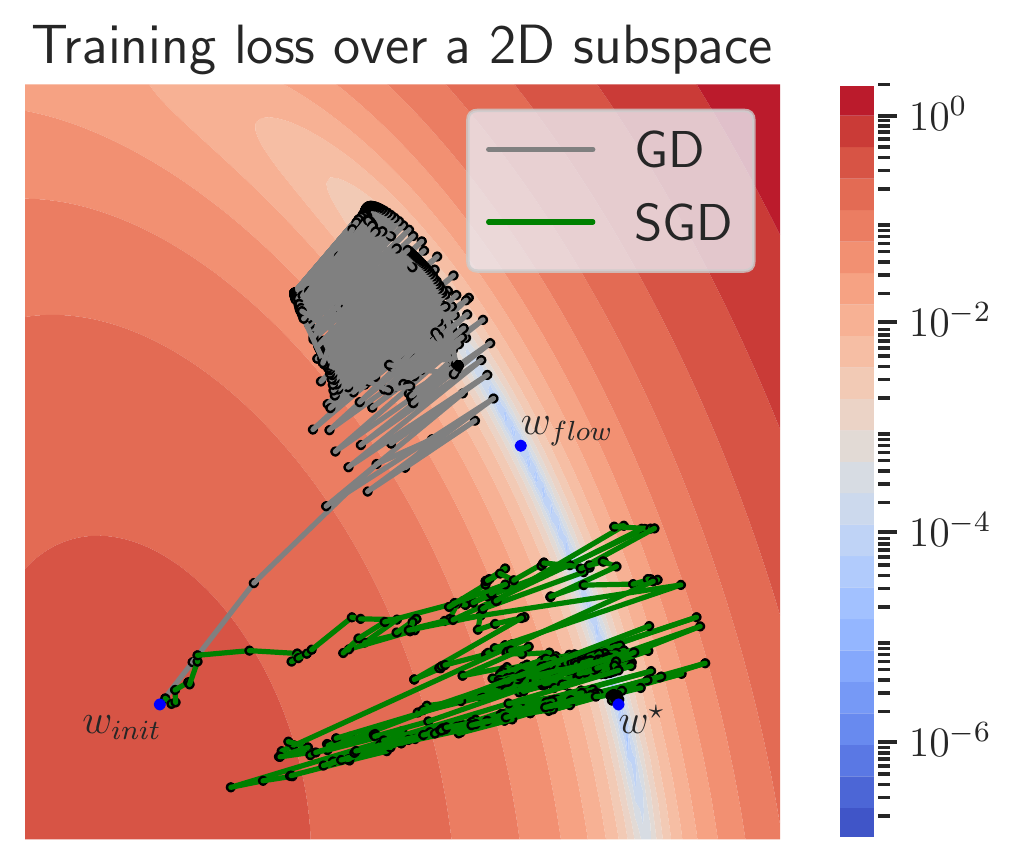}
    \vspace{-8mm}
    \caption{\textbf{Diagonal linear networks.} GD and SGD take different trajectories.}
    \vspace{-5mm}
    \label{fig:gd_vs_sgd_in_2d}
\end{wrapfigure}
Moreover, in Fig.~\ref{fig:gd_vs_sgd_in_2d}, we visualize the difference in trajectory between the two methods taken with large step sizes over a 2D subspace spanned by $w^\star - w_{init}$ and $w_{flow} - w_{init}$, where $w^\star$ is the ground truth, $w_{flow}$ is the result of gradient flow, and $w_{init}$ is the initialization. This example provides an intuition that loss stabilization alone is not sufficient for sparsification and that the role of noise described earlier is crucial.

\subsection{Sparse Feature Learning in Simple ReLU Networks}
\begin{figure*}[t]
    \centering
    \includegraphics[width=0.99\textwidth]{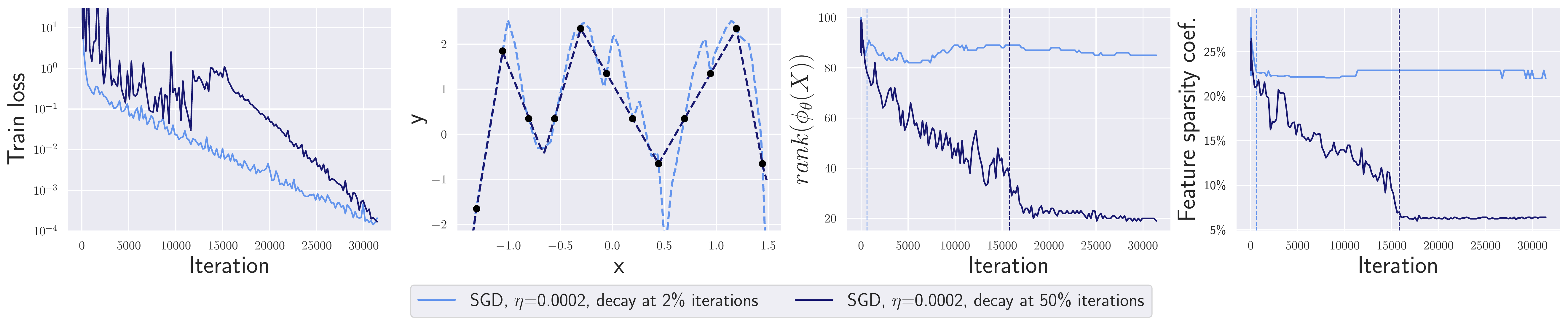}
    \vspace{-3mm}
    \caption{\textbf{Two-layer ReLU networks for 1D regression}. %
    We observe loss stabilization, simplification of the model trained with a longer schedule, lower rank of $\phi_\theta(X)$, and much sparser features.}
    \vspace{-2mm}
    \label{fig:fc_net_1d_regression_main_plots}
\end{figure*}

\begin{figure*}[t]
    \centering
    \includegraphics[width=0.99\textwidth]{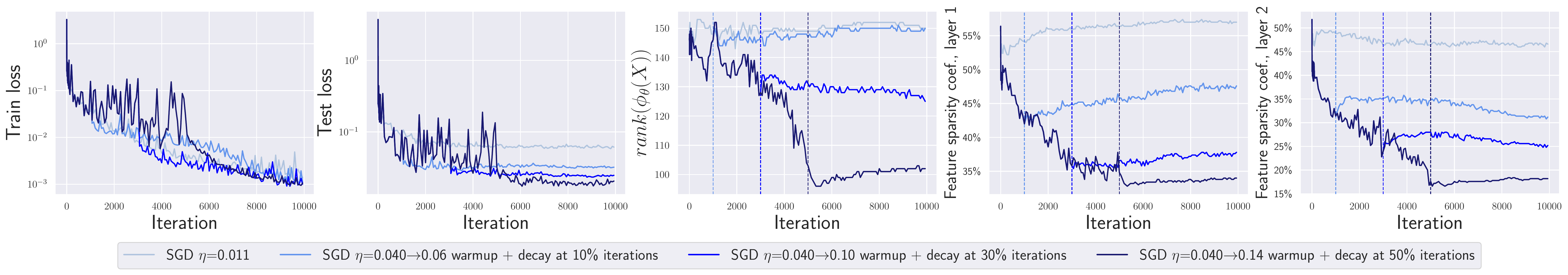}
    \vspace{-3mm}
    \caption{\textbf{Three-layer ReLU networks in a teacher-student setup}. We observe loss stabilization, lower rank of the Jacobian and lower feature sparsity \upd{coefficient} on \textit{both} hidden layers.}
    \vspace{-2mm}
    \label{fig:fc_net_three_layer_teacher_student}
\end{figure*}

\myparagraph{Two-layer ReLU network in 1D.}
We consider the 1D regression task from \citet{blanc2020implicit} with 12 points, where label noise SGD has been shown to learn a simple model. 
We show that similar results can be achieved with large-step-size SGD via loss stabilization. 
We train a ReLU network with $100$ neurons with SGD with a long linear warmup (otherwise, we were unable to achieve approximate loss stabilization), directly followed by a step size decay. The two plots correspond to a warmup/decay transition at $2\%$ and $50\%$ of iterations, respectively. 
The results shown in Fig.~\ref{fig:fc_net_1d_regression_main_plots} confirm that \textbf{(O1)}--\textbf{(O3)} hold: the training loss stabilizes around $10^{-0.5}$, the predictor becomes much simpler and is expected to generalize better, and both $\text{rank}(\phi_\theta(X))$ and the feature sparsity coefficient substantially decrease during the loss stabilization phase.
Interestingly, the rank reduction of $\phi_\theta(X)$ occurs because of zero \textit{activations}, and not because of zero \textit{weights}. 
For this one-dimensional task, we can directly observe the final predictor which is sparse in terms of the number of distinct ReLU kinks (i.e., having a few piecewise-linear segments) as captured by the feature sparsity coefficient and the rank of the Jacobian.
Interestingly, we also observed \textit{overregularization} for even larger step sizes when we cannot fit all the training points (see Fig.~\ref{fig:fc_net_1d_regression_overreg} in Appendix). This phenomenon clearly illustrates how the capacity control is induced by the optimization algorithm: \textit{the function class over which we optimize depends on the step size schedule}. 
Additionally, Fig.~\ref{fig:fc_net_1d_regression_classifier_vis} in App. shows the evolution of the predictor over iterations. The general picture is confirmed: first the model is simplified during the loss stabilization phase and only then fits the data.

\myparagraph{Deeper ReLU networks.}
We use a teacher-student setup with a random \textit{three-layer} teacher ReLU network having $2$ neurons on each hidden layer. The student network is overparametrized with $10$ neurons on each layer and is trained on $50$ examples. \upd{Such teacher-student setup is useful since we know that the student network can implement the ground truth function but might not find it due to the small sample size.} We train models using SGD with a medium constant step size and a large step size with warmup decayed after $10\%$, $30\%$, $50\%$ iterations, respectively.
The results shown in Fig.~\ref{fig:fc_net_three_layer_teacher_student} confirm that \textbf{(O1)}--\textbf{(O3)} hold: the training loss stabilizes around $10^{-1.5}$, the test loss is smaller for longer schedules, and both $\text{rank}(\phi_\theta(X))$ and the feature sparsity coefficient substantially decrease during the loss stabilization phase. All methods have the same value of the training loss ($10^{-3}$) after $10^4$ iterations but different generalization. Moreover, we see that the feature sparsity coefficient decreases \textit{on each layer} which makes this metric a promising one to consider for  deeper networks.

\subsection{Sparse Feature Learning in Deep ReLU Networks}

\begin{figure*}[h!] \centering \small
    \begin{tabular}{c} 
    \textbf{DenseNet-100 on CIFAR-10, basic setting (no momentum and augmentations)} \\
    \includegraphics[width=0.99\textwidth]{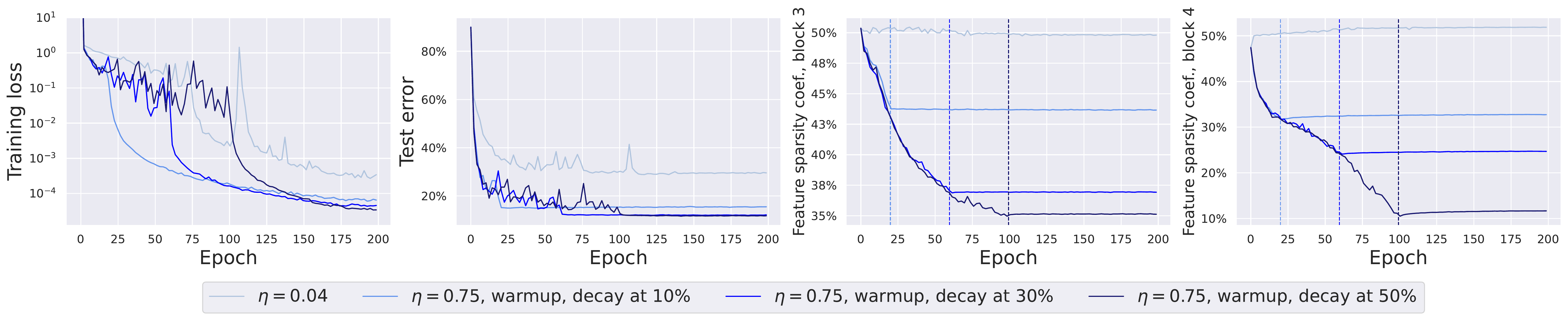}\\
    \textbf{DenseNet-100 on CIFAR-100, basic setting (no momentum and augmentations)} \\
    \includegraphics[width=0.99\textwidth]{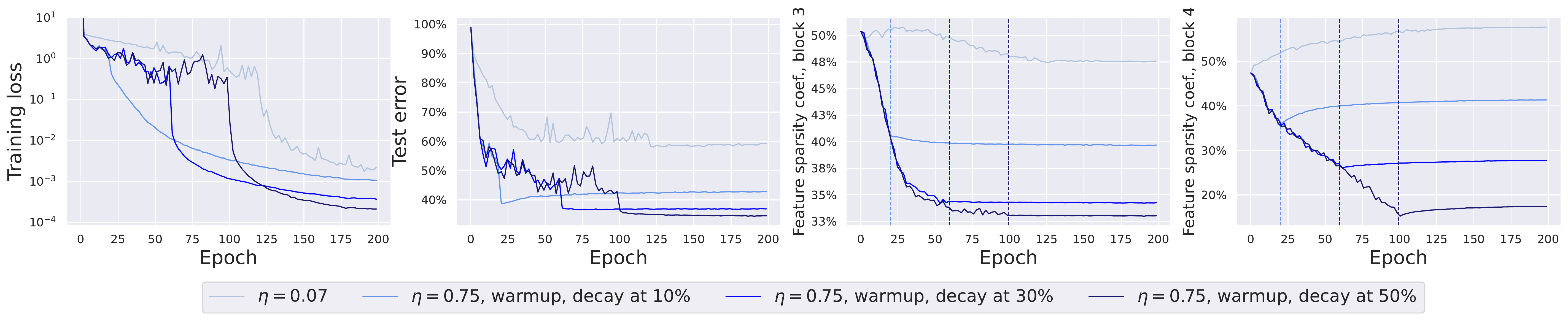}\\
    \textbf{DenseNet-100 on Tiny ImageNet, basic setting (no momentum and augmentations)} \\
    \includegraphics[width=0.99\textwidth]{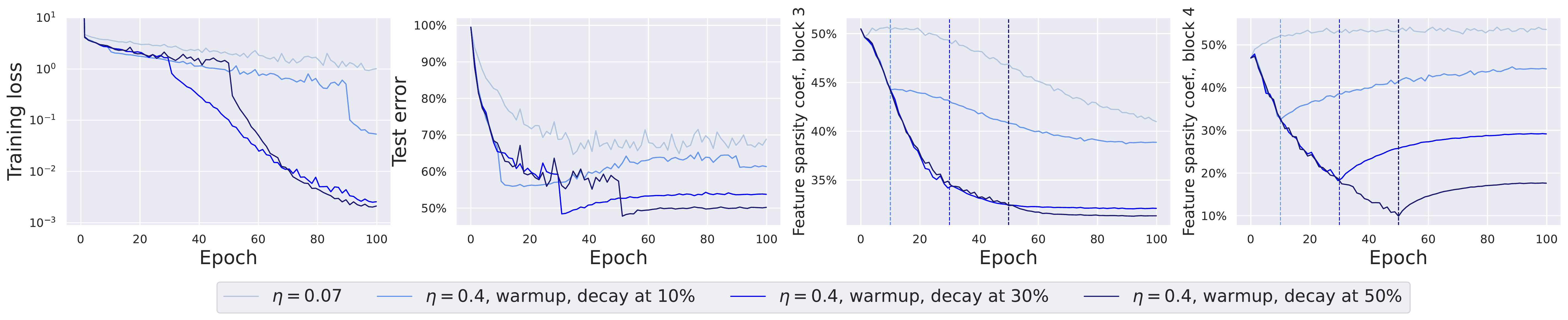}
    \end{tabular}
    \vspace{-4mm}
    \caption{\textbf{Experiments with DenseNet-100 in the basic setting.} We can see that the training loss stabilizes, the test error noticeably depends on the length of the schedule, and the feature sparsity coefficient is minimized during the large step size phase.}
    \label{fig:densenet_minimal}
\end{figure*}

\begin{figure*}[h!] \centering \small
    \begin{tabular}{c} 
    \textbf{DenseNet-100 on CIFAR-10, state-of-the-art setting (with momentum and augmentations)} \\
    \includegraphics[width=0.99\textwidth]{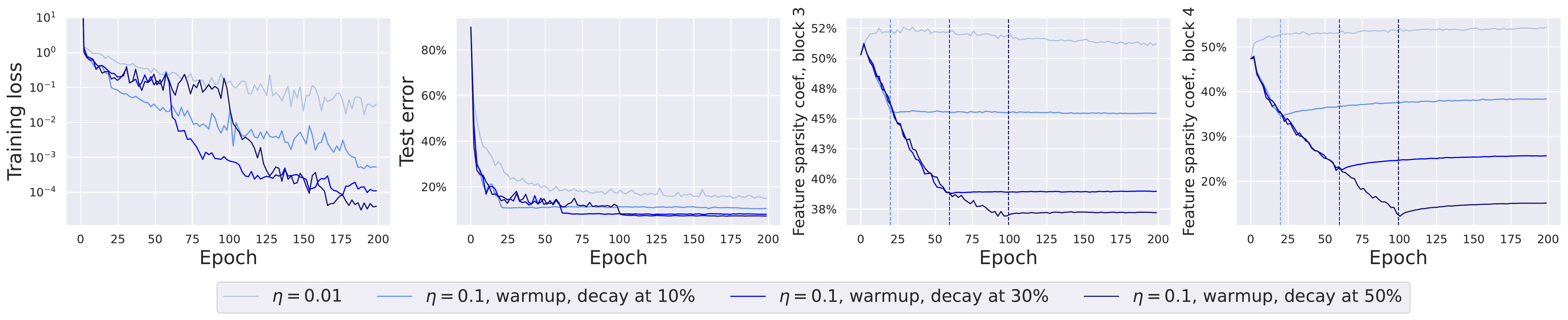}\\
    \textbf{DenseNet-100 on CIFAR-100, state-of-the-art setting (with momentum and augmentations)} \\
    \includegraphics[width=0.99\textwidth]{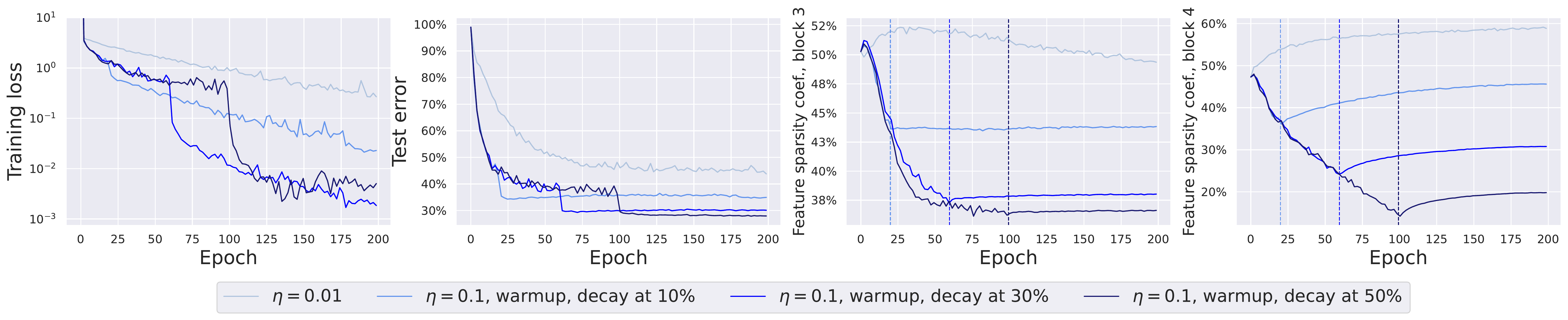}\\
    \textbf{DenseNet-100 on Tiny ImageNet, state-of-the-art setting (with momentum and augmentations)} \\
    \includegraphics[width=0.99\textwidth]{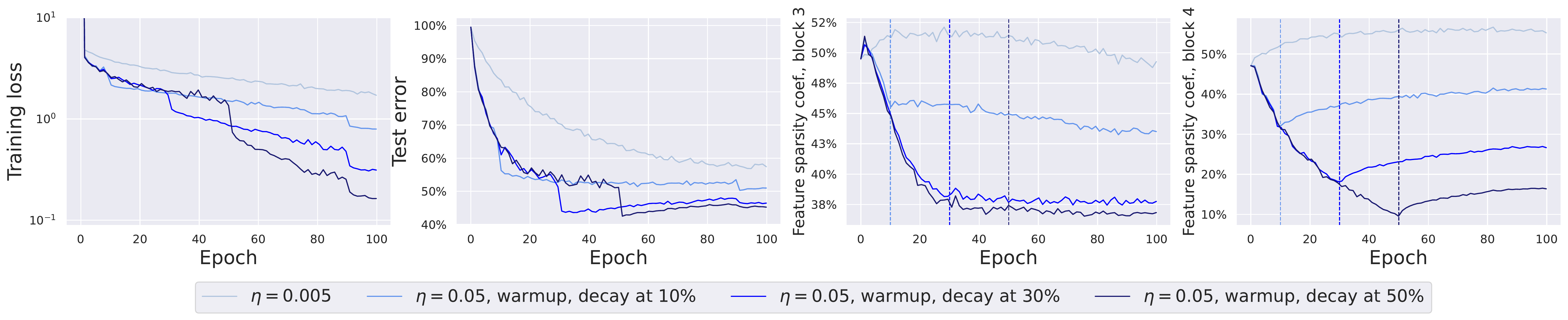}
    \end{tabular}
    \vspace{-4mm}
    \caption{\textbf{Experiments with DenseNet-100 in the state-of-the-art setting.} We can see that the training loss stabilizes, the test error noticeably depends on the length of the schedule, and the feature sparsity coefficient is minimized during the large step size phase.}
    \label{fig:densenet_sota}
\end{figure*}

\myparagraph{Setup.}
We consider here an image classification task and train a DenseNet-100-12 on CIFAR-10, CIFAR-100, and Tiny ImageNet using SGD with batch size $256$ and different step size schedules. We use an exponentially increasing warmup schedule with exponent $1.05$ to stabilize the training loss. We cannot measure the rank of $\phi(X)$ here since this matrix is too large ($\approx (5\tiny{\times}10^{4} ) \times (2\tiny{\times}10^{7})$) so we measure only the feature sparsity coefficient taken at two layers: at the end of super-block 3 (i.e., in the middle of the network) and super-block 4 (i.e., right before global average pooling at the end of the network) of DenseNets. We test two settings: a basic setting and a state-of-the-art setting with momentum and standard augmentations.

\myparagraph{Observations.} The results shown in Fig.~\ref{fig:densenet_minimal}~and~\ref{fig:densenet_sota} confirm that our main findings also hold for deep convolutional networks used in practice: 
the training loss approximately stabilizes, the test error is becoming progressively better for longer schedules, and the feature sparsity coefficient gradually decreases at both super blocks 3 and 4 until the step size is decayed. 
We also see that small step sizes consistently lead to suboptimal generalization, e.g., $60\%$ vs. $35\%$ in the basic setting on CIFAR-100.  
This poor performance confirms that it is crucial to leverage the implicit bias of large step sizes. 
The difference in the feature sparsity coefficient is also substantial: typically $50\%$-$60\%$ for small step sizes vs. $10\%$-$20\%$ for larger step sizes at block 4. %
The observations are similar for the state-of-the-art setting as well where we also see a noticeable difference in generalization and feature sparsity depending on the step size and schedule. 
Finally, we note that while both the feature sparsity coefficient and test error decrease together, it remains to be seen whether they are causally related on natural datasets. 

We show the results with similar findings for other architectures (ResNets-18 and ResNets-34) on CIFAR-10 and CIFAR-100 in Fig.~\ref{fig:c10_deep_nets} 
 and Fig.~\ref{fig:c100_deep_nets} in Appendix. 
Additionally, Fig.~\ref{fig:feature_learning_conv_filters_cifar10} illustrates that for small step sizes, the early and middle layers stay very close to their random initialization which indicates the absence of feature learning similarly to what is suggested by the neuron movement plot in Fig.~\ref{fig:2d_neuron_movement} in the Appendix for a two-layer network in a teacher-student setup.

\section{Conclusions and Insights from our Understanding of the Training Dynamics}
Here we provide an extended discussion on the training dynamics of neural networks resulting from our theoretical and empirical findings. 

 \myparagraph{The multiple stages of the SGD training dynamics.} 
As analyzed and shown empirically, the training dynamics we considered can be split onto three distinct phases: (i) an initial phase of reducing the loss down to some level where stabilization can occur, (ii) a loss stabilization phase where noise and gradient directions combine to find architecture-dependent sparse representations of the data, (iii) a final phase when the step size is decreased to fit the training data. This typology clearly disentangles the effect of the stabilization phase (ii) which relies on the implicit bias of SGD to simplify the model.
Note that phases (ii) and (iii) can be repeated until final convergence \citep{he2016deep}.
Moreover, in some training schedules, (ii) does not explicitly occur, and the effect of loss stabilization (ii) and data fitting (iii) can occur simultaneously 
\citep{loshchilov2018decoupled}.

\myparagraph{From lazy training to feature learning.}
Similar sparse implicit biases have been shown for regression with  infinitely small initialization~\citep{boursier2022gradient} and for %
classification \citep{chizat2020implicit, Lyu2020Gradient}. However, both approaches are not practical from the computational point of view since (i) the origin is a saddle point for regression leading to the vanishing gradient problem (especially, for deep networks), and (ii) max-margin bias for classification is only expected to happen in the asymptotic phase~\citep{moroshko2020implicit}.
On the contrary, large step sizes enable to initialize far from the origin, while allowing to \textit{efficiently} transition from a regime close to the lazy NTK regime \citep{jacot2018neural} to the rich feature learning regime. %

\myparagraph{Common patterns in the existing techniques.}
Tuning the step size to obtain loss stabilization can be difficult.
To prevent early divergence caused by too large step sizes, we sometimes had to rely on an increasing step size schedule (known as \textit{warmup}). 
Interpreting such schedules as a tool to favor implicit regularization provides a new explanation to their success and popularity.
Additionally, normalization schemes like \textit{batch normalization} or \textit{weight decay}, beyond carrying their own implicit or explicit regularization properties, can be analyzed from a similar lens: they allow to use larger step sizes that boost further the implicit bias effect of SGD while preventing divergence~\citep{bjorck2018understanding,zhang2018three,li2019exponential}. 
Note also that we derived our analysis with batch size equal to one for the sake of clarity, but an arbitrary batch size $B$ would simply be equivalent to replacing~$\gamma \leftarrow \gamma / B$. Similarly to the consequence of large step sizes, preferring \textit{smaller batch sizes} \citep{keskar2016large} while avoiding divergence seem key to benefit from the implicit bias of SGD. 
Finally, the effect of large step sizes or small batches is often connected to measures of \textit{flatness} of the loss surface via stability analysis \citep{wu2018sgd} and some methods like the Hessian regularization \citep{damian2021label} or SAM \citep{foret2021sharpnessaware} explicitly optimize it. 
Such methods resemble the implicit bias of SGD with loss stabilization implied by the label noise equation~(\eqref{eq:effective_dynamics_general}) where matrix $\phi_\theta(X)$ is the key component of the Hessian. %
However, an important practical difference is that the regularization strength in these methods is explicit and decoupled from the step size schedule which may be harder to properly tune since it is simultaneously responsible for optimization \textit{and} generalization.

\section*{Acknowledgements}
We thank Maria-Luiza Vladarean and Scott Pesme for insightful discussions and proofs checking.
M.A. was supported by the Google Fellowship and Open Phil AI Fellowship. 
A.V. was supported by the Swiss Data Science Center Fellowship. 
L.P.V.'s work was partially supported by the Simons Foundation.

\bibliography{literature}
\bibliographystyle{icml2023}

\newpage
\appendix
\onecolumn

{\LARGE \noindent\bfseries  \textsc{Appendix}}

\vspace{0.5cm}

\noindent In Section~\ref{app:sec:sgd-label-noise}, we show Proposition~\ref{prop:SGD_GD_LN} on the equivalence between SGD and GD with added noise. In Section~\ref{app:sec:stabilization}, we provide the proof that loss stabilization occurs as written in Proposition~\ref{prop:stabilization}. \upd{In Section \ref{app:sec:sde_validation}, we show experimentally that the proposed SDE model matches well the SDE dynamics}. Finally, we present additional experiments in Section \ref{app:sec:additional_experiments}.
\vspace{-0.25cm}
\begin{figure}[h]
    \centering
    \includegraphics[width=0.6\textwidth]{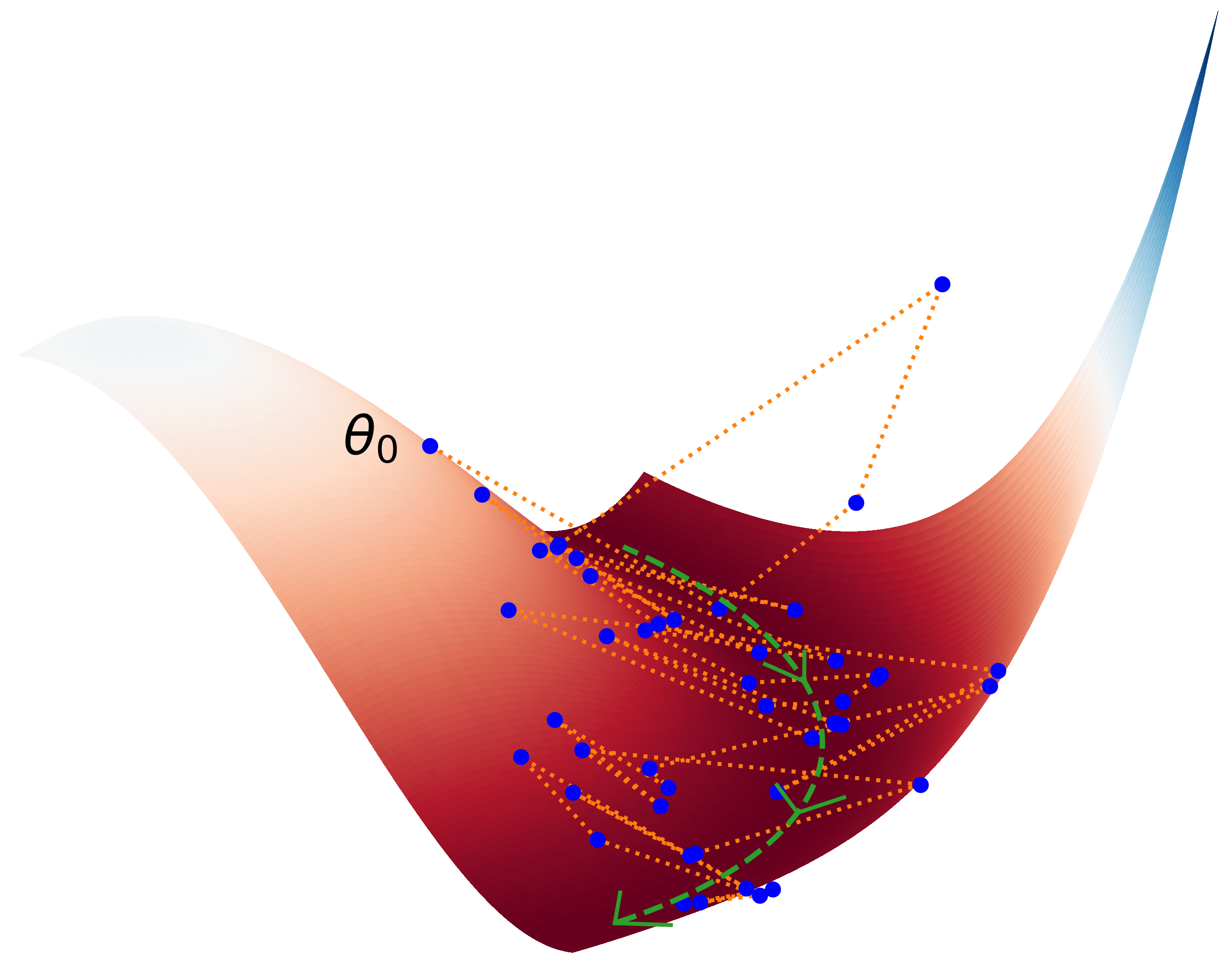}\\
    \caption{\upd{Three-dimensional visualisation of the SGD dynamics in a non-convex loss landscape. The SGD dynamics (blue points) is bouncing side-to-side to the bottom of the valley (the dotted green line). A slow movement occurs pushing the iterates in the direction given by the green arrows.}}
    \label{fig:loss_lanscape}
\end{figure}

\upd{To begin this appendix, we provide in Figure~\ref{fig:loss_lanscape} a toy visualization in which we showcase a typical SGD dynamics when loss stabilization occurs.  We run SGD on the diagonal linear network with one sample in two dimensions ($n=1, d=2$) adding label noise of the shape given by equation \eqref{eq:effective_dynamics}, with balanced layers $u = v$. The blue points corresponds to iterates of the dynamics (that are linked with the orange dotted lines). The green line corresponds to the global minimum of the loss, what can be called the ``bottom of the valley''. This hopefully will serve the reader forge a visual intuition on (i) the bouncing dynamics side-to-side to the bottom of the valley (in green), and (ii) the slow stochastic movement (in the direction of the green arrows).}

\label{sec:appendix}

\section{SGD and Label Noise GD}
\label{app:sec:sgd-label-noise}

For the sake of clarity we recall below the statement of the Proposition~\ref{prop:SGD_GD_LN} which we prove in this section.

\begin{customproposition}{\ref{prop:SGD_GD_LN}}
Let $(\theta_t)_{t \geq 0}$ follow the SGD dynamics \eqref{eq:general_SGD} with sampling function $(i_t)_{t \geq 0}$. Let $\mathbf{1}_{i = i_t}$ be indicator function, define for $t \geq 0$, the random vector $\xi_t \in \R^n$ such that for all $i \in \llbracket 1,n \rrbracket$,
\begin{align}
[\xi_{t}]_i:= ( h_{\theta_t} (x_i) - y_i) (1 - n \mathbf{1}_{i = i_t}).
\end{align}
\upd{Then $(\theta_t)_{t \geq 0}$ follows the full-batch gradient dynamics on $\LL$ with label noise $(\xi_{t})_{t \geq 0}$, that is
\begin{align}
\theta_{t+1} = \theta_t -\frac{\eta}{n} \sum_{i=1}^n (h_{\theta_t}(x_i) - y^t_{i}) \nabla_\theta h_{\theta_t}(x_i),
\end{align}
where we define the random labels $y^t := y + \xi_{t}$. Furthermore, $\xi_t$ is a mean zero random vector with variance such that $\frac{1}{n(n-1)}\E \left\| \xi_t \right\|^2= 2 \LL(\theta_t)$.}
\end{customproposition}

\begin{proof}
Note that 
\eqals{ \sum_{i=1}^n (h_{\theta_t}(x_i) - y^t_{i}) \nabla_\theta h_{\theta_t}(x_i) &=  \sum_{i=1}^n (h_{\theta_t}(x_i) - y_i - [\xi_{t}]_i ) \nabla_\theta h_{\theta_t}(x_i).}
Using $[\xi_{t}]_i:= ( h_{\theta_t} (x_i) - y_i) (1 - n \mathbf{1}_{i = i_t})$, 
\eqals{ &= \frac{1}{n} \sum_{i=1}^{n} (h_{\theta_t}(x_{i}) - y_{i} - ( h_{\theta_t} (x_i) - y_i) (1 - n \mathbf{1}_{i = i_t}) ) \nabla_\theta h_{\theta_t}(x_{i}), \\
&= \sum_{i=1}^{n} \mathbf{1}_{i = i_t}  (h_{\theta_t}(x_{i}) - y_{i} ) \nabla_\theta h_{\theta_t}(x_{i}) = (h_{\theta_t}(x_{i_t}) - y_{i_t} ) \nabla_\theta h_{\theta_t}(x_{i_t}).
} which is exactly the stochastic gradient wrt to sample $(x_{i_t},y_{i_t})$.

Now we prove the latter part of the proposition regarding the scale of the noise. Recall that, for all $i\leqslant n$, we have $[\xi_{t}]_i = ( h_{\theta_t} (x_i) - y_i) (1 - n \mathbf{1}_{i = i_t})$, where $ i_t \sim  \mathcal{U}\left( \llbracket 1,n \rrbracket \right)$. Now taking the expectation, 
\eqals{ \mathbb{E}[\xi_{t}]_i = \Ex{ ( h_{\theta_t} (x_i) - y_i) (1 - n \mathbf{1}_{i = i_t}) } = ( h_{\theta_t} (x_i) - y_i) (1 - n \Ex{\mathbf{1}_{i = i_t}} ) = 0, 
}
as $ \Ex{\mathbf{1}_{i = i_t}} = 1/n$. Coming to the variance, 
\eqals{    \E \left\| \xi_t \right\|^2 &= \Ex{ \sum_{i=1}^{n} {[\xi_{t}]_i}^2 } = \sum_{i=1}^{n}  \E {[\xi_{t}]_i}^2  \\
    &=  \sum_{i=1}^{n} ( h_{\theta_t} (x_i) - y_i)^2 \Ex{(1 - n \mathbf{1}_{i = i_t})^2 } \\
    &= \sum_{i=1}^{n} ( h_{\theta_t} (x_i) - y_i)^2 \Ex{(1 - 2 n \mathbf{1}_{i = i_t} + n^2 \mathbf{1}_{i = i_t} ) } \\
    &= \sum_{i=1}^{n} ( h_{\theta_t} (x_i) - y_i)^2 ( 1 - 2 + n ) \\
    &= (n-1)  \sum_{i=1}^{n} ( h_{\theta_t} (x_i) - y_i)^2 = 2n(n-1) \LL(\theta_{t}),
}
and this concludes the proof of the proposition.
\end{proof}

\section{Quadratic Parameterization in One Dimension} 
\label{app:sec:stabilization}

Again, for the Appendix to be self-contained, we recall the setup of the Proposition~\ref{prop:stabilization} on loss stabilization. We consider a regression problem with quadratic parameterization on one-dimensional data inputs $x_i$'s, coming from a distribution $\hat{\rho}$, and outputs generated by the linear model $y_i = x_i\theta_*^2$. The loss writes $F(\theta)  := \frac{1}{4}\mathbb{E}_{\hat{\rho}} \left(y - x\theta^2\right)^2$, and the SGD iterates with step size $\eta>0$ follow, for any $t \in \N$,
\begin{align}\label{eq:SGD_app}
  \theta_{t+1} &= \theta_{t} + \eta\, \theta_{t}\, x_{i_t} \left( y_{i_t} - x_{i_t}\theta_{t}^2\right) \qquad \text{ where } \quad x_{i_t} \sim \hat{\rho}.  
\end{align}
We rewrite the proposition here.

\begin{proposition} \normalfont{\textbf{(Extended version of Proposition~\ref{prop:stabilization})}}
\label{prop:stabilization-app}
  Assume $\exists\ x_{\min}, x_{\max} >0$ such that $\text{supp}(\hat{\rho}) \subset [x_{\min}, x_{\max}]$. Then for any $\eta \in ((\theta_* x_{\min})^{-2}, 1.25 (\theta_* x_{\max})^{-2} )$, any initialization in $\theta_0 \in (0, \theta_*)$, for $t \in \N$, we have almost surely
  \begin{align}
  \label{eq:prop_lower_bound_bouncing}
      F(\theta_t) \in \left( \epsilon_{\mathrm{o}}^2 ~ \theta_*^2 , 0.17 ~\theta_*^2 \right)  .
  \end{align}
  where $\epsilon_{\mathrm{o}} = \min \left\lbrace ( \eta (  \theta_* x_{\min})^{2} - 1 )/3, 0.02  \right\rbrace $. Also, almost surely, there exists $t,k > 0$ such that $  \theta_{t+2k} \in \left( 0.65~\theta_*, (1- \epsilon_{\mathrm{o}}) ~ \theta_* \right) $  and $\theta_{t+2k+1} \in \left( (1+ \epsilon_{\mathrm{o}})~\theta_* ,1.162 ~\theta_* \right)$.  
\end{proposition}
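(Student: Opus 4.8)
The plan is to pass to the normalized variable $u_t := \theta_t/\theta_*$, which turns the stochastic recursion \eqref{eq:SGD_app} into $u_{t+1} = u_t\,\bigl(1 + \alpha_t(1 - u_t^2)\bigr)$ with $\alpha_t := \eta\,x_{i_t}^2\,\theta_*^2$. The two hypotheses are exactly what pins $\alpha_t$ to $(1,1.25)$ almost surely: $\eta > (\theta_* x_{\min})^{-2}$ gives $\alpha_t \geq \eta(\theta_* x_{\min})^2 > 1$, and $\eta < 1.25(\theta_* x_{\max})^{-2}$ gives $\alpha_t \leq \eta(\theta_* x_{\max})^2 < 1.25$. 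Everything then reduces to the one-parameter family $\phi_\alpha(u) = u + \alpha u(1-u^2)$ with $\alpha \in (1,1.25)$, and since $F(\theta_t) = \tfrac14\mathbb{E}_{\hat\rho}[x^2]\,\theta_*^4(1-u_t^2)^2$, both displayed bounds amount to controlling how far $u_t$ stays inside a band around $1$ (upper bound on $F$) while remaining uniformly bounded away from the interpolating value $u=1$ (lower bound on $F$).

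First I would record the structure of $\phi_\alpha$. At the interpolating fixed point, $\phi_\alpha(1)=1$ and $\phi_\alpha'(1) = 1-2\alpha \in (-1.5,-1)$, so $u=1$ is repelling and iterates must oscillate across it — the source of the bouncing. The equation $\phi_\alpha(u)=1$ factors as $(u-1)(\alpha u^2+\alpha u-1)=0$, giving a unique root $u_1(\alpha)=\tfrac{-1+\sqrt{1+4/\alpha}}2 \in (0.52,0.62)$ in $(0,1)$ and no root above $1$; hence $\phi_\alpha$ maps $(u_1,1)$ strictly above $1$ and maps all of $(1,\infty)$ strictly below $1$.

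The core is a forward-invariant trapping region $J := (0.65, 1.162)$. The global maximum, attained at $u^\star=\sqrt{(1+\alpha)/(3\alpha)}$, equals $\tfrac{2(1+\alpha)^{3/2}}{3\sqrt{3\alpha}}$, which is increasing in $\alpha$ and reaches $1.162$ at $\alpha=1.25$ — the stated upper endpoint; and the smallest down-step $\min_{u\in(1,1.162)}\phi_\alpha(u)$ is attained at the corner $(u,\alpha)=(1.162,1.25)$ and equals $\approx 0.653>0.65$ — the stated lower endpoint. Since $u_1<0.65$, every $u\in(0.65,1)$ is sent above $1$ and every $u\in(1,1.162)$ below $1$, which simultaneously gives the alternation $u_{2k}<1<u_{2k+1}$ and keeps the orbit in $J$; maximizing $(1-u^2)^2$ over $J$ yields $F(\theta_t)<0.17\,\theta_*^2$. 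To reach $J$ from an arbitrary $u_0\in(0,1)$ I would use that while $u_t<0.65$ the ratio $\phi_\alpha(u_t)/u_t = 1+\alpha(1-u_t^2)>1.57$, so $u_t$ grows geometrically and enters $J$ in finitely many steps.

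The hard part will be the lower bound $F(\theta_t)>\epsilon_{\mathrm o}^2\theta_*^2$, i.e.\ that $u_t$ never approaches $1$. The band $(1-\epsilon_{\mathrm o},1+\epsilon_{\mathrm o})$ is \emph{not} one-step invariant, so instead I would prove that the good set $G := [0.65,1-\epsilon_{\mathrm o}]\cup[1+\epsilon_{\mathrm o},1.162]$ is invariant. Here the precise choice $\epsilon_{\mathrm o}=\min\{(\alpha_{\min}-1)/3,0.02\}$ is essential: for $u=1-s$ with $s\geq\epsilon_{\mathrm o}$ one has $\phi_\alpha(u)-1 = (2\alpha-1)s - 3\alpha s^2 + \alpha s^3$, and the factor $3$ is exactly what forces this above $\epsilon_{\mathrm o}$ — at the worst case $\alpha=\alpha_{\min}$, $s=\epsilon_{\mathrm o}$ the condition collapses to $(\alpha_{\min}-1)(2-\alpha_{\min})>0$, which holds since $\alpha_{\min}<1.25$. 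Using in addition the gap $0.65-u_1\geq 0.03$ to keep $u$ away from the second preimage of $1$, up-steps from $[0.65,1-\epsilon_{\mathrm o}]$ land beyond $1+\epsilon_{\mathrm o}$ and down-steps from $[1+\epsilon_{\mathrm o},1.162]$ land below $1-\epsilon_{\mathrm o}$; invariance of $G$ then delivers both the loss lower bound and the refined intervals $\theta_{t+2k}\in(0.65\theta_*,(1-\epsilon_{\mathrm o})\theta_*)$, $\theta_{t+2k+1}\in((1+\epsilon_{\mathrm o})\theta_*,1.162\theta_*)$. I expect the genuinely delicate step to be the transient: I must rule out that the monotone-increasing orbit first crosses $1$ through the thin $O(\epsilon_{\mathrm o})$-neighbourhood of $u_1$ (which maps straight into the forbidden band), and this is precisely where the ``almost surely'' over the random $\alpha_t$ is needed, since for a fixed deterministic step the conclusion can fail on a thin set of initializations. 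I would close it by exploiting that the pre-crossing iterates near $u_1$ are spaced by steps of size $\approx 0.4 \gg \epsilon_{\mathrm o}$, so the orbit jumps over the bad neighbourhood, and use the randomness of $\alpha_t$ to discard the residual measure-zero coincidences — making this quantitative with the stated constant is the part I would budget the most care for.
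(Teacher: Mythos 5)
Your reduction to the normalized recursion $u_{t+1}=u_t\bigl(1+\alpha_t(1-u_t^2)\bigr)$ with $\alpha_t\in(1,1.25)$, the trapping region $(0.65,1.162)$ obtained from the global maximum $\tfrac{2(1+\alpha)^{3/2}}{3\sqrt{3\alpha}}$ and the corner value $\hgam{1.25}(1.162)\approx 0.652$, and the geometric escape from $(0,0.65]$ all coincide with the paper's argument. Your one-step alternating invariance of $G=[0.65,1-\epsilon_{\mathrm o}]\cup[1+\epsilon_{\mathrm o},1.162]$ is also correct (the cubic $s\mapsto s[\alpha(1-s)(2-s)-1]$ has its minimum over $[\epsilon_{\mathrm o},0.35]$ at the endpoints, where the condition $\alpha_{\mathrm{min}}(1-\epsilon_{\mathrm o})(2-\epsilon_{\mathrm o})>2$ saves you), and it is arguably a cleaner endgame than the paper's, which instead sandwiches the \emph{two-step} map between $\ggam{\gamma_{\mathrm{min}}}$ and $\ggam{\gamma_{\mathrm{max}}}$ and studies its monotonicity.

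The genuine gap is in the transient. You propose to show the orbit \emph{never} enters the band $(1-\epsilon_{\mathrm o},1+\epsilon_{\mathrm o})$, dismissing a crossing of $1$ through a thin neighbourhood of the preimage $u_1(\alpha)$ as a measure-zero coincidence. It is not: when $\hat{\rho}$ is non-atomic, the position of the iterate just before crossing has a continuous law spread over an interval of macroscopic width, and the set of positions $u$ for which $\phi_\alpha(u)-1=(1-u)\bigl(\alpha u(1+u)-1\bigr)\in(0,\epsilon_{\mathrm o})$ has positive length --- for instance $u_t\approx 0.63$ with $\alpha_t$ near $1$ gives $u_{t+1}\approx 1.01\in(1,1+\epsilon_{\mathrm o})$ when $\epsilon_{\mathrm o}=0.02$. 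This is a positive-probability event, and from $(1,1+\epsilon_{\mathrm o})$ the next iterate falls into $(1-1.6\,\epsilon_{\mathrm o},1)$, so the orbit genuinely visits the forbidden band and your one-step invariance of $G$ never gets to start. The conclusion still holds because the band is repelling rather than avoided: for $|1-u|=s\le\epsilon_{\mathrm o}$ one has $|\phi_\alpha(u)-1|\ge s\bigl(\alpha_{\mathrm{min}}(1-\epsilon_{\mathrm o})(2-\epsilon_{\mathrm o})-1\bigr)>s$, so the distance to $1$ grows geometrically and the orbit exits the band in finitely many steps while remaining in the trapping region. This escape argument is exactly the paper's Lemma~\ref{prop:hoh} (two-step monotone exit from $\itvl{2}=[1-\epsilon,1)$, combined with the closure Lemma~\ref{lem:closureI}); your outline needs it and it cannot be replaced by an almost-sure exclusion.
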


\begin{proof}
\noindent Consider SGD recursion \eqref{eq:SGD_app} and note that $y = x \theta_*^2$. 
\eqals{  \theta_{t+1} = \theta_{t} + \eta \, \theta_{t} \, x( x\theta_{*}^2 - x\theta_{t}^2 )  \\
  \theta_{t+1} = \theta_{t} + \eta \, \theta_{t} \, x^2\,( \theta_{*}^2 - \theta_{t}^2 )
} For the clarity of exposition, we consider the rescaled recursion of the original SGD recursion. 
\eqals{  \nicefrac{\theta_{t+1}}{\theta_{*}} = \nicefrac{\theta_t}{\theta_{*}} + \eta~\theta_{*}^2~x^2~\nicefrac{\theta_t}{\theta_{*}}\left( 1 - \left(\nicefrac{\theta_t}{\theta_{*}}\right)^2 \right),
}
and, by making the benign change $\theta_t \leftarrow \theta_t/\theta_*$, we focus on the stochastic recursion instead,
\begin{align} \label{eq:rec-1D-main}
\theta_{t+1} = \theta_{t}  + \gamma \theta_{t} ( 1 - \theta_{t}^2), 
\end{align}
where $\gamma \sim \hat{\rho}_{\gamma}$ the pushforward of $\hat{\rho}$ under the application $z \to \eta\, \theta_*^2\, z^2 $. Let $\Gamma:=\text{supp}(\hat{\rho}_{\gamma})$, the support of the distribution of $\gamma$. From the range of $\eta$, it can be verified that $\Gamma \subseteq (1,1.25) $. Now the proof of the theorem follows from Lemma~$\ref{lem:oscillate}$.
\end{proof}

\begin{lemma}[Bounded Region] \label{lem:bounded}
  Consider the recursion~\eqref{eq:rec-1D-main}, for $\Gamma \subseteq (1,1.25)$ and  $0<\theta_{0}<1$, then for all $ t > 0 $, $\theta_{t} \in \left( 0,1.162 \right) $.
\end{lemma}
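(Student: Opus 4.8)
The plan is to treat the rescaled recursion~\eqref{eq:rec-1D-main} as the iteration of a one-parameter family of scalar maps $f_\gamma(\theta) := \theta + \gamma\,\theta(1-\theta^2) = \theta\bigl(1+\gamma(1-\theta^2)\bigr)$, and to prove that the open interval $(0,1.162)$ is forward-invariant under $f_\gamma$ for \emph{every} $\gamma \in \Gamma \subseteq (1,1.25)$. Since this holds for each realization of the $\gamma$'s drawn from $\hat{\rho}_\gamma$, the ``almost surely'' conclusion is automatic, and the lemma itself follows by induction from the base case $\theta_0 \in (0,1) \subset (0,1.162)$.

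To establish invariance I would split at the fixed point $\theta=1$. For $\theta \in (0,1]$ one has $1-\theta^2 \ge 0$, so $f_\gamma(\theta) \ge \theta > 0$ gives positivity immediately; the upper bound is the crux. Maximizing $\theta \mapsto f_\gamma(\theta)$ over $(0,1)$ through $f_\gamma'(\theta) = 1 + \gamma(1-3\theta^2)$ yields the interior maximizer $\theta^\star = \sqrt{(1+\gamma)/(3\gamma)}$ and the maximal value $g(\gamma) := \tfrac{2}{3}(1+\gamma)^{3/2}/\sqrt{3\gamma}$ (the map is increasing then decreasing on $(0,1)$, since $f_\gamma'(1) = 1-2\gamma < 0$). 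For $\theta \in (1,1.162)$ one instead has $1-\theta^2 < 0$, so $f_\gamma(\theta) < \theta < 1.162$ settles the upper bound, while positivity reduces to $1 + \gamma(1-\theta^2) > 0$, i.e. $\theta^2 < 1 + 1/\gamma$; this holds because $\theta^2 < 1.162^2 < 1.8 < 1 + 1/\gamma$ for $\gamma < 1.25$.

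The main obstacle is pinning down the sharp constant $1.162$, which comes entirely from controlling $g(\gamma)$ at the right endpoint of $\Gamma$. I would show $g$ is increasing on $(1/2,\infty)$: its logarithmic derivative equals $(2\gamma-1)/\bigl(2\gamma(1+\gamma)\bigr)$, which is positive there, so $\sup_{\gamma \in (1,1.25)} g(\gamma) = g(1.25)$. A direct evaluation gives $g(1.25) = 2\,(2.25)^{3/2}/\bigl(3\sqrt{3.75}\bigr) \approx 1.1619 < 1.162$. Because $\Gamma$ sits strictly inside $(1,1.25)$, every one-step image from $(0,1]$ is \emph{strictly} below $1.162$, and the images from $(1,1.162)$ strictly decrease; keeping all bounds strict ensures the iterates remain in the open interval $(0,1.162)$ for all $t>0$, exactly as stated.
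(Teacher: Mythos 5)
Your proposal is correct and follows essentially the same route as the paper's proof: a case split at the fixed point $\theta=1$, maximizing $\theta\mapsto\theta+\gamma\theta(1-\theta^2)$ over $(0,1]$ to obtain the value $\tfrac{2(1+\gamma)^{3/2}}{3\sqrt{3\gamma}}$, showing this is increasing in $\gamma$ and evaluating at $\gamma=1.25$ to get the bound $1.162$, and handling $\theta\in(1,1.162)$ by monotone decrease plus the algebraic positivity check $\gamma(\theta^2-1)<1$. The only (welcome) refinement is that you justify the monotonicity of $g(\gamma)$ via its logarithmic derivative, where the paper merely asserts "it can be verified."
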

\begin{proof}

Consider a single step of \eqref{eq:rec-1D-main}, for some $\gamma \in (1,1.25)$, 
\begin{align*}
  \theta_{+} = \theta + \gamma \theta ( 1 - \theta^2)
\end{align*}
The aim is to show that $\theta_{+}$ stays in the interval $(0,1.162)$. In order to show this, we do a casewise analysis. 

\noindent\underline{For $\theta \in \left(0,1\right]$}:
Since $0 < \theta \leq 1$, we have $\theta_{+} \geq \theta > 0$. To prove the bound above, consider the following quantity, 
\begin{align}
  \theta_{max} = \underset{\gamma \in (1,1.25)}{\max}~\underset{\theta \in (0,1]}{\max} ~\theta + \gamma \theta ( 1 - \theta^2)
\end{align}
Say $h_{\gamma}(\theta) = \theta + \gamma \theta ( 1 - \theta^2)$, note that $h'_{\gamma}(\theta) = 1 + \gamma - 3 \gamma \theta^2$ and $h''_{\gamma}(\theta) = - 6 \gamma \theta < 0$. Hence, for any $\gamma$ in our domain, the maximum is attained at $ \theta_{\gamma} = \frac{1}{\sqrt{3}} \sqrt{\frac{1}{\gamma} + 1} $ and $h_{\gamma}(\theta_{\gamma}) = \frac{2(1+\gamma)^{3/2}}{3\sqrt{3\gamma}}$. 

\eqals{\underset{\gamma \in (1,1.25)}{\max}~\underset{\theta \in (0,1]}{\max}~\theta + \gamma \theta ( 1 - \theta^2) = \underset{\gamma \in (.5,1.25)}{\max} \frac{2(1+\gamma)^{3/2}}{3\sqrt{3\gamma}} } It can be verified that $\frac{2(1+\gamma)^{3/2}}{3\sqrt{3\gamma}}$ is increasing with gamma in the interval $(1,1.25)$. Hence,
\begin{align}
  \underset{\gamma \in (1,1.25)}{\max} \frac{2(1+\gamma)^{3/2}}{3\sqrt{3\gamma}} \leq \left.\frac{2(1+\gamma)^{3/2}}{3\sqrt{3\gamma}}\right\vert_{\gamma = 1.25} < 1.162
\end{align}

Combining them, we get, 
\begin{align}
  \theta_{+} \leq \underset{\gamma \in (0,1.25)}{\max}~\underset{\theta \in (0,1]}{\max}~\theta + \gamma \theta ( 1 - \theta^2) < 1.162
\end{align}

\noindent\underline{For $\theta \in \left(1,1.162\right)$}: 
Since $\theta > 1$, we have, $\theta_{+} < \theta < 1.162$. For lower bound, note that for $\theta_{+}$ to be less than $0$, we need  $ 1 + \gamma - \gamma \theta^2 < 0 $. But for $\gamma \in (1,1.25)$ and $\theta \in (1,1.162) $,
\eqals{  \gamma (\theta^2 - 1) < 1.25 ( (1.162)^2 - 1) < 1. }
Hence, it never goes below $0$.
\end{proof}

\begin{lemma}
\label{lem:oscillate}
  Consider the recursion~\eqref{eq:rec-1D-main} with $\Gamma \subseteq (1,1.25)$ and $\theta_{0}$ initialized uniformly in $(0,1)$. Then, there exists $ \epsilon_0 > 0$, such that for all $ \epsilon < \epsilon_0$ there exists $t>0$ such that for any $k > 0$, 
\eqals{
\theta_{t+2k} \in (0.65,1-\epsilon) \quad and \quad \theta_{t+2k+1} \in (1+\epsilon,1.162)  
}  almost surely.
\end{lemma}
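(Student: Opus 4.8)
The plan is to exhibit a ``bouncing'' invariant region straddling the unstable fixed point $\theta=1$ and to show that the iterates are repelled from $1$, so that after a finite time they oscillate across $1$ while staying a fixed distance $\epsilon$ away from it. The computational backbone is the exact one-step factorization
\begin{align}
\label{eq:factorization_plan}
\theta_{t+1} - 1 = (\theta_t - 1)\bigl(1 - \gamma_t\,\theta_t(\theta_t+1)\bigr),
\end{align}
obtained by rewriting \eqref{eq:rec-1D-main}. Together with \cref{lem:bounded}, which confines every iterate to $(0,1.162)$, and the fact that $\Gamma$ is compact so that $\gamma_{\min}:=\min\Gamma>1$, this identity controls both the side of $1$ on which the iterate lands and its distance to $1$.

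First I would set $A=(0.65,1)$ and $B=(1,1.162)$ and check that the dynamics exchange these intervals. For $\theta\in A$ one has $h_\gamma(\theta)-1=(1-\theta)\bigl(\gamma\theta(\theta+1)-1\bigr)>0$ because $\gamma\theta(\theta+1)>1$ there, while $h_\gamma(\theta)<1.162$ by \cref{lem:bounded}; for $\theta\in B$ the same factorization gives $1-h_\gamma(\theta)=(\theta-1)\bigl(\gamma\theta(\theta+1)-1\bigr)>0$, and a short monotonicity computation (the minimum over $B\times\Gamma$ is approached at $(\theta,\gamma)\to(1.162,1.25)$, where $h_\gamma\approx0.654$) gives $h_\gamma(\theta)>0.65$. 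Hence $h_\gamma(A)\subseteq B$ and $h_\gamma(B)\subseteq A$ for every $\gamma\in(1,1.25)$. Next I would show the region is reached: for $\theta_0\in(0,1)$ the sequence is strictly increasing while it stays in $(0,1)$, so it could only converge to the fixed point $1$; but $h_\gamma(1-s)>1$ whenever $s<0.35$ (equivalently $(1-s)\gamma(2-s)>1$), so once the iterate exceeds $0.65$ its successor jumps above $1$. Thus $1$ is crossed in finite time, the iterate lands in $B$, and from then on it bounces between $A$ and $B$.

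The heart of the argument is to upgrade ``bounces across $1$'' to ``bounces with a gap''. For small $\epsilon>0$ I would introduce the separated intervals $A'_\epsilon=(0.65,1-\epsilon)$ and $B'_\epsilon=(1+\epsilon,1.162)$ and prove they are still exchanged. Using \eqref{eq:factorization_plan}, for $\epsilon$ small the minimum of $h_\gamma(\theta)-1$ over $A'_\epsilon\times\Gamma$ is attained at the right endpoint and is at least $\epsilon\bigl(\gamma_{\min}(1-\epsilon)(2-\epsilon)-1\bigr)$, which exceeds $\epsilon$ as soon as $\gamma_{\min}(1-\epsilon)(2-\epsilon)>2$; since $\gamma_{\min}>1$ this holds below a threshold set by $\gamma_{\min}-1$. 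Symmetrically $1-h_\gamma(\theta)\ge(\theta-1)(2\gamma_{\min}-1)>\epsilon$ on $B'_\epsilon$, using $\theta(\theta+1)\ge2$ for $\theta\ge1$. This makes $A'_\epsilon\cup B'_\epsilon$ invariant. To enter it, note that by continuity of $\theta\mapsto 1-\gamma\theta(\theta+1)$, whose value at $\theta=1$ is $1-2\gamma\le 1-2\gamma_{\min}<-1$, there is a radius $\rho>0$ and $c>1$ with $|1-\gamma\theta(\theta+1)|\ge c$ for $|\theta-1|\le\rho$, uniformly in $\gamma\in\Gamma$. By \eqref{eq:factorization_plan}, while the (bouncing) iterate stays within $\rho$ of $1$ its distance to $1$ is multiplied by at least $c$ each step, so it escapes the $\epsilon$-neighborhood of $1$ in finitely many steps whenever $\epsilon<\rho$; invariance of $A'_\epsilon\cup B'_\epsilon$ then traps it there forever. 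Taking $t$ to be the first subsequent time the iterate lies in $A'_\epsilon$ and invoking the alternation yields $\theta_{t+2k}\in(0.65,1-\epsilon)$ and $\theta_{t+2k+1}\in(1+\epsilon,1.162)$ for all $k$, and one reads off an admissible $\epsilon_0$ of order $\gamma_{\min}-1$ capped by a small absolute constant, matching $\epsilon_{\mathrm{o}}$ in \cref{prop:stabilization-app}.

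The main obstacle is precisely this repulsion step: the coarse bouncing region is routine, but quantifying the $\epsilon$-gap requires the uniform expansion factor $2\gamma_{\min}-1>1$, which is what ties the attainable separation $\epsilon_0$ to the margin $\gamma_{\min}-1=\eta(\theta_* x_{\min})^2-1$. Because this expansion bound is uniform in the draws $\gamma_t$, every step above holds for all realizations, so the conclusion is in fact deterministic once $\Gamma$ is bounded away from $1$; the ``almost surely'' clause becomes substantive only in the degenerate situation $\inf\Gamma=1$, where the per-step factor can be arbitrarily close to $1$ and one instead argues that $\sum_t\log|1-\gamma_t\theta_t(\theta_t+1)|\to+\infty$ almost surely by the strong law of large numbers, using $\mathbb E[\log(2\gamma-1)]>0$, so that the log-distance to $1$ diverges and the iterate leaves any fixed neighborhood of $1$ for good.
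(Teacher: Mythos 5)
Your argument is correct in the regime the paper actually needs, namely $\gamma_{\min}:=\inf\Gamma>1$, and it reaches the conclusion by a leaner route than the paper's. The overall strategy is the same: reach the bouncing region, show iterates are repelled from the fixed point $1$, and trap them in two $\epsilon$-separated intervals. The paper, however, partitions $(0,1.162)$ into $\itvl{0},\dots,\itvl{3}$ and analyzes the \emph{two-step} composition $\ggamma=\hgamma\circ\hgamma$, which requires the sandwich inequality \eqref{eq:proof-master}, the monotonicity properties \textbf{Q1}--\textbf{Q2}, and the closure Lemmas~\ref{lem:closureI} and~\ref{lem:closureI1}; the escape from $\itvl{2}=[1-\epsilon,1)$ is obtained by showing $\ggamma(\theta)<\theta$ there (Lemma~\ref{prop:hoh}). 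You instead work entirely with the one-step factorization $\theta_{+}-1=(\theta-1)\bigl(1-\gamma\theta(\theta+1)\bigr)$, prove the one-step exchange $\hgam{\gamma}(A'_\epsilon)\subseteq B'_\epsilon$ and $\hgam{\gamma}(B'_\epsilon)\subseteq A'_\epsilon$, and obtain repulsion from $1$ as a uniform multiplicative expansion of $|\theta-1|$. This dispenses with the two-step calculus while resting on the same two quantitative inputs as the paper: the endpoint value $\hgam{1.25}(1.162)>0.65$ and the inequality $\gamma_{\min}(1-\epsilon)(2-\epsilon)>2$, which is precisely what makes $\hgam{\gamma}(1-\epsilon)>1+\epsilon$ in the last display of Lemma~\ref{lem:closureI1}. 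Your version gives an explicit geometric escape rate and makes the alternation immediate from the exchange property; the paper's version instead tracks the monotone decrease of the even-indexed subsequence.

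Two points should be tightened. First, when the iterate first leaves the $\epsilon$-neighbourhood of $1$ you implicitly assume it lands inside $A'_\epsilon\cup B'_\epsilon$ rather than overshooting; this is true because for $|\theta-1|\le 0.02$ the one-step factor satisfies $\gamma\theta(\theta+1)-1\le 1.25\cdot 1.02\cdot 2.02-1<1.6$, so the first excursion is at most $1.6\,\epsilon<0.032$ from $1$, but the check belongs in the proof. Second, the closing remark about $\inf\Gamma=1$ does not hold up: the sum $\sum_t\log\bigl|1-\gamma_t\theta_t(\theta_t+1)\bigr|$ cannot diverge to $+\infty$ since $|\theta_t-1|$ is bounded, and, more importantly, the exchange property of $A'_\epsilon$ and $B'_\epsilon$ itself fails when $\gamma$ may be arbitrarily close to $1$, so the ``for all $k>0$'' conclusion is not recovered by a law-of-large-numbers argument. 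The paper quietly assumes $\gamma_{\min}>1$ (which is guaranteed in Proposition~\ref{prop:stabilization-app} because $\eta(\theta_* x_{\min})^2>1$); you should state this hypothesis rather than claim the degenerate case.
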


\begin{proof}
    Define $\gamma_{\mathrm{min}} > 1$ as the infimum of the support $\Gamma$. Let $\epsilon_{o} = \min \lbrace \nicefrac{(\gamma_{min} - 1) }{3}, 0.02 \rbrace$. Note that $\epsilon_0>0$ as $\gamma_{\mathrm{min}} > 1$. Now for any $0<\epsilon < \epsilon_{o} $, we have $\gamma_{\mathrm{min}}(2-\epsilon)(1-\epsilon) > 2$. 

Divide the interval (0,1.162) into 4 regions, $\itvl{0} = (0,0.65]$, $~\itvl{1} = (0.65,1-\epsilon)$, $~\itvl{2} = [1-\epsilon,1)$, $~\itvl{3} = (1,1.162) $.  The strategy of the proof is that the iterates will eventually end up in $\itvl{1}$ and that once it ends up in $\itvl{1}$, it comes back to $\itvl{1}$ in 2 steps.  

Let $\theta_0$  be initialized uniformly random in $(0,1)$. Consider the sequence $(\theta_{t})_{t \geq 0}$ generated by  
\begin{align}
    \theta_{t+1} = \hgam{\gamma_{t}}(\theta_{t})  :=  \theta_{t} + \gamma_{t} \theta_{t} 
(1 - \theta_{t}^2) \quad  \text{where} \quad \gamma_{t} \sim \hat{\rho}_{\gamma}.  
  \end{align}
  We prove the following facts {\bfseries (P1)-(P4)}:
  \begin{description}
    \item[(P1)] There exists $t \geq 0$ such that the $\theta_{t} \in \itvl{1} \cup \itvl{2} \cup \itvl{3}$. 
    \item[(P2)] Let $\theta_{t} \in \itvl{3}$, then $\theta_{t+1} \in \itvl{1} \cup \itvl{2}$. 
    \item[(P3)] Let $\theta_{t} \in \itvl{2}$, there exists $k>0$ such that for $k'<k$, $\theta_{t+2k'} \in \itvl{2}$ and $\theta_{t+2k} \in \itvl{1}$. 
    \item[(P4)] When $\theta_{t} \in \itvl{1}$, then for all $k\geq0$, $\theta_{t+2k} \in \itvl{1}$ and $\theta_{t+2k+1} \in (1+\epsilon,1.162)$.   
  \end{description}

  \noindent \underline{\textbf{Proof of (P1)-(P4)}:} Let $t \in \N$, note first that the event $\{\theta_t = 1\} = \cup_{k \leqslant t}\{\theta_k = 1 | \theta_{k-1} \neq 1\}$ and hence a finite union of zero measure sets. Hence $\{\theta_t = 1\}$ is a zero measure set and therefore we do not consider it below. For any other sequence, from the above four properties, we can conclude that the lemma holds.

  \noindent \underline{\textbf{Proof of P1}:}  Assume that until time $t>0$, the iterates are all in $\itvl{0}$, then we have 
\eqals{  \theta_{t}  = \theta_{t-1} ( 1 + \gamma (1-\theta_{t-1}^2) ) \geq \theta_{t-1} (2 - \theta_{t-1}^2) > 1.5 ~\theta_{t-1} > 1.5^t ~\theta_{0} } 
  Hence, the sequence eventually exits $\itvl{0}$. We know that it will stay bounded from Lemma~\ref{lem:bounded}, hence it will end up in $\itvl{1} \cup \itvl{2} \cup \itvl{3}$. 

  \noindent \underline{\textbf{Proof of P2}:} For any $\theta_{t} \in (1,1.162)$, $1 < \gamma < 1.25$, since $\hgamma(.)$ is decreasing in (1,1.162), we have  $\hgamma(1.162) < \hgamma(\theta_t) < \hgamma(1)$. Also $\hgamma(\theta)$ is linear in gamma with negative coefficient for  $\theta > 1$. Hence it decreases as $\gamma$ increases. Using this, 
\eqals{ 
    .652 = \hgam{1.25}(1.162) < \hgamma(1.162) < \hgamma(\theta_t) < \hgamma(1) = 1.
}  Hence, $ \theta_{t+1} \in \itvl{1} \cup \itvl{2}  $.

  \noindent \underline{\textbf{Proof of P3}:} The proof of this follows from Lemma~\ref{prop:hoh}. 

  \noindent \underline{\textbf{Proof of P4}:} The proof of this follows from Lemma~\ref{lem:closureI1}. 
\end{proof}

\begin{lemma}
  For any $\theta \in  \itvl{1} \cup \itvl{2} $ and any $a,b \in \Gamma$,  $ \hgam{a}(\hgam{b}(\theta)) \in \itvl{1} \cup \itvl{2} $, 
  \begin{align}\label{eq:proof-master}
    \hgam{\gamma_{\mathrm{max}}}(\hgam{\gamma_{\mathrm{max}}}(\theta)) \leq \hgam{a}(\hgam{b}(\theta)) \leq \hgam{\gamma_{\mathrm{min}}}(\hgam{\gamma_\mathrm{min}}(\theta)).
  \end{align}
\end{lemma}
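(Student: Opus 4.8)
The plan is to reduce the two-step claim to two facts: that the intermediate iterate always lands in $\itvl{3}=(1,1.162)$, and that once it does the outer step is governed by a map whose monotonicity in the step-size parameters is controlled. First I would fix $\theta\in\itvl{1}\cup\itvl{2}=(0.65,1)$ and $b\in\Gamma\subseteq(1,1.25)$, set $\psi:=\hgam{b}(\theta)$, and show $\psi\in(1,1.162)$. The upper bound $\psi<1.162$ is immediate from Lemma~\ref{lem:bounded}, since $\theta\in(0,1]$. For the lower bound I would use the factorization
\begin{align}
\hgam{b}(\theta)-1 = (1-\theta)\bigl(b\,\theta(1+\theta)-1\bigr),
\end{align}
and observe that on $(0.65,1)$ the quantity $\theta(1+\theta)$ is increasing and bounded below by $0.65\cdot1.65=1.0725$, so that $b\,\theta(1+\theta)>1$ whenever $b>1$; hence $\psi>1$. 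This places the intermediate point in $\itvl{3}$, and then the estimate already carried out in the proof of (P2) gives $\hgam{a}(\psi)\in(0.652,1)\subseteq\itvl{1}\cup\itvl{2}$, which is precisely the membership part of the statement.

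For the sandwich inequality, I would show that the two-step map $\Phi(a,b):=\hgam{a}(\hgam{b}(\theta))$, with $\theta$ held fixed, is strictly decreasing in each of $a$ and $b$ over $(1,1.25)$. Writing $\hgam{a}(\psi)=\psi+a\psi(1-\psi^2)$ with $\psi=\hgam{b}(\theta)$ independent of $a$, the outer partial derivative is $\partial_a\Phi=\psi(1-\psi^2)<0$ because $\psi>1$. For the inner parameter the chain rule gives $\partial_b\Phi=\hgam{a}'(\psi)\,\partial_b\psi$, where $\partial_b\psi=\theta(1-\theta^2)>0$ (as $\theta<1$) and the derivative of the outer map in its argument satisfies $\hgam{a}'(\psi)=1+a-3a\psi^2<1-2a<0$ for $\psi>1$ and $a>1$; hence $\partial_b\Phi<0$. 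The key point is that Step~1 guarantees $\psi>1$ for \emph{every} $b$ in the range, so the sign of $\hgam{a}'(\psi)$ is fixed along the whole path.

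Finally, since $a,b\in[\gamma_{\mathrm{min}},\gamma_{\mathrm{max}}]$ and $\Phi$ decreases in both arguments, evaluating at the extreme corners yields $\Phi(\gamma_{\mathrm{max}},\gamma_{\mathrm{max}})\le\Phi(a,b)\le\Phi(\gamma_{\mathrm{min}},\gamma_{\mathrm{min}})$, which is exactly \eqref{eq:proof-master}. The main obstacle, and what makes the monotonicity argument go through cleanly, is the lower bound $\psi>1$ of Step~1: it simultaneously fixes the signs of $1-\psi^2$ and of $\hgam{a}'(\psi)$, forcing the two-step composition to be decreasing in both $\gamma$'s. Without knowing the intermediate point crosses above $1$, the outer step could be increasing in $\psi$ and the monotone sandwich would break down; so the numerical verification that $\theta(1+\theta)>1.0725$ on $(0.65,1)$ is the crux of the whole lemma.
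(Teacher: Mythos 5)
Your proof is correct, and for the sandwich inequality it is in substance the same argument as the paper's: both rest on (i) the factorization $\hgam{b}(\theta)-1=(1-\theta)\bigl(b\,\theta(1+\theta)-1\bigr)$ to show the intermediate iterate exceeds $1$, (ii) monotone increase of $\hgam{b}(\theta)$ in $b$ when $\theta<1$, and (iii) monotone decrease of $\hgam{a}(x)$ both in $x$ and in $a$ once $x>1$; you phrase (ii)--(iii) via partial derivatives of $\Phi(a,b)$ while the paper chains the inequalities directly, but the content is identical. Where you genuinely diverge is the membership claim $\hgam{a}(\hgam{b}(\theta))\in\itvl{1}\cup\itvl{2}$: the paper's proof of this lemma in fact only establishes the sandwich and defers membership to Lemma~\ref{lem:closureI}, which derives it from the sandwich together with the properties \textbf{Q1}--\textbf{Q2} of the composed map $\ggamma$ (its unique interior minimum at $\theta_\gamma$ and the numerical checks $\ggam{1.25}(\theta_{1.25})>0.65$, $\ggam{1}(0.65)<0.98$). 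You instead prove membership directly and more elementarily: having pinned the intermediate point $\psi$ into $(1,1.162)$, you reuse the one-step estimate from \textbf{(P2)}, $0.652=\hgam{1.25}(1.162)<\hgam{a}(\psi)<\hgam{a}(1)=1$, which lands in $(0.65,1)=\itvl{1}\cup\itvl{2}$ without ever analyzing the composed map $\ggamma$. Your route is shorter and self-contained for this lemma; the paper's detour through $\ggamma$ is not wasted, however, since \textbf{Q1}--\textbf{Q2} are needed anyway for the finer statements in Lemmas~\ref{prop:hoh} and~\ref{lem:closureI1}.
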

\begin{proof}
For any $ \gamma \in \Gamma $, recall
\eqals{  \hgamma(\theta) &= \theta + \gamma \theta ( 1 - \theta^2) = 1 + (1-\theta) (\gamma \theta (1+\theta) - 1). } Note that for $\theta \in \itvl{1} \cup \itvl{2} $, $ \theta(1+\theta) >  1$, Hence $\gamma \theta (1+\theta) > 1$. This gives us that $\hgamma(\theta) > 1$.  Now we will track where $\theta \in \itvl{1} \cup \itvl{2} $ can end up after two stochastic gradient steps.  
\begin{itemize}
  \item For any $b \in \Gamma$, as $\theta \in \itvl{1} \cup \itvl{2}$, we have $$ \hgam{\gamma_{\mathrm{max}}}(\theta) \geq \hgam{b}(\theta) \geq \hgam{\gamma_\mathrm{min}}(\theta) > 1, $$ note $\hgam{\gamma_{\mathrm{max}}}(\theta) \geq \hgam{b}(\theta) \geq \hgam{\gamma_\mathrm{min}}(\theta)$  holds since $\theta < 1$. 
  \item Now for any $a \in \Gamma$ and $x > 1$, $\hgam{a}(x)$ is a decreasing function in $x$. Hence $$ \hgam{a}(\hgam{\gamma_{\mathrm{max}}}(\theta)) \leq \hgam{a}(\hgam{b}(\theta)) \leq \hgam{a}(\hgam{\gamma_\mathrm{min}}(\theta)). $$ Using $\gamma_{\mathrm{min}} \leq a $, $ \hgam{a}(\hgam{\gamma_\mathrm{min}}(\theta)) \leq \hgam{\gamma_\mathrm{min}}(\hgam{\gamma_\mathrm{min}}(\theta))$, Similarly using $\gamma_{\mathrm{max}} > a$, we have,  $\hgam{\gamma_{\mathrm{max}}}(\hgam{\gamma_{\mathrm{max}}}(\theta)) \leq \hgam{a}(\hgam{\gamma_{\mathrm{max}}}(\theta)) $. Combining them we get, 
  \begin{align}
    \hgam{\gamma_{\mathrm{max}}}(\hgam{\gamma_{\mathrm{max}}}(\theta)) \leq \hgam{a}(\hgam{b}(\theta)) \leq \hgam{\gamma_{\mathrm{min}}}(\hgam{\gamma_\mathrm{min}}(\theta)).
  \end{align}
\end{itemize}
Similar argument can extend it to, 
  \begin{align} \label{eq:lower-upper}
    \hgam{1.25}(\hgam{1.25}(\theta)) < \hgam{a}(\hgam{b}(\theta)) < \hgam{1}(\hgam{1}(\theta)).
  \end{align}
\end{proof}

\begin{lemma}\label{prop:hoh} 
  Let $\theta_{t} \in \itvl{2}$, there exists $k>0$ such that $\theta_{t+2k} \in \itvl{1}$. 
\end{lemma}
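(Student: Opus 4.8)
The plan is to exploit that $\theta=1$ is a \emph{repelling} fixed point of the recursion and show that the two-step map strictly inflates the distance $1-\theta$ as long as the iterate stays in $\itvl{2}=[1-\epsilon,1)$, so that after finitely many even steps this distance exceeds $\epsilon$ and the iterate is forced into $\itvl{1}$. The starting point is the comparison bound \eqref{eq:proof-master} already proved above: for $\theta\in\itvl{1}\cup\itvl{2}$ and any $a,b\in\Gamma$ one has $\hgam{a}(\hgam{b}(\theta))\in\itvl{1}\cup\itvl{2}$ together with $\hgam{a}(\hgam{b}(\theta))\leq\hgam{\gamma_{\mathrm{min}}}(\hgam{\gamma_{\mathrm{min}}}(\theta))$. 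Hence it suffices to bound the deterministic two-step map $G(\theta):=\hgam{\gamma_{\mathrm{min}}}(\hgam{\gamma_{\mathrm{min}}}(\theta))$ from above, since $\theta_{t+2}\leq G(\theta_t)$ holds for every realization of the samples.

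First I would analyze a single step out of $\itvl{2}$ using the identity $\hgamma(\theta)=1+(1-\theta)(\gamma\theta(1+\theta)-1)$. Writing $d=1-\theta\in(0,\epsilon]$, the first step gives $\hgam{\gamma_{\mathrm{min}}}(\theta)=1+dA$ with $A=\gamma_{\mathrm{min}}(1-d)(2-d)-1$. Because $(1-d)(2-d)$ is decreasing in $d$, the defining choice $\gamma_{\mathrm{min}}(2-\epsilon)(1-\epsilon)>2$ yields $A\geq\gamma_{\mathrm{min}}(1-\epsilon)(2-\epsilon)-1>1$, so the iterate \emph{overshoots} to $\theta_1:=1+dA\in(1,1.162)$ (the upper bound coming from Lemma~\ref{lem:bounded}), with offset $dA$ strictly larger than $d$.

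Next I would feed $\theta_1>1$ into the same step: using $1-\theta_1=-dA$, the second application gives $G(\theta)=1-dAB$ with $B=\gamma_{\mathrm{min}}\theta_1(1+\theta_1)-1$. Since $\theta_1>1$ forces $\theta_1(1+\theta_1)>2$, we get $B>2\gamma_{\mathrm{min}}-1>1$. Therefore $1-G(\theta)=dAB\geq\rho\,d$, where $\rho:=\bigl(\gamma_{\mathrm{min}}(1-\epsilon)(2-\epsilon)-1\bigr)(2\gamma_{\mathrm{min}}-1)>1$ is a fixed constant. Combined with the comparison bound, this shows that whenever $\theta_t\in\itvl{2}$ we have $1-\theta_{t+2}\geq 1-G(\theta_t)\geq\rho\,(1-\theta_t)$: the two-step map increases the distance to $1$ by at least the factor $\rho>1$.

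To conclude I would iterate this inequality. As long as the even iterates remain in $\itvl{2}$, we get $1-\theta_{t+2k}\geq\rho^{k}(1-\theta_t)$ with $1-\theta_t>0$; since $\rho>1$ the right-hand side eventually exceeds $\epsilon$, so there is a smallest $k>0$ at which $\theta_{t+2k}\notin\itvl{2}$. By the invariance in \eqref{eq:proof-master} the iterate stays in $\itvl{1}\cup\itvl{2}$ at every even step, and $1-\theta_{t+2k}>\epsilon$ excludes $\itvl{2}$, placing $\theta_{t+2k}\in\itvl{1}$. The main obstacle is the middle two steps: establishing the \emph{uniform} multiplicative gap $\rho>1$ requires correctly tracking the sign flip across $\theta=1$ (first step overshoots above $1$, second returns below it with a strictly larger offset) and extracting the constants $A,B$ bounded away from $1$ directly from the inequality on $\epsilon$; everything else is routine bookkeeping with the already-proven comparison and boundedness lemmas.
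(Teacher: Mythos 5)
Your proof is correct and follows essentially the same route as the paper: both reduce to the worst-case deterministic two-step map $\hgam{\gamma_{\mathrm{min}}}\circ\hgam{\gamma_{\mathrm{min}}}$ via the comparison bound \eqref{eq:proof-master}, exploit the factorization $1-\hgamma(\theta)=(1-\theta)\left(1-\gamma\theta(1+\theta)\right)$ across the sign flip at $\theta=1$, and invoke the closure lemma to conclude the iterate lands in $\itvl{1}$. Your uniform expansion factor $\rho>1$ for the distance $1-\theta$ is a slightly sharper way to finish than the paper's argument that the strictly decreasing even subsequence has no limit point in $\itvl{2}$, but the underlying estimates (your $A$ and $B$ are exactly the paper's factors $\gamma\theta(1+\theta)-1$ evaluated at $\theta$ and at $\theta_{+}$) are identical.
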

\begin{proof} For any $\gamma \in \Gamma$, let $\thetap = \hgamma(\theta) $, then we have
\eqals{ \hgamma(\hgamma(\theta)) - \theta = \hgamma(\theta_{+}) - \theta= \gamma \theta ( 1 - \theta^2) + \gamma  \thetap ( 1 - \thetap^2). 
}Furthermore,
\eqals{    \thetap &=  \theta + \gamma \theta ( 1 - \theta^2) = \theta( 1 + \gamma (1-\theta^2) ), \\
    1 + \theta_{+} &= 1 + \theta + \gamma \theta ( 1 - \theta^2) =  (1+\theta) ( 1 + \gamma \theta (1 - \theta) ), \\
    1 - \theta_{+} &= 1 - \theta - \gamma \theta ( 1 - \theta^2) =  (1-\theta) ( 1 - \gamma \theta (1 + \theta) ). 
}    And multiplying the above three terms and adding  $\theta (1-\theta^2)$, we get, 
\eqals{    \thetap ( 1- \thetap^2) + \theta (1-\theta^2)  &= \theta (1-\theta^2) \lbrace 1 +  \underbrace{\left[ ( 1 + \gamma (1-\theta^2) ) ( 1 + \gamma \theta (1 - \theta)) ( 1 - \gamma \theta (1 + \theta) ) \right]}_{P(\theta)} \rbrace
}
For $\theta \in \itvl{2}$, using $\gamma_{\mathrm{min}}(2-\epsilon)(1-\epsilon) > 2$, we have the inequalities
\eqals{  ( 1 + \gamma (1-\theta^2) ) ( 1 + \gamma \theta (1 - \theta)) > 1, \\
  ( 1 - \gamma \theta (1 + \theta) ) < 1 - \gamma_{min} (2 - \epsilon) (1-\epsilon) < -1 ,  \\
  P(\theta) < -1. 
}
Hence, 
\begin{align}
\label{eq:i2}
  \hgamma(\hgamma(\theta)) - \theta =  \gamma (1-\theta^2) ( 1 + P(\theta)) < 0.
\end{align}
Therefore, for $[1-\epsilon,1)$, for any $\gamma \in \Gamma$, $ \hgamma(\hgamma(\theta)) < \theta $. Hence for any two stochastic gradient step with $a,b \in \Gamma$, from \eqref{eq:proof-master}, $ \theta_{t+2} = \hgam{a}(\hgam{b}(\theta_t)) \leq \hgam{\gamma_{\mathrm{min}}}(\hgam{\gamma_{\mathrm{min}}}(\theta_t)) < \theta_{t} $.  From any point in $\itvl{2}$, we have $| \theta_{t+2} - 1| > |\theta_{t} - 1|$, for any $a,b \in \Gamma$. Intutively this means that in \textit{two gradient steps} the iterates move \textit{further away from 1} until it eventually leaves the interval $\itvl{2}$ as the sequence $\lbrace\theta_{t+2k}\rbrace_{k \geq 0}$ is strictly decreasing with no limit point in $\itvl{2}$. From Lemma~\ref{lem:closureI} , we know that in two steps the iterates will never leave $\itvl{1} \cup \itvl{2}$. Hence they will eventually end up in $\itvl{1}$ leaving $\itvl{2}$. 
\end{proof}

\begin{property} Define $ \ggamma(\theta) := \hgamma(\hgamma(\theta)) $ for the sake of brevity. The followings properties hold for $\theta \in \itvl{1} \cup \itvl{2}$, $\gamma \in \Gamma$ and  $\theta_{\gamma}$ the root of $~\hgamma^{'}(\theta)$:
 \begin{description}
    \item[Q1]  $\ggamma(\theta) \geq \ggamma(\theta_{\gamma}) $.
    \item[Q2] The function $\ggamma(.)$ is decreasing in $[0.65,\theta_{\gamma})$ and increasing in $(\theta_{\gamma},1]$. 
  \end{description}
\end{property}

\begin{proof}
  Note $\hgamma^{'}(\theta) =  1 + \gamma - \gamma 3 \theta^2 $ has at most one root $\theta_{\gamma} \in (0,1)$. Note that for all $\gamma \in \Gamma$, $\theta_{\gamma} \in \itvl{1} \cup \itvl{2}$. For any $\gamma$, $\ggamma^{'}(\theta) =\hgamma^{'}(\hgamma(\theta)) \hgamma^{'}(\theta)$. For any $\theta \in \itvl{1} \cup \itvl{2}$, we have, $\hgamma(\theta) > 1 \implies \hgamma^{'}(\hgamma(\theta)) < 0$. Therefore, $\ggamma^{'}(\theta)$ has only one root in $\itvl{1} \cup \itvl{2}$. Since  $\theta_{\gamma} \in \itvl{1} \cup \itvl{2}$, note $\ggamma^{''}(\theta_{\gamma}) = \hgamma^{'}(\hgamma(\theta_{\gamma})) \hgamma^{''}(\theta_{\gamma}) > 0 $. Therefore,  $\ggamma(.)$ attains its minimum at $\theta_{\gamma}$ and this shows the desired properties.
  \end{proof}

  \begin{lemma} \label{lem:closureI}
    For any $\theta \in \itvl{1} \cup \itvl{2} $ and any $a,b \in \Gamma$,  $ \hgam{a}(\hgam{b}(\theta)) \in \itvl{1} \cup \itvl{2}$. 
  \end{lemma}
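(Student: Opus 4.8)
The plan is to avoid touching the two-step composition head-on and instead piggyback on the monotone sandwich already established in the preceding lemma. Its conclusion \eqref{eq:lower-upper} gives, for every $\theta \in \itvl{1} \cup \itvl{2}$ and all $a,b \in \Gamma \subseteq (1,1.25)$,
\[
\hgam{1.25}(\hgam{1.25}(\theta)) < \hgam{a}(\hgam{b}(\theta)) < \hgam{1}(\hgam{1}(\theta)),
\]
so it suffices to pin the two extreme compositions inside $(0.65,1)$: I would show $\hgam{1}(\hgam{1}(\theta)) < 1$ to certify the upper wall and $\hgam{1.25}(\hgam{1.25}(\theta)) > 0.65$ to certify the lower wall. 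Both reduce to one-dimensional estimates on the scalar cubic via the factorisation $\hgamma(\theta) = 1 + (1-\theta)\bigl(\gamma\theta(1+\theta) - 1\bigr)$ that was already exploited to prove $\hgamma(\theta) > 1$.

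For the upper wall I would argue in two moves. First, $\hgam{1}(\theta) > 1$ on $\itvl{1}\cup\itvl{2}$: since $\theta < 1$ the factor $(1-\theta)$ is positive, while $\theta(1+\theta) > 0.65\cdot 1.65 > 1$ makes the bracket positive. Second, setting $x := \hgam{1}(\theta) > 1$ and feeding it back into the same factorisation, now $(1-x) < 0$ while $x(1+x) - 1 > 1 > 0$, so the product is negative and $\hgam{1}(x) < 1$. This yields $\hgam{1}(\hgam{1}(\theta)) < 1$ with essentially no numerics.

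For the lower wall I would first track $\hgam{1.25}(\theta)$: it exceeds $1$ by the same positivity argument, and it is bounded above by $1.162$ thanks to Lemma~\ref{lem:bounded}, so $\hgam{1.25}(\theta) \in (1,1.162)$. On that interval $\hgam{1.25}$ is strictly decreasing (its derivative $1 + \gamma - 3\gamma\theta^2$ is negative for $\theta > 1$), hence its value is minimised at the right endpoint and $\hgam{1.25}(\hgam{1.25}(\theta)) > \hgam{1.25}(1.162) = 0.652 > 0.65$, reusing the evaluation already made in the proof of (P2). Combining the two walls gives $\hgam{a}(\hgam{b}(\theta)) \in (0.652, 1) \subset \itvl{1} \cup \itvl{2}$, and the open endpoints cause no issue because both bounds are strict.

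The one delicate point, and what I expect to be the main obstacle, is that the lower bound is numerically tight: the worst-case two-step image sits at $0.652$, only marginally above the threshold $0.65$ that defines the left edge of $\itvl{1}$. So the plan hinges on verifying that $\gamma = 1.25$ together with the endpoint argument $x = 1.162$ genuinely realise the minimum — that is, that $\hgamma(x)$ decreases both in $x$ on $(1,1.162)$ and in $\gamma$ there (it is affine in $\gamma$ with coefficient $\theta(1-\theta^2) < 0$ for $\theta > 1$), so that pushing $\gamma$ up to $1.25$ and the argument up to $1.162$ is truly the worst case that the sandwich \eqref{eq:lower-upper} must dominate. Everything else is a routine monotonicity check on the cubic.
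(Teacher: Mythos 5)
Your proof is correct, and it follows the same skeleton as the paper's: both arguments reduce the two-parameter composition $\hgam{a}\circ\hgam{b}$ to the two extreme compositions via the sandwich \eqref{eq:lower-upper} and then verify that $\hgam{1}(\hgam{1}(\theta))<1$ and $\hgam{1.25}(\hgam{1.25}(\theta))>0.65$. Where you differ is in how these two one-dimensional bounds are obtained. The paper works with the composed map $\ggamma=\hgamma\circ\hgamma$ and its variational structure (Properties \textbf{Q1}--\textbf{Q2}): the lower wall comes from evaluating $\ggam{1.25}$ at its unique minimizer $\theta_{1.25}$, and the upper wall from checking $\ggam{1}$ at the endpoints $0.65$ and $1$. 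You instead argue one step at a time: for the upper wall you observe from the factorisation $\hgamma(\theta)=1+(1-\theta)\bigl(\gamma\theta(1+\theta)-1\bigr)$ that the first step lands strictly above $1$ and that $\hgam{1}$ maps $(1,\infty)$ strictly below $1$, which is cleaner than the paper's numerical check $\ggam{1}(0.65)<0.98$; for the lower wall you combine Lemma~\ref{lem:bounded} (first step lands below $1.162$) with the monotone decrease of $\hgam{1.25}$ on $(1,1.162)$ to get $\hgam{1.25}(\hgam{1.25}(\theta))>\hgam{1.25}(1.162)\approx 0.653>0.65$, which is numerically the same worst case the paper reaches through $\ggam{1.25}(\theta_{1.25})$ (the paper's $\theta_{1.25}$ is exactly the preimage of the maximal first-step value $\approx 1.1619$). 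Your concern about whether $\gamma=1.25$ and $x=1.162$ truly realise the worst case is already discharged by the preceding lemma, which establishes monotonicity in both $\gamma$-slots; so nothing is missing, and your route is marginally more elementary in that it never needs the second-derivative analysis of $\ggamma$.
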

  \begin{proof}
  
  \noindent\underline{\textbf{Lower Bound:}}   From \eqref{eq:lower-upper}, we know
\eqals{    \hgam{1.25}(\hgam{1.25}(\theta)) < \hgam{a}(\hgam{b}(\theta)) }
  We know that from property \textbf{Q1} that $ \ggamma(\theta) \geq \ggamma(\theta_{\gamma})$. 
Hence 
\eqals{  \ggam{1.25}(\theta_{1.25}) < \ggam{1.25}(\theta) < \hgam{a}(\hgam{b}(\theta)) }
It can be quickly checked that $ .65 < \ggam{1.25}(\theta_{1.25}) $. Hence the lower bound holds. 

\noindent\underline{\textbf{Upper Bound:}}  From \eqref{eq:lower-upper}, we know
\eqals{   \hgam{a}(\hgam{b}(\theta)) < \hgam{1}(\hgam{1}(\theta)) }
We know that from property \textbf{Q2} that $ \ggam{1}(\theta) \leq \textrm{max} \{ \ggam{1}(1),  \ggam{1}(0.65) \}$. It can be easily verified that $ \ggam{1}(0.65) < 0.98 $. Hence $ \ggam{1}(\theta) < 1$.  

\end{proof}

\begin{lemma}  \label{lem:closureI1}    For any $\theta \in \invl_{1}$ and any $a,b \in \Gamma$,  $ \hgam{a}(\hgam{b}(\theta)) \in \invl_{1}$ and $\hgam{a}(\theta) \in (1+\epsilon,1.162) $. 
\end{lemma}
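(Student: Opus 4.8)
The statement bundles two claims for every $\theta \in \itvl{1}$ and $a,b \in \Gamma$: a \emph{one-step overshoot}, $\hgam{a}(\theta) \in (1+\epsilon, 1.162)$, and a \emph{two-step invariance}, $\hgam{a}(\hgam{b}(\theta)) \in \itvl{1}$. The plan is to treat them separately. For the two-step invariance I would first invoke Lemma~\ref{lem:closureI}, which already guarantees $\hgam{a}(\hgam{b}(\theta)) \in \itvl{1} \cup \itvl{2} = (0.65, 1)$; this settles the lower bound $\hgam{a}(\hgam{b}(\theta)) > 0.65$ for free, so the only genuinely new content is the strict upper bound $\hgam{a}(\hgam{b}(\theta)) < 1-\epsilon$, i.e.\ that two steps starting in $\itvl{1}$ cannot reach the boundary layer $\itvl{2}$.

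For the one-step claim, the upper bound $\hgam{a}(\theta) < 1.162$ is immediate from Lemma~\ref{lem:bounded}. For the lower bound I would exploit that $\hgam{a}(\theta) = \theta + a\,\theta(1-\theta^2)$ is increasing in $a$ on $\theta \in (0,1)$, so since $a \geq \gamma_{\mathrm{min}}$ it suffices to bound $\hgam{\gamma_{\mathrm{min}}}(\theta)$ from below. Writing $p(\theta) := \hgam{\gamma_{\mathrm{min}}}(\theta) - 1 = -\gamma_{\mathrm{min}}\theta^3 + (\gamma_{\mathrm{min}}+1)\theta - 1$, one checks $p''(\theta) = -6\gamma_{\mathrm{min}}\theta < 0$, so $p$ is concave and its minimum over $\itvl{1}$ is attained at an endpoint. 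At the left endpoint $p(0.65) = 0.375375\,\gamma_{\mathrm{min}} - 0.35 > 0.025 > \epsilon$ (using $\gamma_{\mathrm{min}} > 1$ and $\epsilon < 0.02$), and at the right endpoint $p(1-\epsilon) = \epsilon\bigl(\gamma_{\mathrm{min}}(1-\epsilon)(2-\epsilon) - 1\bigr) > \epsilon$ by the inequality $\gamma_{\mathrm{min}}(2-\epsilon)(1-\epsilon) > 2$ established at the start of Lemma~\ref{lem:oscillate}. Hence $\hgam{a}(\theta) > 1 + \epsilon$ throughout $\itvl{1}$.

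For the upper bound of the two-step map I would use the monotone sandwich \eqref{eq:proof-master}, which gives $\hgam{a}(\hgam{b}(\theta)) \le \ggam{\gamma_{\mathrm{min}}}(\theta)$, and then reduce to an endpoint estimate via Property~\textbf{Q2}: since $\ggam{\gamma_{\mathrm{min}}}$ decreases then increases on $[0.65,1)$ with interior minimum at $\theta_{\gamma_{\mathrm{min}}}$, its value on $\itvl{1}$ is bounded above by the larger of $\ggam{\gamma_{\mathrm{min}}}(0.65)$ and $\ggam{\gamma_{\mathrm{min}}}(1-\epsilon)$. At the right endpoint, $1-\epsilon \in \itvl{2}$ and \eqref{eq:i2} (from Lemma~\ref{prop:hoh}) gives $\ggam{\gamma_{\mathrm{min}}}(1-\epsilon) < 1-\epsilon$. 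At the left endpoint, a direct evaluation shows $\ggam{\gamma_{\mathrm{min}}}(0.65) < 0.98 < 1-\epsilon$ for all $\gamma_{\mathrm{min}} \in (1,1.25)$, in the same spirit as the upper-bound check already used in Lemma~\ref{lem:closureI}. Combining the two endpoints yields $\ggam{\gamma_{\mathrm{min}}}(\theta) < 1-\epsilon$ on $\itvl{1}$, hence $\hgam{a}(\hgam{b}(\theta)) < 1-\epsilon$, which together with the lower bound from Lemma~\ref{lem:closureI} gives $\hgam{a}(\hgam{b}(\theta)) \in \itvl{1}$.

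The main obstacle I anticipate is the left-endpoint estimate $\ggam{\gamma_{\mathrm{min}}}(0.65) < 0.98$: unlike the clean algebraic endpoint at $1-\epsilon$, this requires controlling the two-fold composition $\hgam{\gamma_{\mathrm{min}}}(\hgam{\gamma_{\mathrm{min}}}(0.65))$ uniformly over $\gamma_{\mathrm{min}} \in (1,1.25)$. Here I would argue monotonicity: as $\gamma_{\mathrm{min}}$ grows the first step $\hgam{\gamma_{\mathrm{min}}}(0.65)$ increases past $1$, while the outer $\hgamma$ is decreasing both in its argument (when it exceeds $1$) and in $\gamma$, so $\ggam{\gamma_{\mathrm{min}}}(0.65)$ is decreasing in $\gamma_{\mathrm{min}}$ and is maximized as $\gamma_{\mathrm{min}} \to 1^+$, where it equals $\approx 0.973 < 0.98$. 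The remaining care is purely in verifying that the hard-coded constants ($0.65$, $0.98$, $1.162$, and the threshold $0.02$ on $\epsilon$) stay mutually consistent; everything else reduces either to concavity/unimodality (pushing extrema to interval endpoints) or to the two structural inequalities on $\gamma_{\mathrm{min}}$ already in hand.
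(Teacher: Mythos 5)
Your proposal is correct and follows essentially the same route as the paper's proof: the lower bound of the two-step map from Lemma~\ref{lem:closureI}, the upper bound via the sandwich \eqref{eq:proof-master}, Property \textbf{Q2}, and the two endpoint evaluations $\ggam{\gamma_{\mathrm{min}}}(1-\epsilon)<1-\epsilon$ (from \eqref{eq:i2}) and $\ggam{\gamma_{\mathrm{min}}}(0.65)<0.98$, plus the one-step lower bound by reducing to the endpoints $0.65$ and $1-\epsilon$ and using $\gamma_{\mathrm{min}}(2-\epsilon)(1-\epsilon)>2$. Your concavity argument for $p$ and the explicit $\gamma$-monotonicity of $\ggam{\gamma}(0.65)$ are just slightly more detailed justifications of steps the paper handles via unimodality and \eqref{eq:lower-upper}.
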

\begin{proof}
  The lower bound in Lemma~\ref{lem:closureI} holds here. For the upper  bound, from  and \eqref{eq:proof-master}, 
\eqals{    \hgam{a}(\hgam{b}(\theta))  \leq \hgam{\gamma_{\mathrm{min}}}(\hgam{\gamma_\mathrm{min}}(\theta)).}  Using property \textbf{Q2}, 
\eqals{    \hgam{\gamma_{\mathrm{min}}}(\hgam{\gamma_\mathrm{min}}(\theta)) \leq  \textrm{max} \{ \ggam{\gamma_\mathrm{min}}(1 - \epsilon),  \ggam{\gamma_\mathrm{min}}(0.65) \}
}  From \eqref{eq:i2}, $\ggam{\gamma_\mathrm{min}}(1 - \epsilon) < 1- \epsilon $. From \eqref{eq:lower-upper}, $\ggam{\gamma_\mathrm{min}}(0.65) < \ggam{1}(0.65) < 0.98 < 1 - \epsilon $. 
  In $\itvl{1}$, the function $\hgam{a}(.)$ first increases reaches maximum and decreases. Hence for $\theta \in \itvl{1}$, $ \hgam{a}(\theta) \geq \mathrm{min}\{ \hgam{a}(0.65), \hgam{a}(1-\epsilon) \}$ .
\eqals{    \hgam{a}(1-\epsilon) &\geq 1 - \epsilon + a (1-(1-\epsilon)^2)(1-\epsilon), \\
    &=  1 - \epsilon + a ( 2\epsilon - \epsilon^2)(1-\epsilon), \\
    &\geq 1 - \epsilon + \gamma_{\mathrm{min}} ( 2\epsilon - \epsilon^2)(1-\epsilon), \\
    &= 1 + \epsilon + \epsilon \left( \gamma_{\mathrm{min}} ( 2 - \epsilon)(1-\epsilon) -2 \right) > 1 + \epsilon.
}
Also $\hgam{a}(0.65) > \hgam{1}(0.65) > 1.02> 1+ \epsilon$, therefore $\hgam{a}(\theta) > 1+\epsilon$ and this completes the proof. 
\end{proof}

\section{\upd{Empirical Validation of the SDE Modeling}}
\label{app:sec:sde_validation}
\upd{
In this section, we experimentally check the validity of the SDE modeling of SGD in \eqref{eq:effective_dynamics_general} in terms of the key metrics: training loss,  test loss, rank of the Jacobian, and feature sparsity.
}

\upd{
\myparagraph{SDE discretization.}
Let $\gamma_t$ be the SDE discretization step size, $\eta_t$ the step size of the corresponding SGD that we aim to validate, $\delta_t$ the noise intensity level, and $Z_t \sim \mathcal{N}(0, I_n)$. Then we discretize the SDE from \eqref{eq:effective_dynamics_general} as follows:
\begin{align}
    \label{eq:sde_discretization}
    \theta_{t+1} = \theta_t - \gamma_t \nabla_\theta \LL(\theta_t) + \sqrt{\gamma_t} \sqrt{\eta_t \delta_t}\, \phi_{\theta_t} (X)^\top Z_t.
\end{align}
To approximate continuous time, we use a small discretization step size $\gamma_t := \eta_t / 10$ and run the discretization for $10\times$ longer than the corresponding SGD run.
We use $\eta_t := \eta_{\floor{t/10}}^{SGD}$ and $\delta_t := c \cdot \LL(\theta_{\floor{t/10}}^{SGD})$ where $c$ is a constant that we select for each setting separately to match the training dynamics of the corresponding SGD run. In addition, we also evaluate a discretization of gradient flow (i.e., \eqref{eq:sde_discretization} without the noise term) which helps to draw conclusions about the role of the noise term.
}

\begin{figure}[t!]
    \centering
    \footnotesize
    \upd{\textbf{Diagonal linear networks}}\\
    \includegraphics[width=0.99\textwidth]{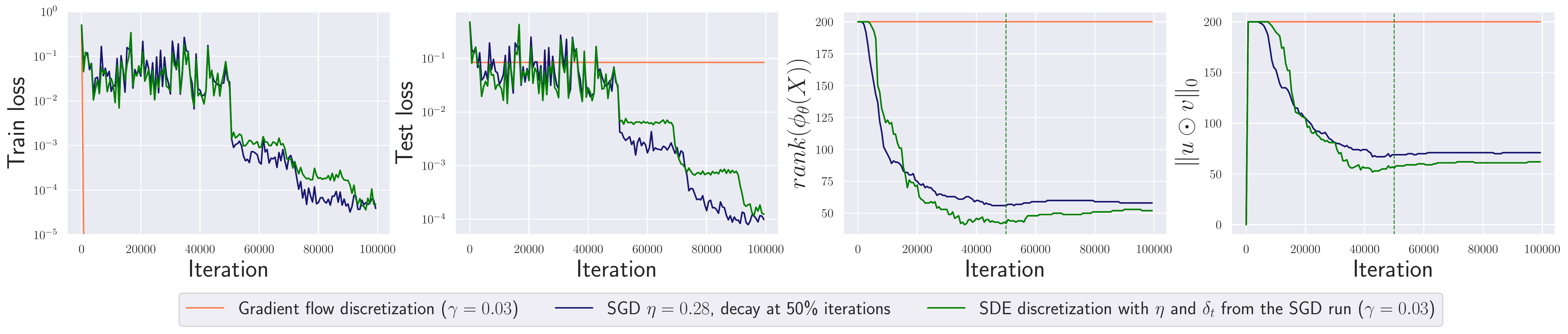}\\
    \vspace{2mm}
    \footnotesize
    \upd{\textbf{Two-layer ReLU networks on 1D regression}}\\
    \includegraphics[width=0.99\textwidth]{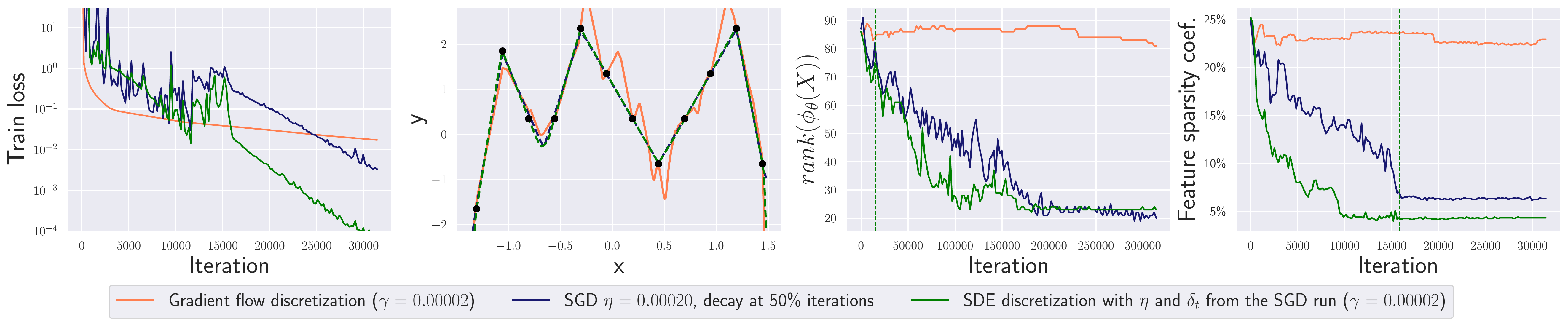}\\
    \vspace{2mm}
    \footnotesize
    \upd{\textbf{Two-layer ReLU networks in a teacher-student setup}}\\
    \includegraphics[width=0.99\textwidth]{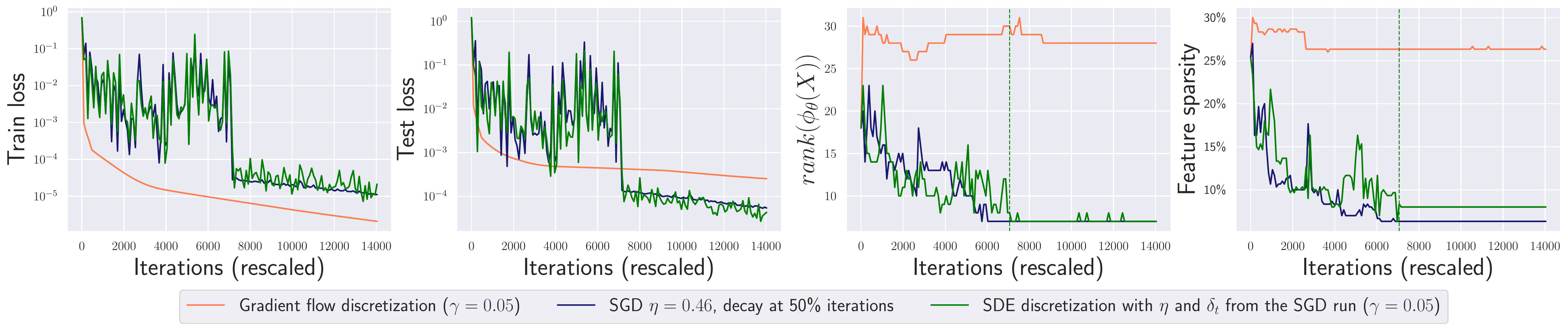}\\
    \vspace{2mm}
    \footnotesize
    \upd{\textbf{Three-layer ReLU networks in a teacher-student setup}}\\
    \includegraphics[width=0.99\textwidth]{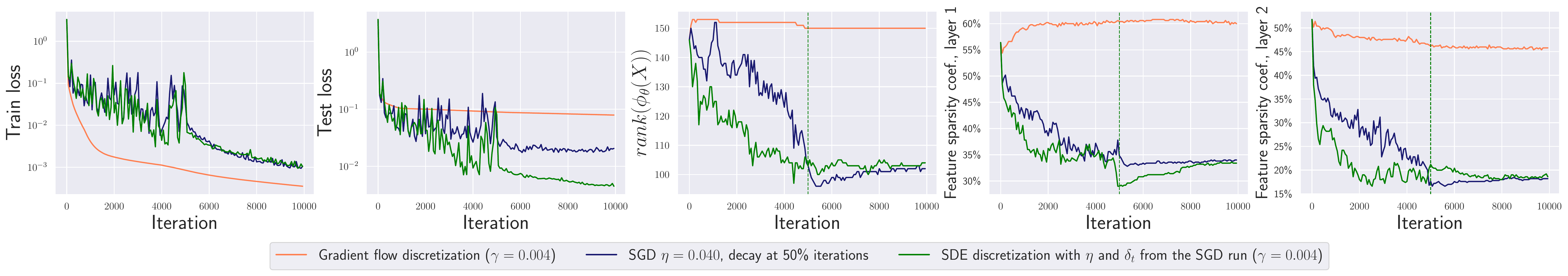}
    \caption{\upd{\textbf{Empirical validation of the SDE modeling.} In all cases, the dynamics of the SDE discretization qualitatively matches the dynamics of the corresponding SGD run. Moreover, gradient flow discretization exhibits no rank minimization or feature sparsity which suggests that the presence of the noise plays a key role in learning sparse features.}}
    \label{fig:sde_validation}
\end{figure}
\myparagraph{Experimental results.}
We present the discretization results in Fig.~\ref{fig:sde_validation} for all models considered in the paper except deep networks for which computing the Jacobian $\phi_{\theta_t}$ on each iteration of the SDE discretization is too costly. In all cases, the dynamics of the SDE discretization qualitatively matches the dynamics of the corresponding SGD run. In particular, we observe similar levels of decrease in the rank of the Jacobian and feature sparsity coefficient. We note that the match between SDE and SGD curves is not expected to be precise due to the inherent randomness of the process.
Finally, we observe that gradient flow discretization exhibits no rank minimization or feature sparsity which suggests that the presence of the noise (either from the original SGD or its SDE discretization) plays a key role in learning sparse features.

\section{Additional Experimental Results}
\label{app:sec:additional_experiments}
This section of the appendix presents additional experiments complementing the ones presented in the main text.

\begin{wrapfigure}{r}{0.51\textwidth}
    \vspace{-2mm}
    \centering
    \includegraphics[width=0.25\textwidth]{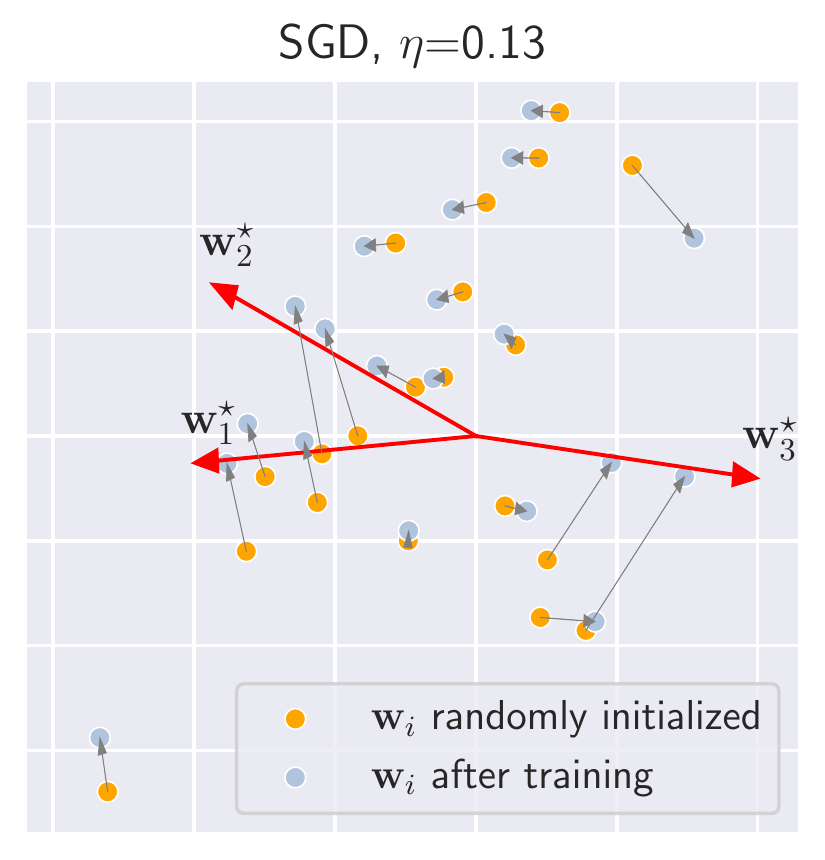}
    \includegraphics[width=0.25\textwidth]{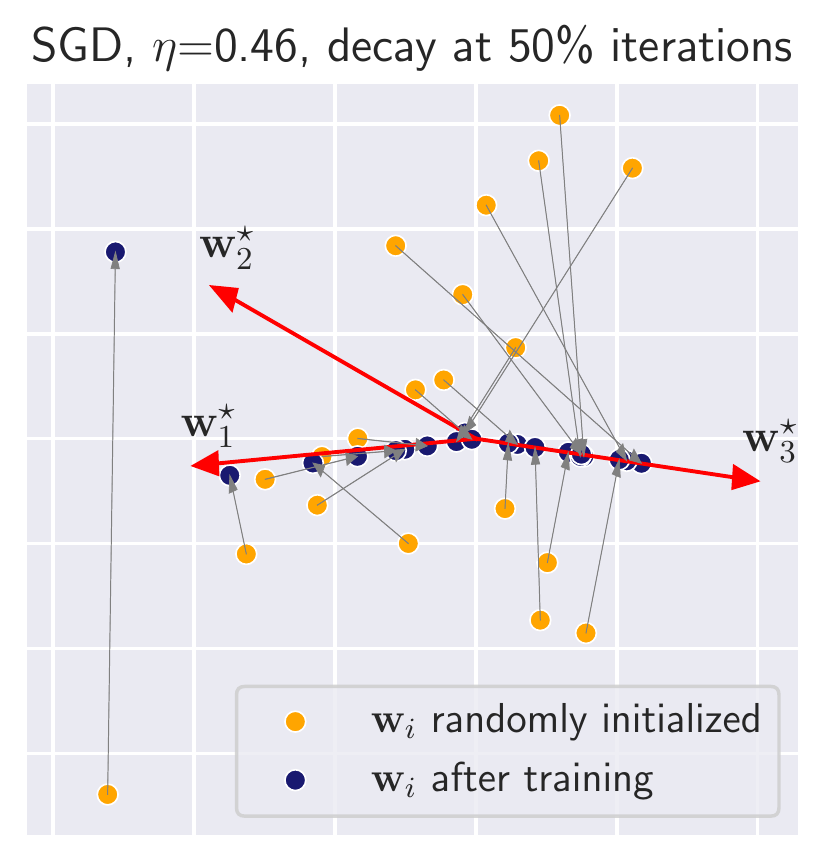}
    \vspace{-4mm}
    \caption{
    Only for a large step size, the neurons $w_i$ cluster along the teacher neurons $w_i^\star$ leading to a model that uses a sparse set of features.
    }
    \label{fig:2d_neuron_movement}
    \vspace{-3mm}
\end{wrapfigure}
\myparagraph{Illustration of neuron dynamics.} 
We illustrate the change of neurons during training of two-layer ReLU networks (without biases) in the teacher-student setup of \citet{chizat2019lazy} (see Fig.~1 therein) using a large initialization scale for which small step sizes of GD or SGD lead to lazy training. %
We illustrate \textbf{(O1)}--\textbf{(O3)} in Fig.~\ref{fig:fc_net_teacher_student_d2} and show neuron dynamics in Fig.~\ref{fig:2d_neuron_movement}. 
We see that for SGD with a small step size, the neurons $w_i$ stay close to their initialization, while for a large step size, there is a clear clustering of directions $w_i$ along the teacher directions $w_i^\star$.
The overall picture is very similar to Fig.~1 of \citet{chizat2019lazy} where the same feature learning effect is achieved via gradient flow from a small initialization which is, however, much more computationally expensive due to the saddle point at zero. Finally, we note that the clustering phenomenon of neurons $w_i$ motivates the removal of highly correlated activations in the feature sparsity coefficient: although the corresponding activations are often non-zero, many of them in fact implement \textit{the same feature} and thus should be counted only once.

\myparagraph{Further results.}
We give a short overview of additional figures referred to in the main text. More details can be found in the captions.
\begin{itemize}
    \item Figure~\ref{fig:dln_motivating_picture_gd} shows that even if loss stabilization occurs in diagonal linear networks, the implicit bias towards sparsity is largely weaker than that of SGD and generalization is poor.
    \item Figures~\ref{fig:fc_net_1d_regression_overreg} and~\ref{fig:fc_net_1d_regression_classifier_vis} demonstrate that the implicit bias resulting from high-loss stabilization makes the neural nets learn \textit{first} a simple model \textit{then eventually} fits the data.
    \item Figure~\ref{fig:fc_net_teacher_student_d2} presents the sparsifying effect corresponding to the neurons' movements exhibited in Figure~\ref{fig:2d_neuron_movement}.
    \item Figures~\ref{fig:c10_deep_nets} and \ref{fig:c100_deep_nets} 
    exhibit the feature sparsity in ResNet-18 / ResNet-34 architectures on CIFAR-10 and CIFAR-100 in the basic and state-of-the-art settings.
    \item Figure~\ref{fig:feature_learning_conv_filters_cifar10} showcases the features learning induced by large step sizes for different layers of ResNets-18 when trained on CIFAR-10.
\end{itemize}

\vspace{1cm}

\begin{figure}[ht]
    \centering
    \includegraphics[width=0.99\textwidth]{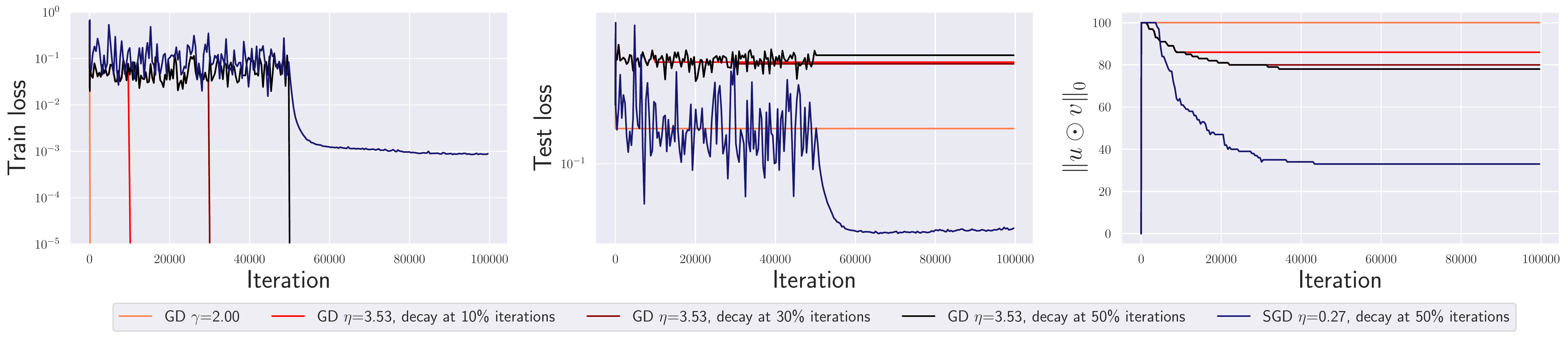}
    \caption{\textbf{Diagonal linear networks}. Loss stabilization also occurs for \textit{full-batch gradient descent} but does not lead to a similar level of sparsity as SGD and also does not improve the test loss.}
    \label{fig:dln_motivating_picture_gd}
\end{figure}

\begin{figure}[ht]
    \centering
    \includegraphics[width=0.99\textwidth]{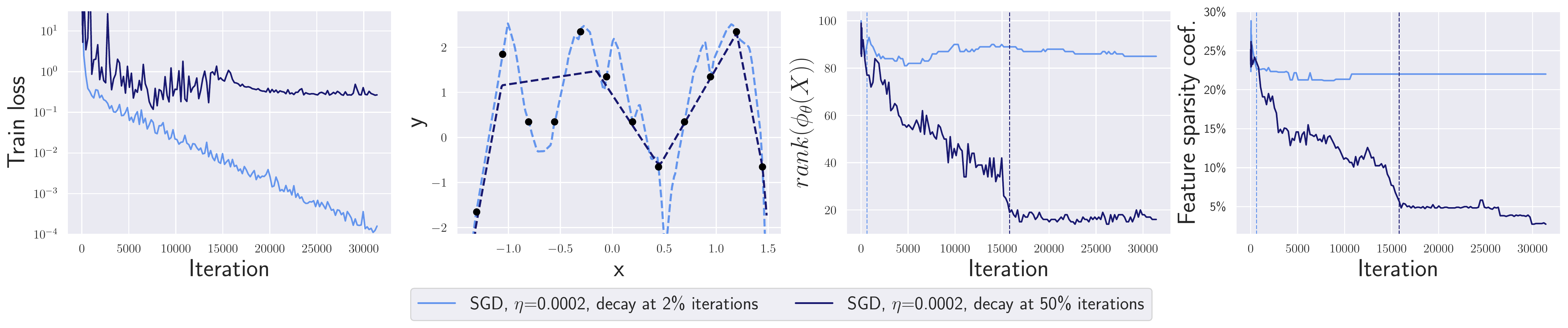}
    \caption{\textbf{Two-layer ReLU networks for 1D regression}. Unlike for Fig.~\ref{fig:fc_net_1d_regression_main_plots}, here we use a larger warmup coefficient ($500\times$ vs. $400\times$) which leads to overregularization such that the 50\%-schedule run fails to fit all the training points and gets stuck at a too high value of the training loss ($\approx 10^{-0.5}$).}
    \label{fig:fc_net_1d_regression_overreg}
\end{figure}

\begin{figure}[ht]
    \centering
    \includegraphics[width=0.99\textwidth]{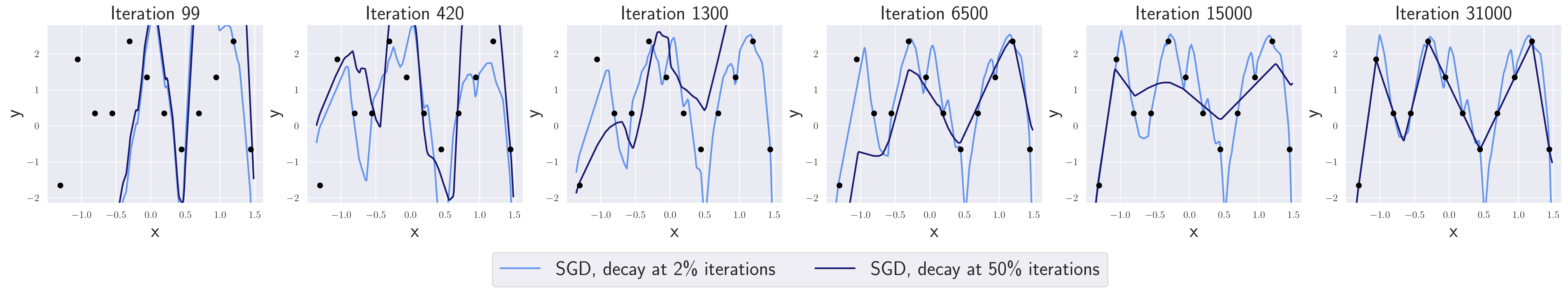}
    \caption{\textbf{Two-layer ReLU networks for 1D regression}. Illustration of the resulting models from Fig.~\ref{fig:fc_net_1d_regression_main_plots} over training iterations. We can see that first the model is simplified and only then it fits the training data.}
    \label{fig:fc_net_1d_regression_classifier_vis}
\end{figure}

\begin{figure}[ht]
    \centering
    \includegraphics[width=0.99\textwidth]{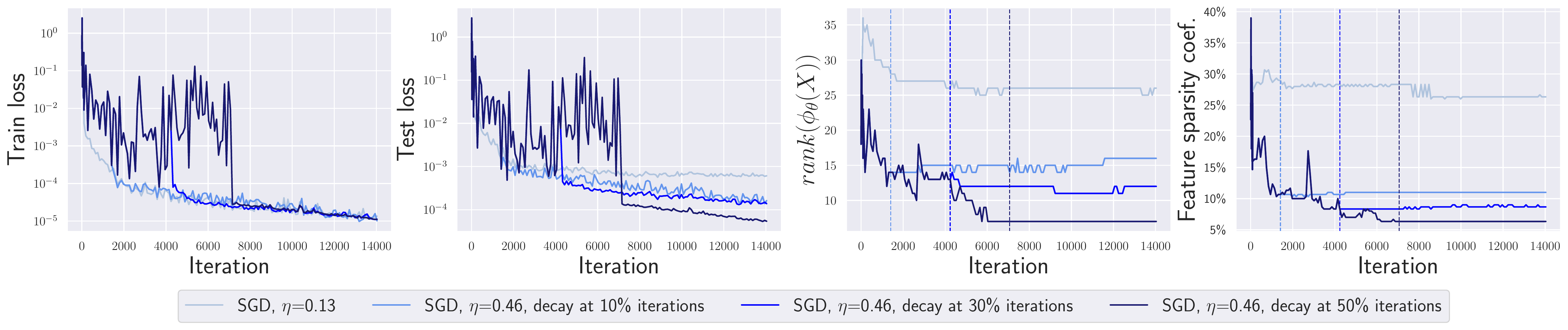}
    \caption{\textbf{Two-layer ReLU networks in a teacher-student setup}. Loss stabilization for two-layer ReLU nets in the teacher-student setup with input dimension $d=2$. We observe loss stabilization, better test loss for longer schedules and sparser features due to simplification of $\phi(X)$.}
    \label{fig:fc_net_teacher_student_d2}
\end{figure}

\begin{figure*}[t]
    \centering
    \footnotesize
    \textbf{ResNet-18 on CIFAR-10, basic setting (no momentum and augmentations)}
    \includegraphics[width=1.0\textwidth]{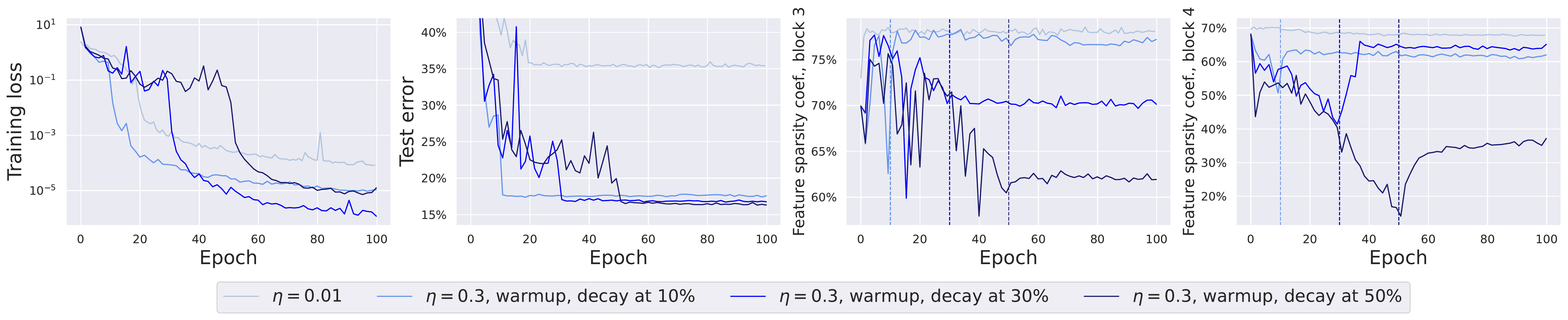}\\
    \textbf{ResNet-18 on CIFAR-10, state-of-the-art setting (with momentum and augmentations)}
    \includegraphics[width=1.0\textwidth]{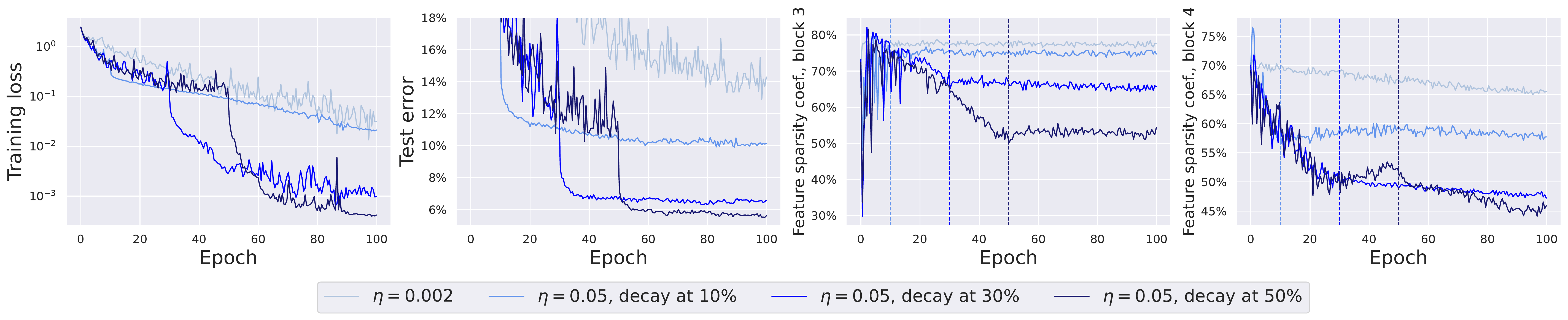}\\
    \vspace{-2mm}
    \caption{\textbf{ResNet-18 trained on CIFAR-10.} Both in the basic and state-of-the-art settings, the training loss stabilizes, the test loss noticeably depends on the length of the schedule, and the feature sparsity \upd{coefficient} is minimized over iterations.}
    \vspace{-2mm}
    \label{fig:c10_deep_nets}
\end{figure*}

\begin{figure}[t]
    \centering
    \textbf{ResNet-34 on CIFAR-100, basic setting (no momentum and augmentations)}
    \includegraphics[width=1.0\textwidth]{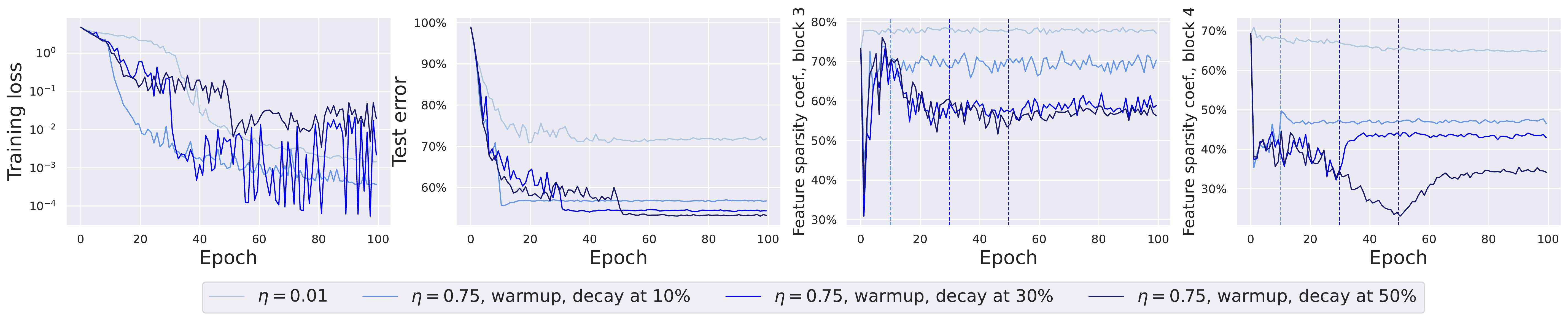}\\
    \vspace{1mm}
    \textbf{ResNet-34 on CIFAR-100, state-of-the-art setting (with momentum and augmentations)}
    \includegraphics[width=1.0\textwidth]{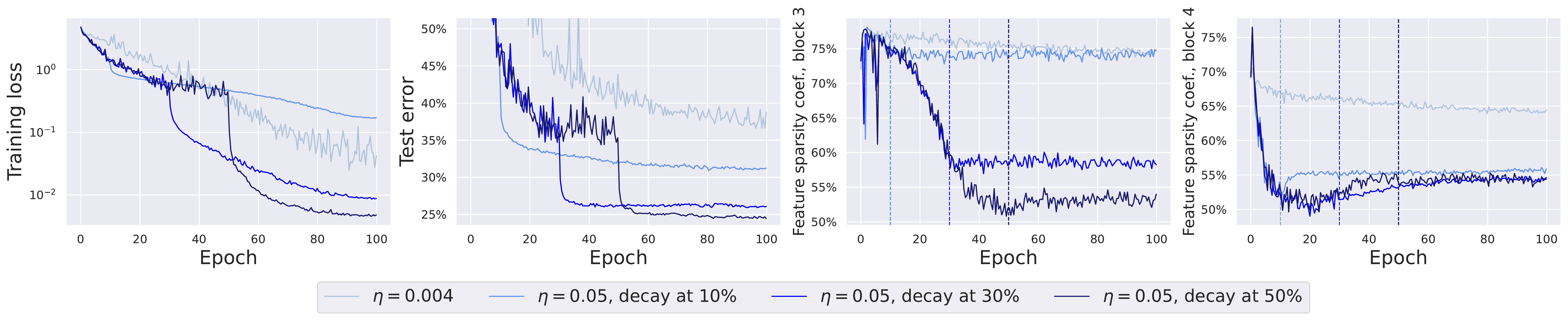}
    \caption{\textbf{ResNet-34 trained on CIFAR-100.} Both in the basic and state-of-the-art settings, the training loss stabilizes, the test loss significantly depends on the length of the schedule, and feature sparsity is minimized over iterations. However, differently from the plots on CIFAR-10, here without explicit regularization we observe oscillating behavior after the step size decay (although at a very low level between $10^{-4}$ and $10^{-2}$).}
    \label{fig:c100_deep_nets}
\end{figure}

\begin{figure}[t]
    \begin{subfigure}[t]{.3\textwidth}
        \centering
        \caption{\textbf{Early layer}}
        \footnotesize Initial \quad \ \ Small $\eta$ \quad \ \ Large $\eta$
        \includegraphics[width=0.32\textwidth]{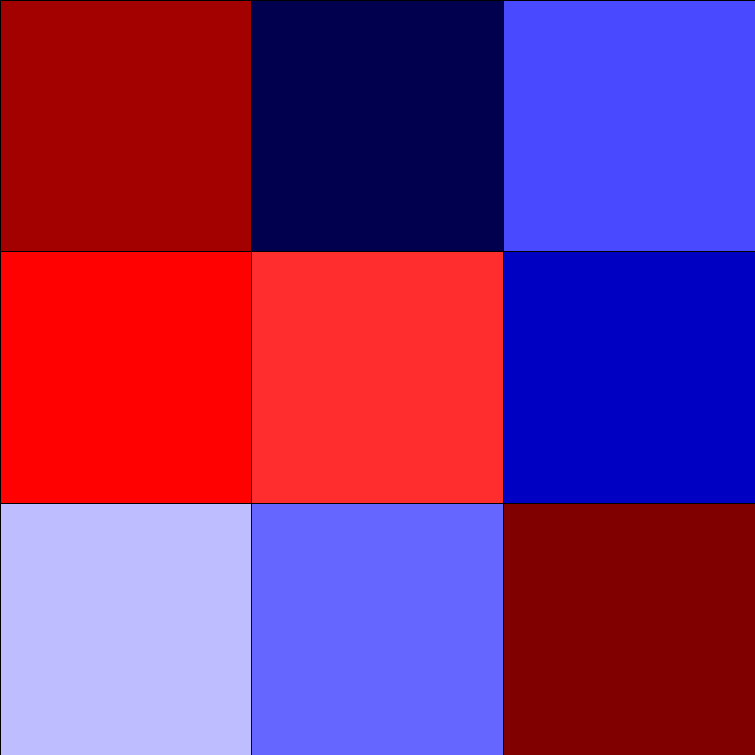}
        \includegraphics[width=0.32\textwidth]{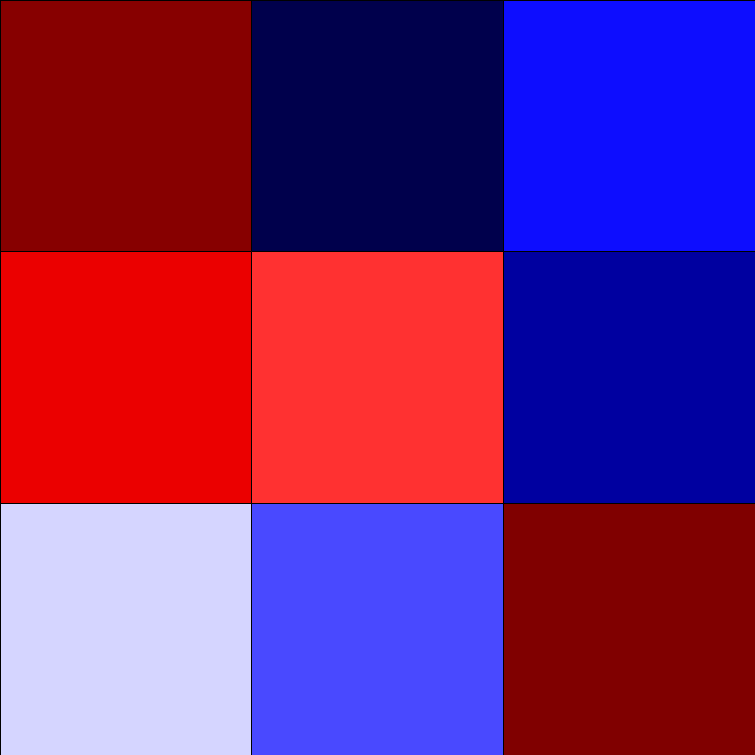}
        \includegraphics[width=0.32\textwidth]{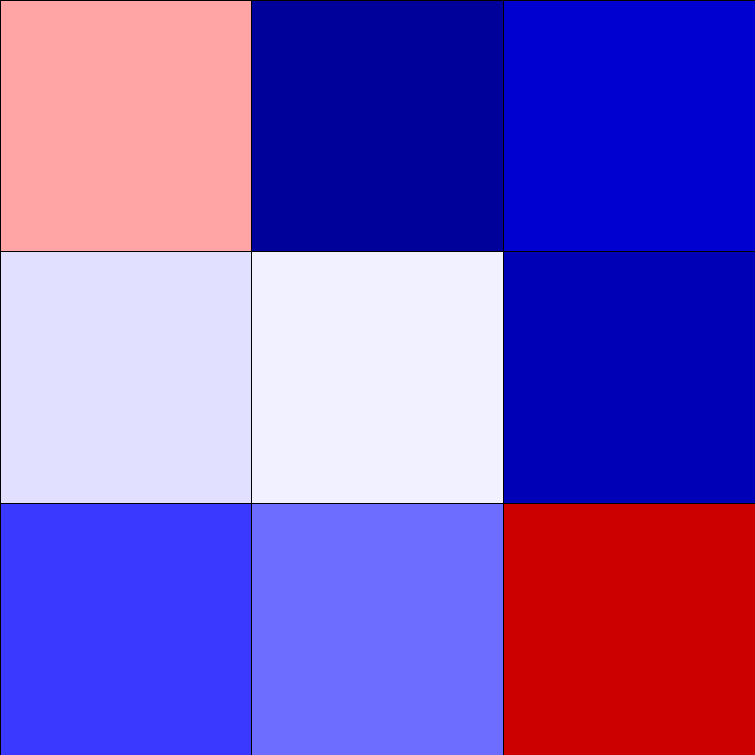}
        \\
        \includegraphics[width=0.32\textwidth]{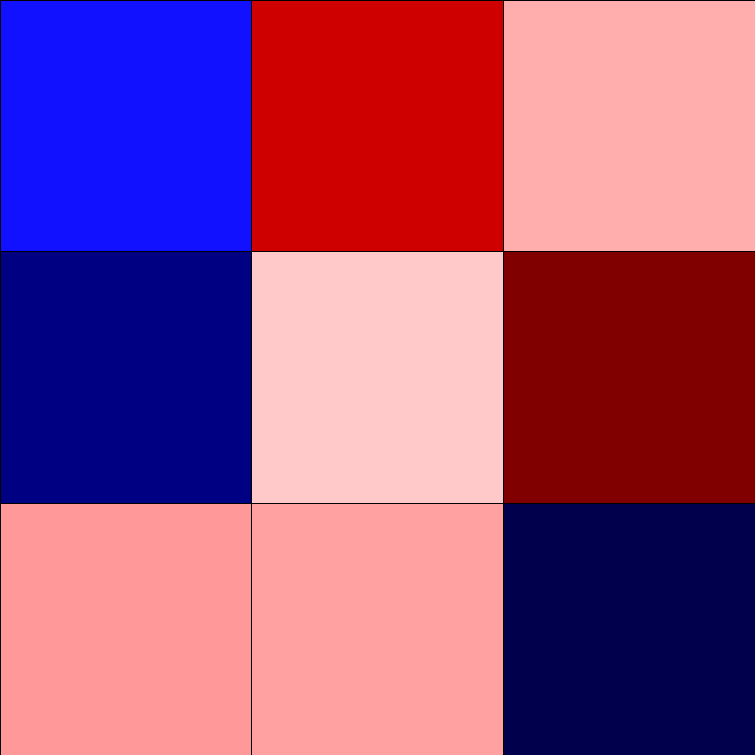}
        \includegraphics[width=0.32\textwidth]{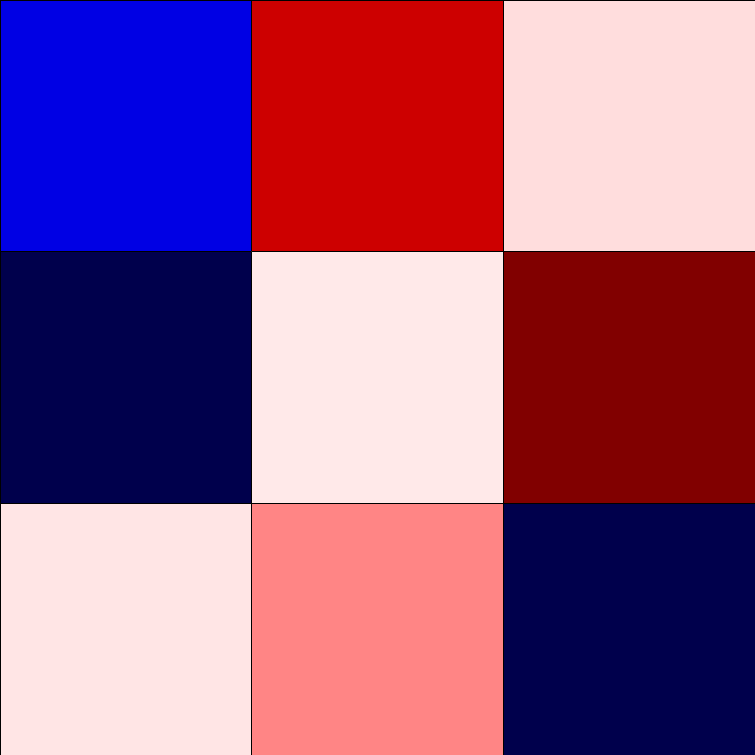}
        \includegraphics[width=0.32\textwidth]{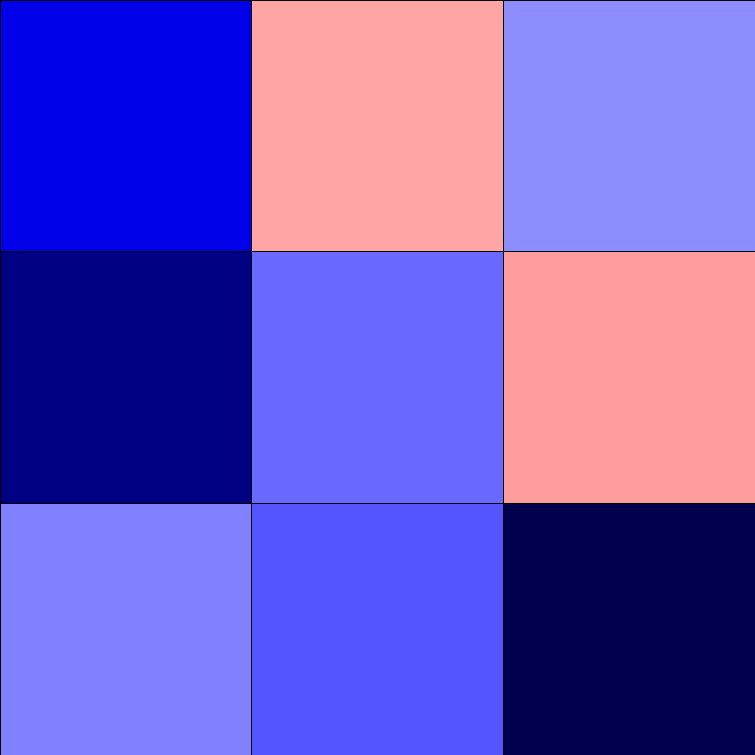}
        \\
        \includegraphics[width=0.32\textwidth]{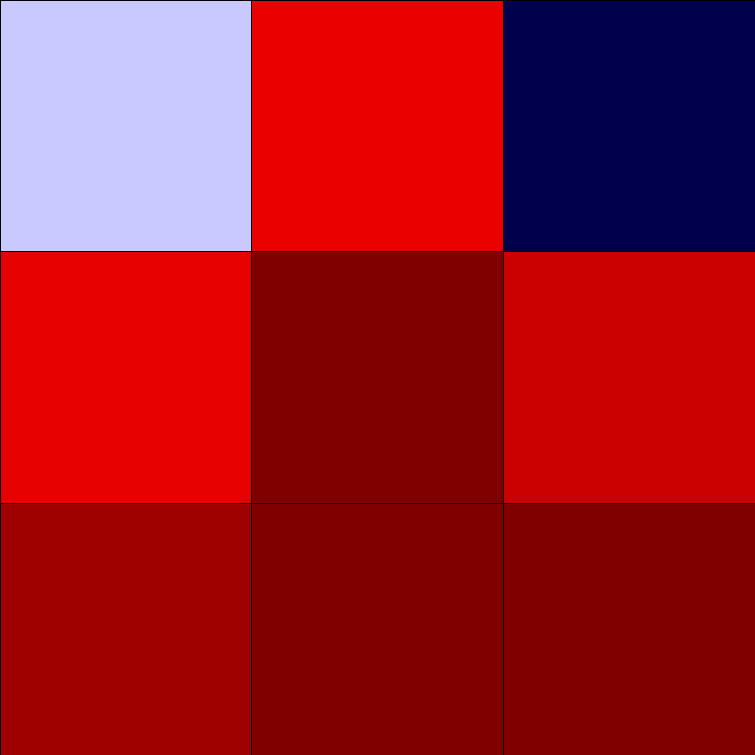}
        \includegraphics[width=0.32\textwidth]{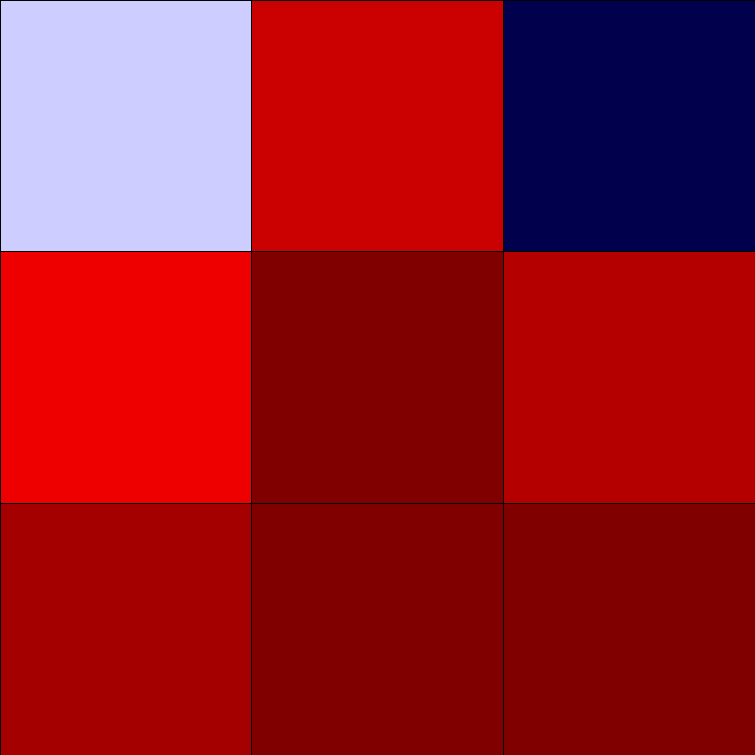}
        \includegraphics[width=0.32\textwidth]{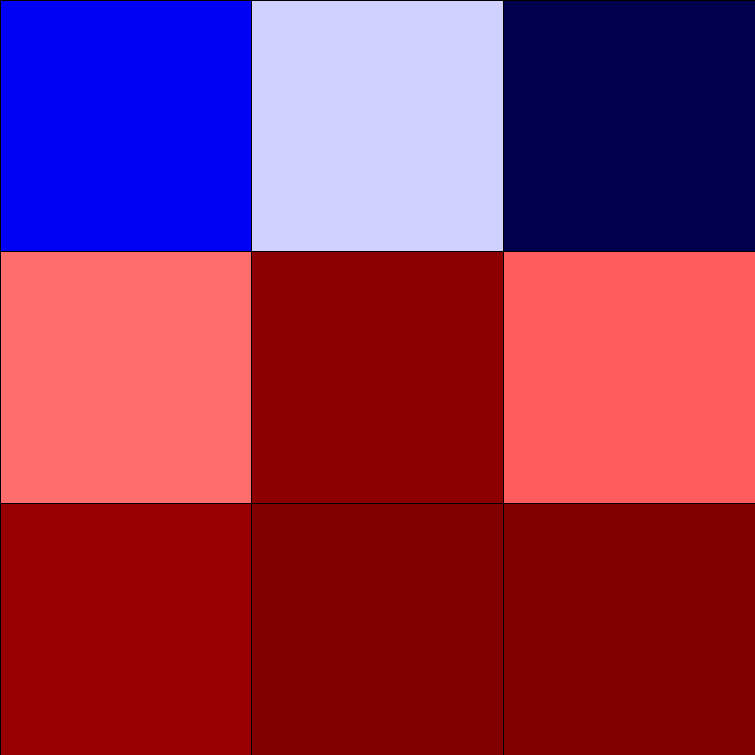}
        \\
        \includegraphics[width=0.32\textwidth]{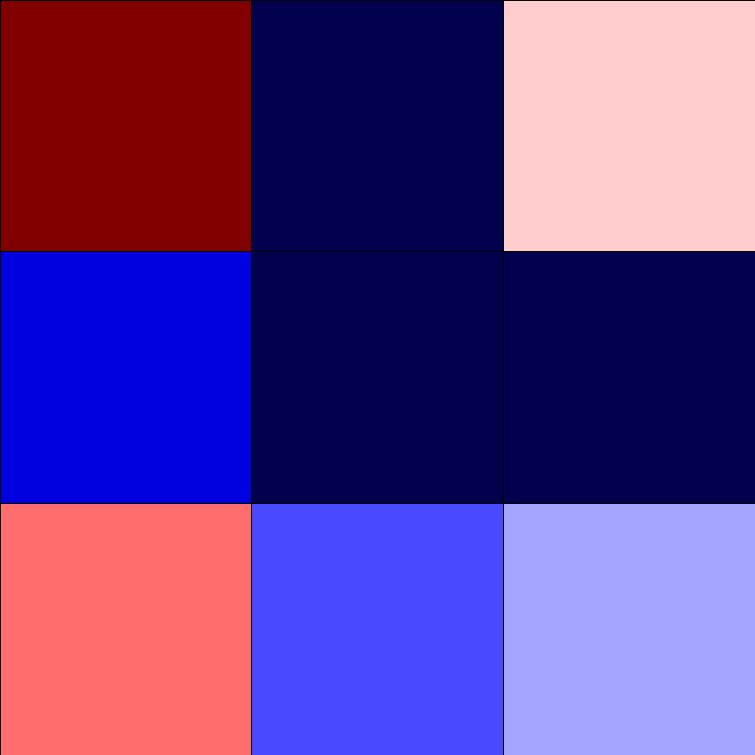}
        \includegraphics[width=0.32\textwidth]{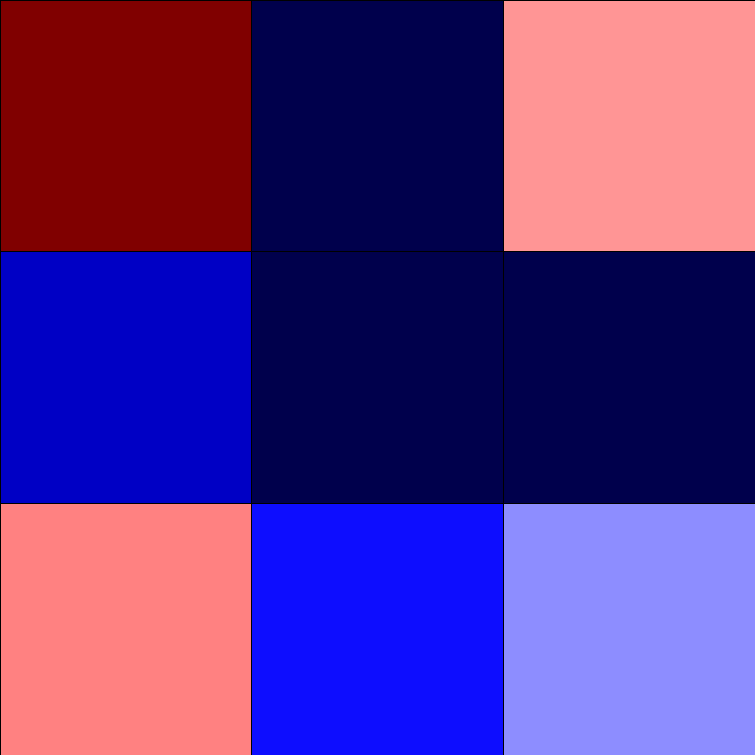}
        \includegraphics[width=0.32\textwidth]{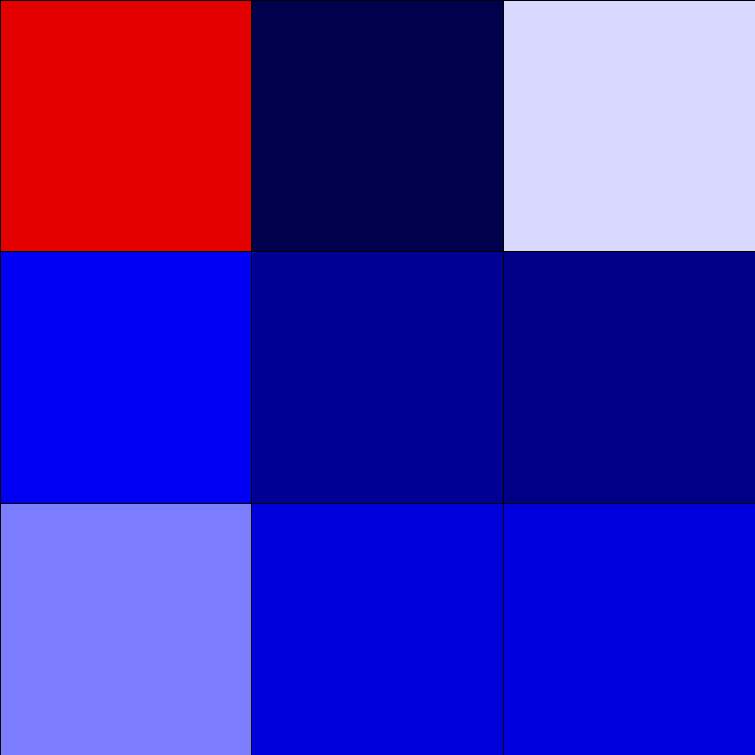}
    \end{subfigure}
    \quad
    \begin{subfigure}[t]{.3\textwidth}
        \centering
        \caption{\textbf{Middle layer}}
        \footnotesize Initial \quad \ \ Small $\eta$ \quad \ \ Large $\eta$
        \includegraphics[width=0.32\textwidth]{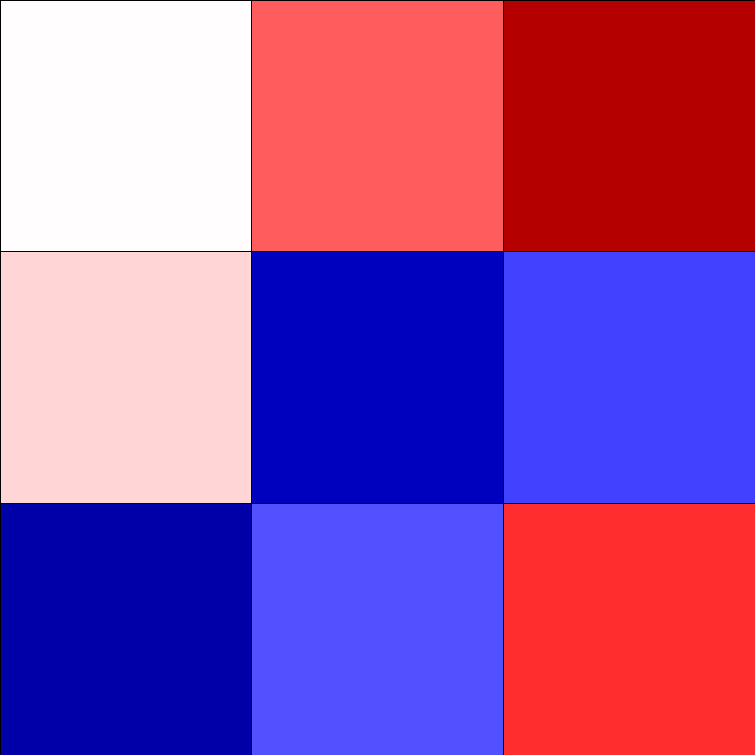}
        \includegraphics[width=0.32\textwidth]{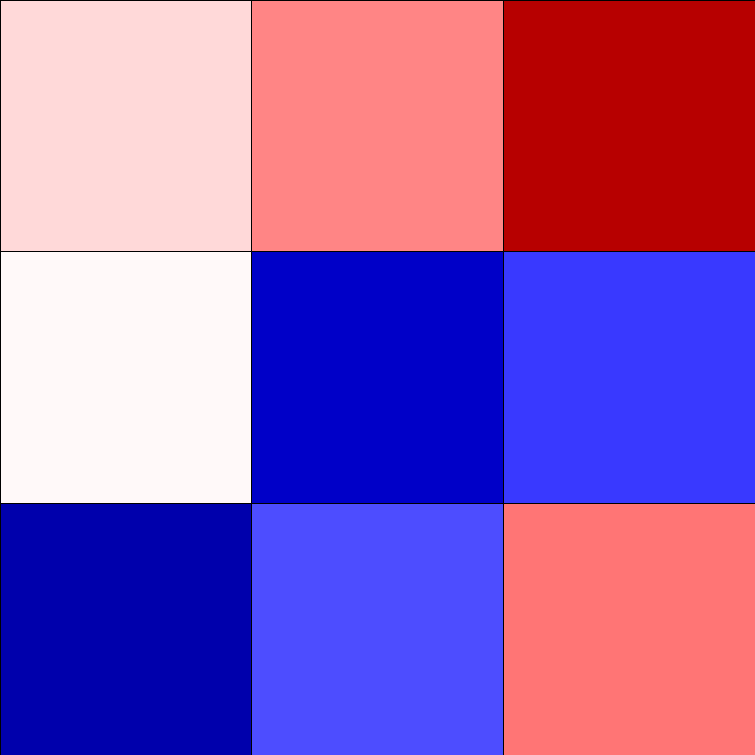}
        \includegraphics[width=0.32\textwidth]{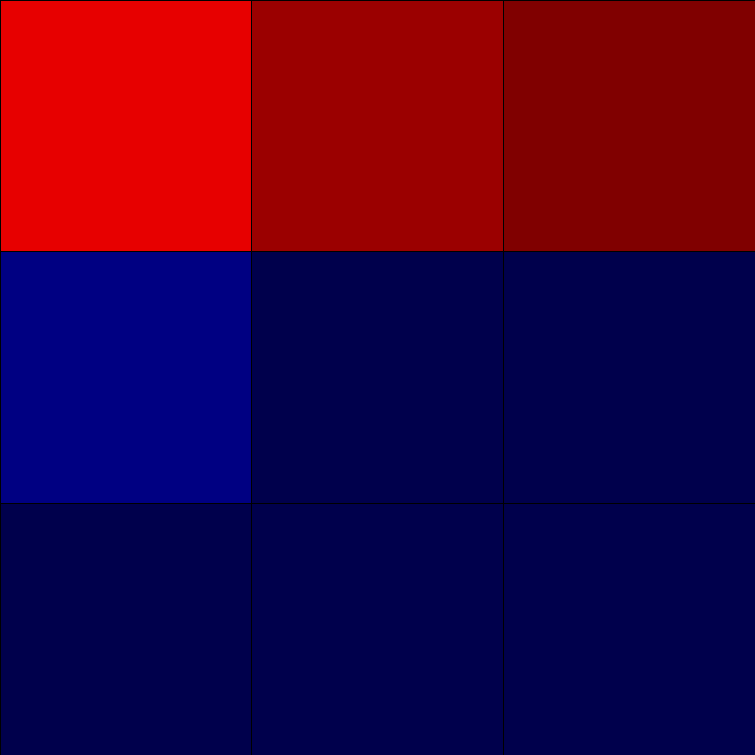}
        \\
        \includegraphics[width=0.32\textwidth]{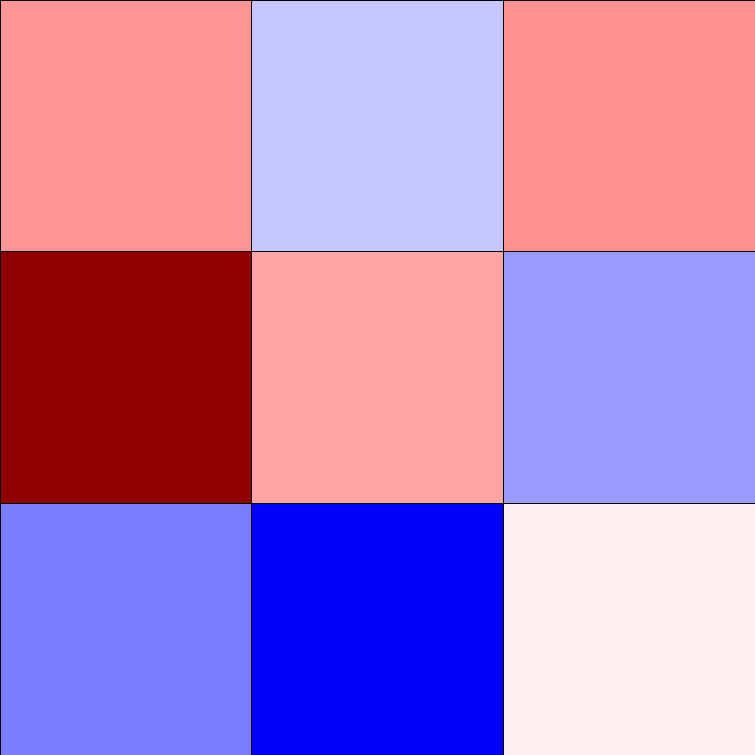}
        \includegraphics[width=0.32\textwidth]{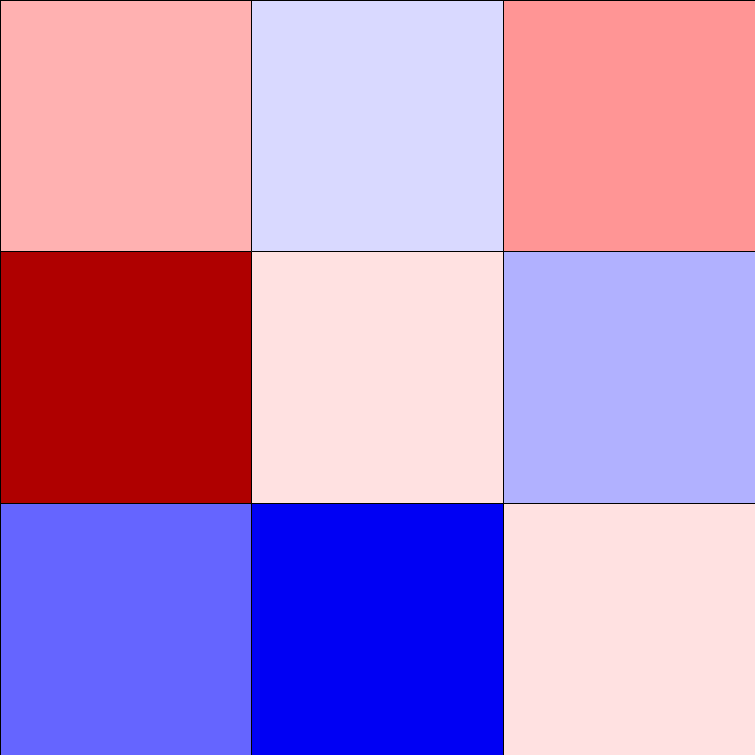}
        \includegraphics[width=0.32\textwidth]{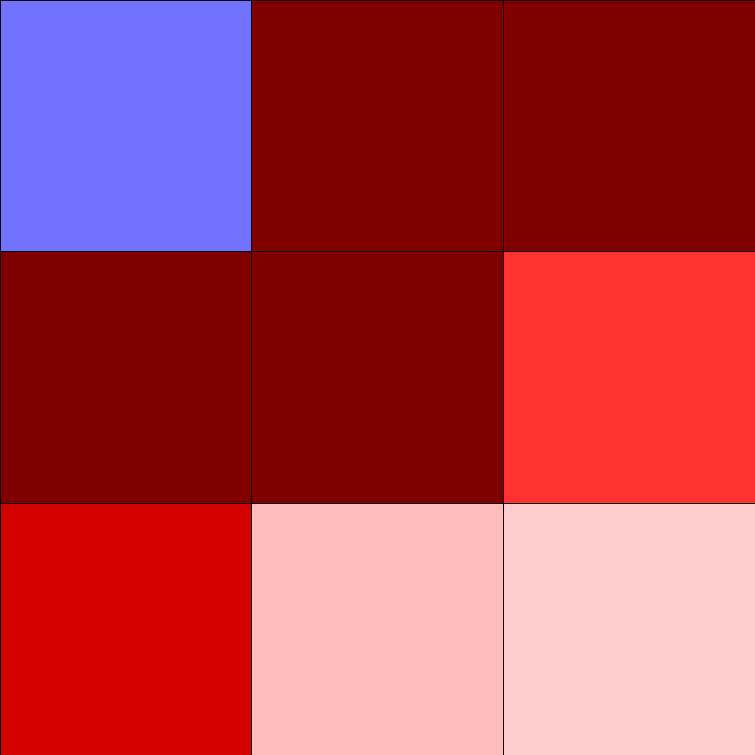}
        \\
        \includegraphics[width=0.32\textwidth]{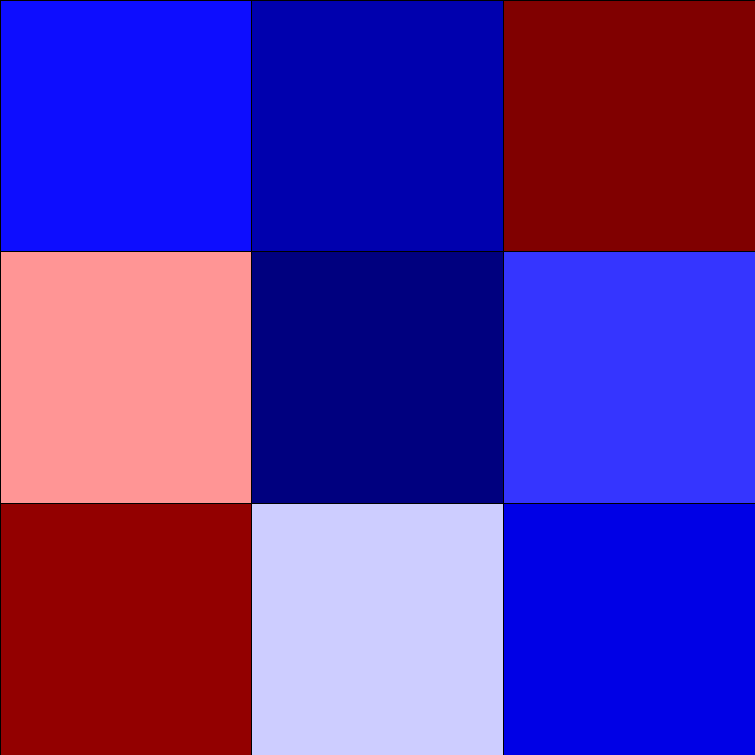}
        \includegraphics[width=0.32\textwidth]{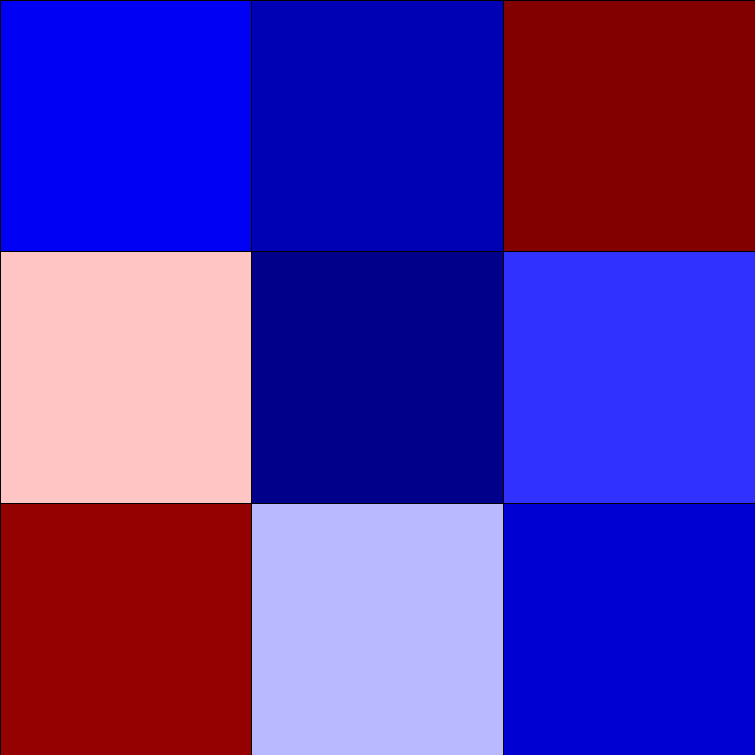}
        \includegraphics[width=0.32\textwidth]{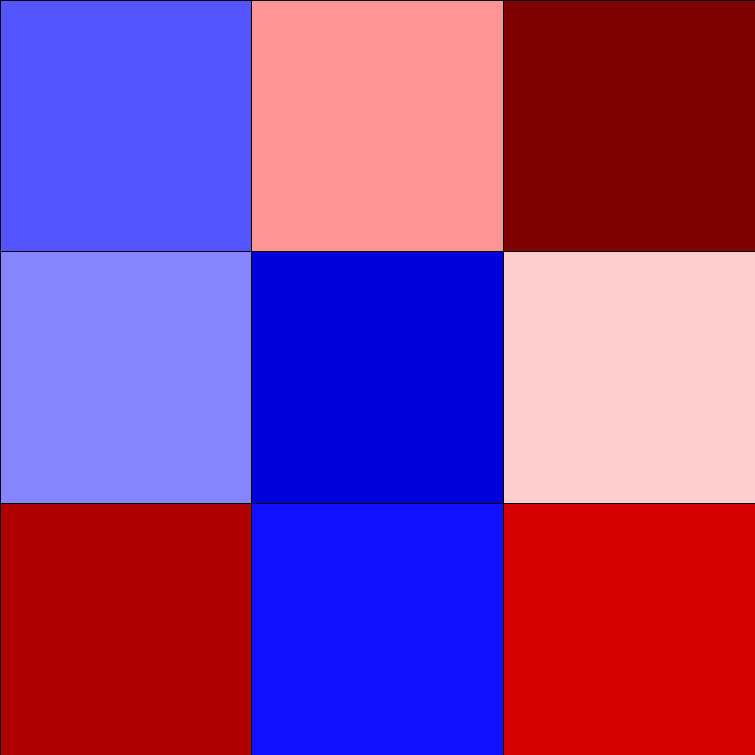}
        \\
        \includegraphics[width=0.32\textwidth]{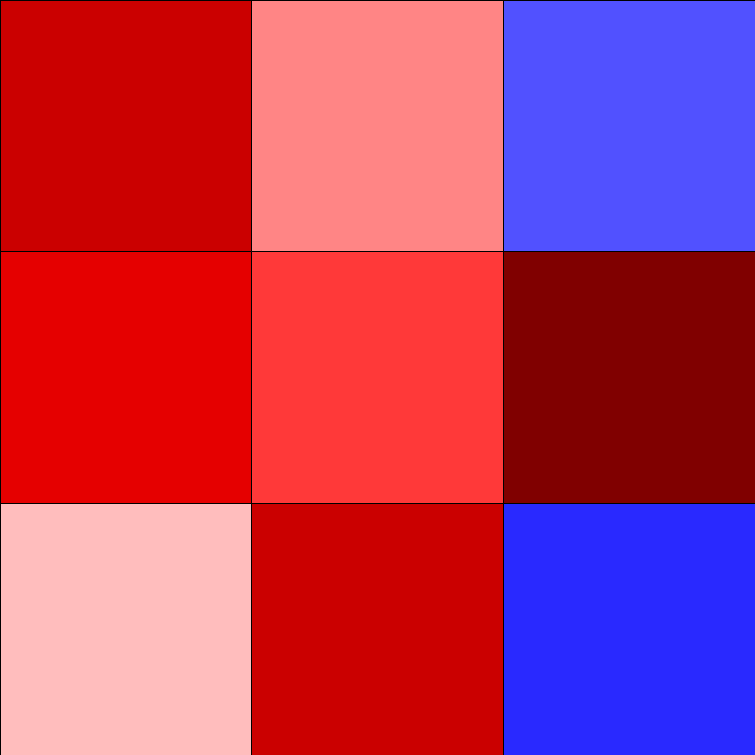}
        \includegraphics[width=0.32\textwidth]{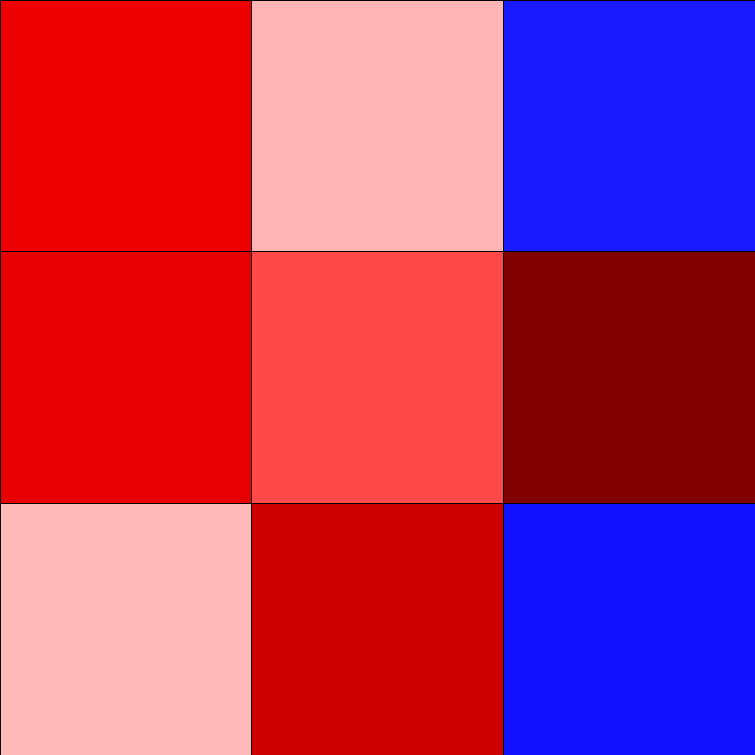}
        \includegraphics[width=0.32\textwidth]{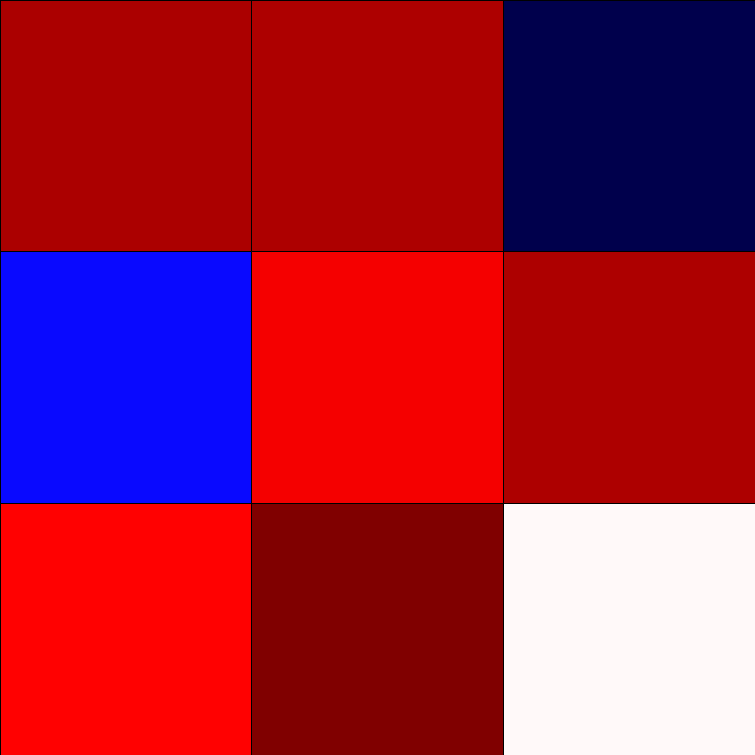}
    \end{subfigure}
    \quad
    \begin{subfigure}[t]{.3\textwidth}
        \centering
        \caption{\textbf{Last layer}}
        \footnotesize Initial \quad \ \ Small $\eta$ \quad \ \ Large $\eta$
        \includegraphics[width=0.32\textwidth]{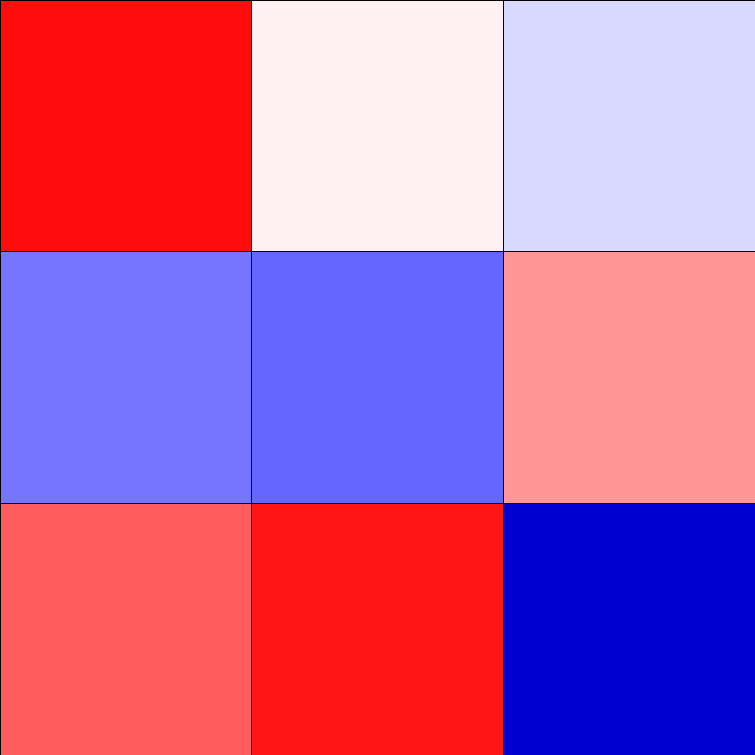}
        \includegraphics[width=0.32\textwidth]{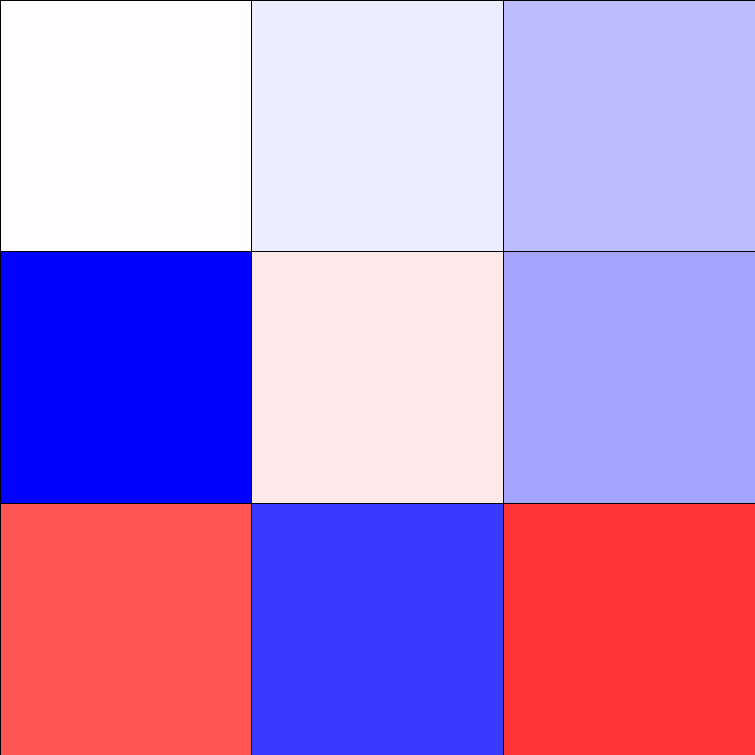}
        \includegraphics[width=0.32\textwidth]{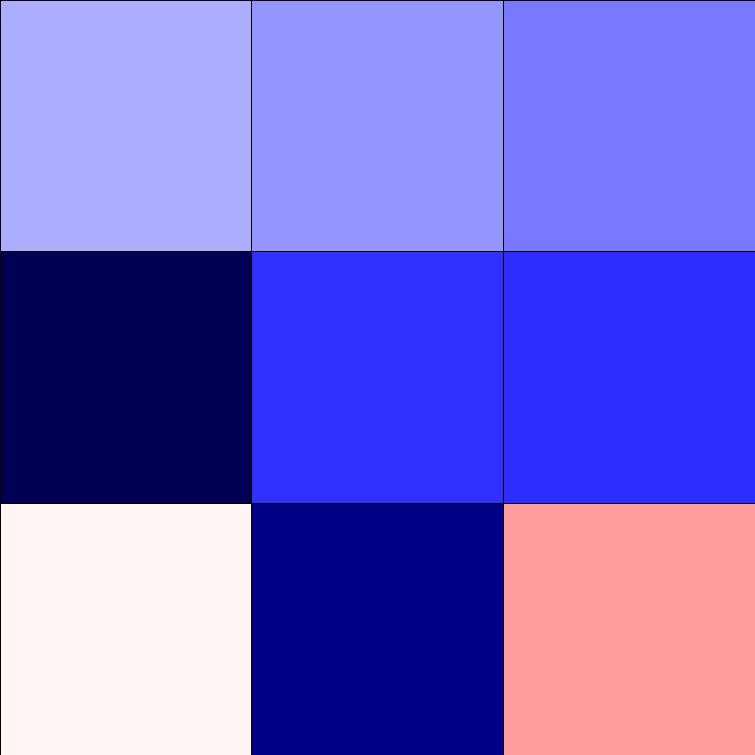}
        \\
        \includegraphics[width=0.32\textwidth]{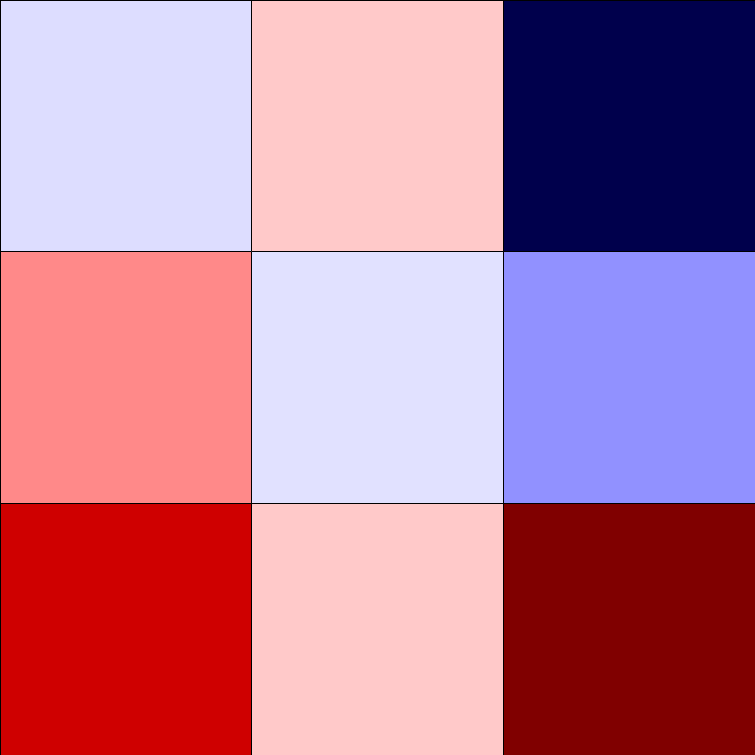}
        \includegraphics[width=0.32\textwidth]{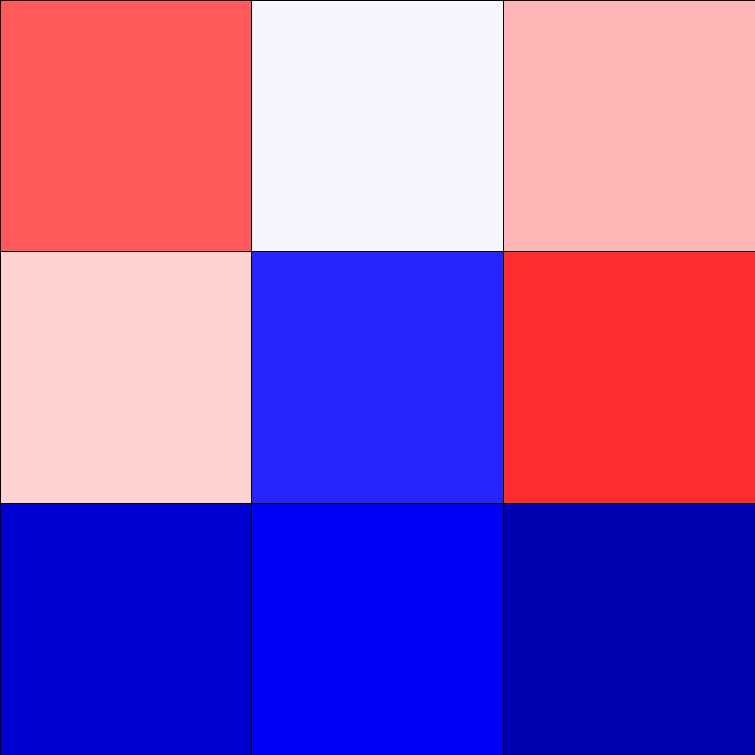}
        \includegraphics[width=0.32\textwidth]{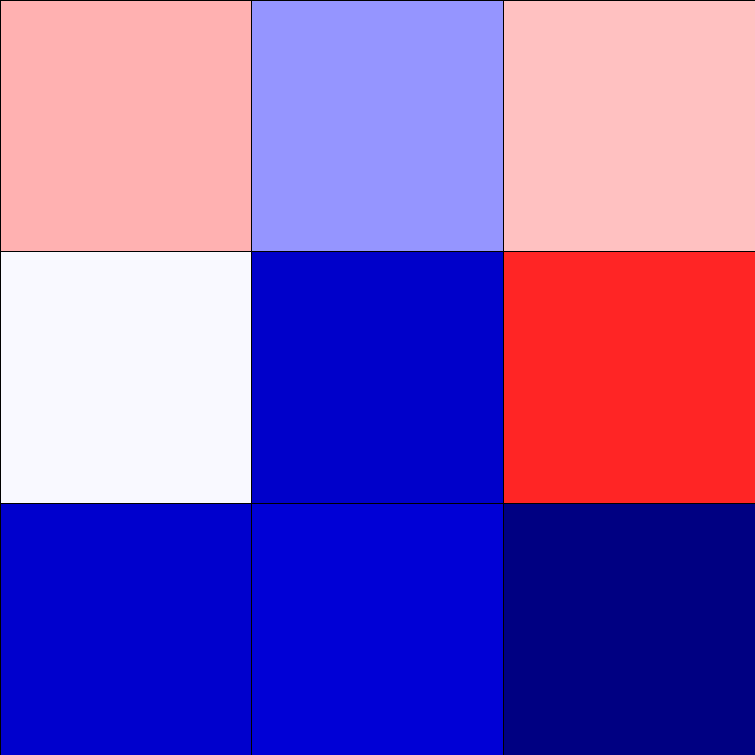}
        \\
        \includegraphics[width=0.32\textwidth]{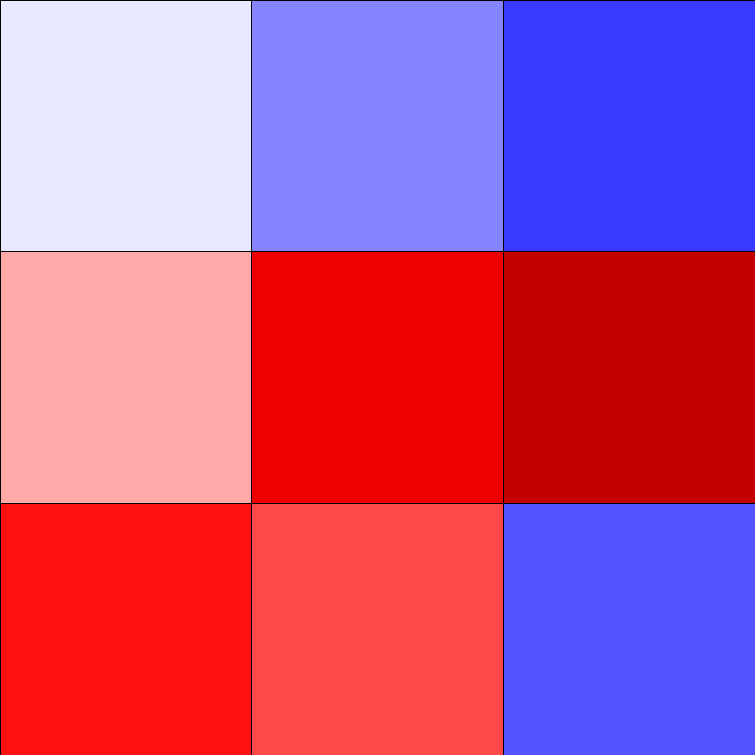}
        \includegraphics[width=0.32\textwidth]{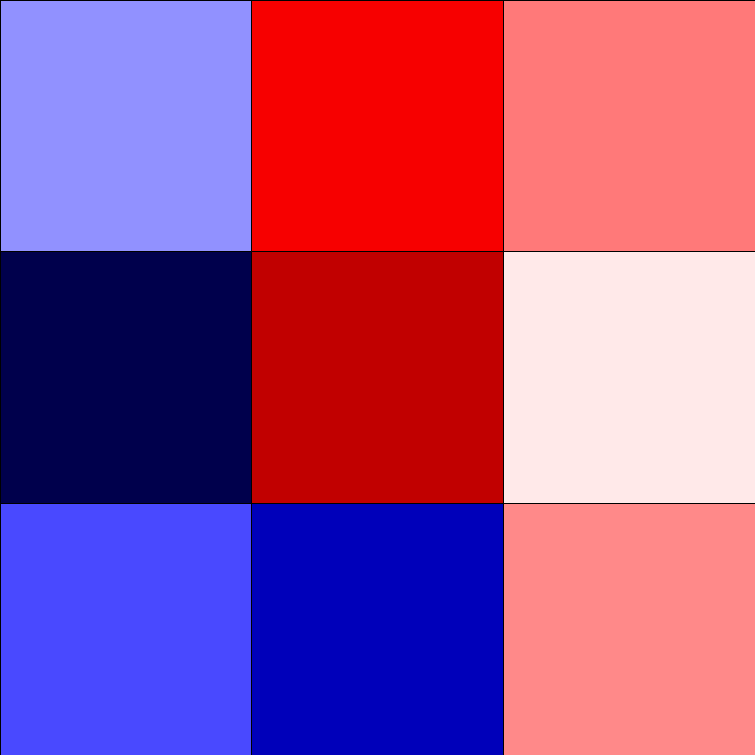}
        \includegraphics[width=0.32\textwidth]{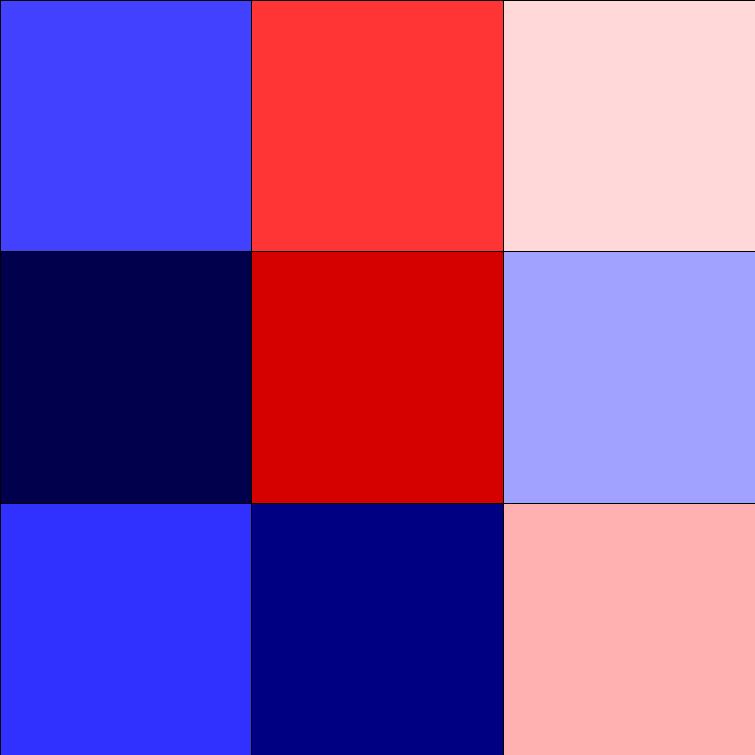}
        \\
        \includegraphics[width=0.32\textwidth]{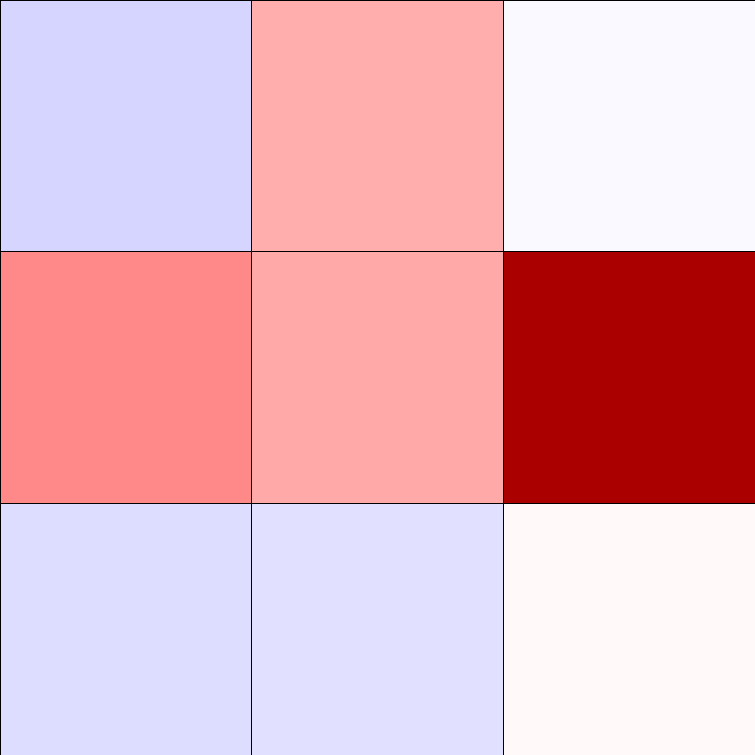}
        \includegraphics[width=0.32\textwidth]{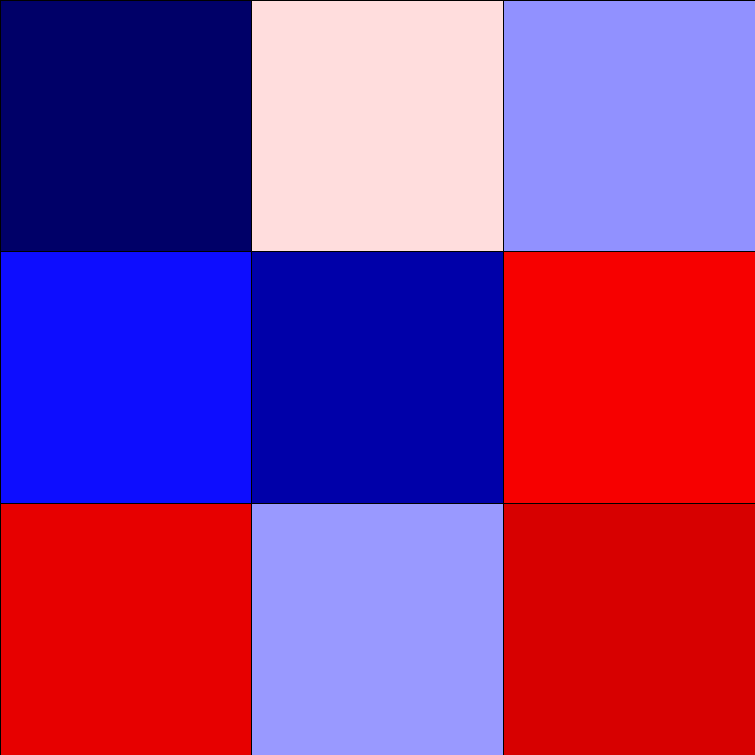}
        \includegraphics[width=0.32\textwidth]{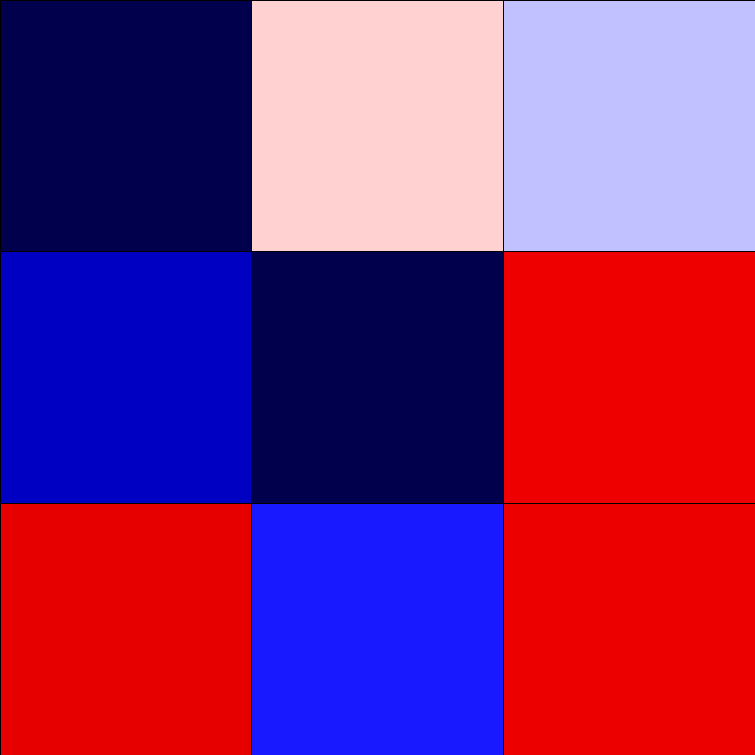}
    \end{subfigure}
    \caption{Visualization on four sets of convolutional filters taken from different layers of ResNets-18 trained on CIFAR-10 with small vs. large step size $\eta$ (the $50\%$ decay schedule). For small step sizes, the early and middle layers stay very close to randomly initialized ones which indicates the absence of feature learning.}
    \label{fig:feature_learning_conv_filters_cifar10}
\end{figure}

\end{document}